\newif\ifabstract
\newif\iffull
\newcommand{\myparskip}{3pt}
\newcommand{\sgn}{\text{sgn}}
\newcommand{\requ}{\text{ReQU}}
\newcommand{\loss}{\tilde{L}_{n}(\bm{\bm{\theta}}^{*};p)}
\newcommand{\lossc}{\tilde{L}_{n}(\bm{\bm{\theta}}^{*})}
\newcommand{\e}{\varepsilon}
\newtheorem{theorem}{Theorem}
\newtheorem{lemma}{Lemma}
\newtheorem{corollary}{Corollary}
\newtheorem{claim}{Claim}
\newtheorem{proposition}{Proposition}
\newtheorem{assumption}{Assumption}
\newtheorem{definition}{Definition}
\newenvironment{proof}{\par \smallskip{\bf Proof:}}{\hfill\stopproof}
\def\stopproof{\square}
\def\square{\vbox{\hrule height.2pt\hbox{\vrule width.2pt height5pt \kern5pt
\vrule width.2pt} \hrule height.2pt}}
\newcommand{\footremember}[2]{%
   \footnote{#2}
    \newcounter{#1}
    \setcounter{#1}{\value{footnote}}%
}
\newcommand{\footrecall}[1]{%
    \footnotemark[\value{#1}]%
}
\begin{document}

\title{Revisiting Landscape Analysis in Deep Neural Networks: Eliminating Decreasing Paths
to Infinity}
\author{Anonymous Authors}
\author{Shiyu Liang\footremember{uiuc}{University of Illinois at Urbana-Champaign}\\ sliang26@illinois.edu
          \and Ruoyu Sun\footrecall{uiuc}\\ ruoyus@illinois.edu
            \and R. Srikant\footrecall{uiuc}\\rsrikant@illinois.edu
           }

\date{}

\maketitle

\begin{abstract}
 Traditional landscape analysis of deep neural networks aims to show that no sub-optimal local minima exist in some appropriate sense. From this, one may be tempted to conclude that descent algorithms which escape saddle points will reach a good local minimum. However, basic optimization theory tell us that it is also possible for a descent algorithm to diverge to infinity if there are paths leading to infinity, along which the loss function decreases.
 It is not clear whether for non-linear neural networks there exists
  \textit{one setting} that ``no bad local-min'' and  
  ``no decreasing paths to infinity'' can be simultaneously achieved.
 In this paper, we give the first positive answer to this question.
More specifically, for a large class of over-parameterized deep neural networks with appropriate regularizers, the loss function has no bad local minima and no decreasing paths to infinity. 
The key mathematical trick is to show that the set of regularizers which may be undesirable can be viewed as the image of a Lipschitz continuous mapping from a lower-dimensional Euclidean space to a higher-dimensional Euclidean space, and thus has zero measure.  \end{abstract}


\section{Introduction}

Why are non-convex optimization problems difficult to solve? 
In general, non-convex functions can have many sub-optimal local minima on the landscape, thus local search algorithms can get stuck at one of them.
To distinguish neural network problems from an arbitrary non-convex problem, one goal of landscape analysis has been to show that ``every local minimum is a global minimum''.
For example, this result was proved  for deep linear networks in the reference \cite{kawaguchi2016deep}.

Recently, a number of papers \cite{kawaguchi2016deep,freeman2016topology,choromanska2015loss,nguyen2017loss,nguyen2018loss} 
have studied the problem of showing that all local minima in deep neural networks are good. In particular, \cite{liang2018adding} show that adding a special neuron and a regularizer can ensure that every local minimum is a global minimum.
Intuitively, the extra neuron provides one or more paths in a higher-dimensional space along which the loss function can decrease, starting from each original local minimum. 
However, it has been shown by \cite{kawaguchi2019elimination} that there are simple examples in which the new  path can lead the original local minimum to infinity, and thus a descent algorithm
might diverge to infinity. We remark that, in addition to \cite{liang2018adding}, 
the other works on landscape analysis of neural networks mentioned earlier do not
explicitly eliminate the possibility that a descent algorithm will diverge to infinity. 
 For instance, for a 3-layer 1-dimensional linear network  problem $\min_{ x, y, z \in \mathbb{R} } (xyz - 1)^2$,  although by  \cite{kawaguchi2016deep}, every local minimum is a global minimum, there is a  sequence $(x_k, y_k, z_k) = ( -1/k, \sqrt{k}, 1/k  )  $ diverging to infinity and yet the function values are decreasing and converging to  $1$, which is clearly a sub-optimal value.  This shows that a ``decreasing path'' to infinity exists even for linear neural networks.


Let us step back and rethink why one may want to prove that there exists ``no bad local-min''. 
An underlying reason is that there is much empirical evidence and some theoretical evidence \cite{ge2015escaping, lee2016gradient}  suggesting that gradient descent (GD) type methods can escape saddle points. Together with a ``no bad local-min'' result, one might expect GD type methods to converge to global minima. 
However, an ignored fact is that ``escaping saddle points'' does not
mean GD-type methods will converge\footnote{Note that \cite{ge2015escaping}
do prove the convergence of noisy GD by making a strict saddle assumption which eliminates undesirable functions such as $\exp(-x^2)$.}.
Classical convergence results on GD (e.g. \cite{bertsekas99}) only state ``every limit point of the sequence is a stationary point'', which does NOT ensure that a limit point exists; in other words, does not eliminate the case of divergence. 
 In optimization literature,  explicit assumptions such as ``compact level set'' are often required to ensure convergence (e.g.,  \cite{tseng2001convergence}),
 but for the original neural network problem such an assumption does not hold.

There are two difficulties in non-convex optimization: there could be bad local minima, and decreasing paths to infinity. Therefore, a descent algorithm faces two issues: it may get stuck at a local minimum, or may diverge to infinity. 

\subsection{Our Strategy: New Landscape Analysis}

In this work, we propose to modify the goal from proving 
``no bad local minima''  exist 
to ``\textbf{no bad local minima} and \textbf{no decreasing paths to infinity}'' exist.
On a landscape with ``no bad local minima'', a descent algorithm such as GD can diverge to infinity and  produce a highly sub-optimal loss. 
In contrast, if the new goal is achieved, then a local-search descent algorithm which can escape saddle points will converge to one of the local minima, and since none of these local minima are bad, the point to which convergence occurs will have good properties. In this paper, we define a good local minimum to be one with zero training error.  


How do we ensure there is no decreasing path to infinity? 
A natural idea is to restrict the set of allowable parameters of the neural network to a bounded set. 
However, constraints may introduce additional bad local minima on the boundary of the constraint set. An alternative is to add regularizers, which is what we do in this paper.
However, adding regularizers can also introduce new bad local minima, and we need to properly design the regularizer and the network so that no new bad local minimum are introduced. 
Note that \cite{liang2018adding} also have regularizers, but their regularizers
only control a subset of parameters and allow other parameters to diverge;
we will add regularizers for \textit{every} weight parameter. 

A natural question to ask is:
how do we know that there even exists a neural network  such that it has an associated loss function with the desired landscape? 
Later we will show that for a single-layered neural-net with a $d$-dimensional input and $O(d)$ quadratic neurons, a common loss function with a cubic regularizer has no bad local-min and no decreasing path to infinity

However, neural-nets with linear or quadratic neurons do not have universal representation power,
thus this result is less interesting from a machine learning perspective. 
Therefore, we ask the following question:
\begin{align*}&\textit{   
Does there exist a (deep) neural network with universal representation power}\\
&\quad\quad\textit{ such that an associated loss function has the desired landscape?} 
\end{align*}
In this paper, we prove that the answer is yes, by using both overparameterization and regularization. 
More details are given in the next subsection. 

We remark that overparameterization alone cannot eliminate sub-optimal local minima as counter-examples are given in a recent works \cite{liang2018understanding, ding2019spurious}, but can only eliminate sub-optimal strict local minima \cite{li2018over}.
Therefore, adding regularizers to change the landscape is necessary
if we want to fully eliminate bad local minima.
In other words, both over-parameterization and regularization
help eliminate bad local minima, and the regularization itself
ensures coerciveness. 
The idea of adding regularizer to modify the landscape is indeed critical in the previous work on deep neural networks \cite{liang2018adding}, as well as
works on non-convex matrix completion \cite{keshavan2010matrix,sunluo14}.
However, proving no bad local minima and coerciveness at the same
time  appears to be hard to prove for neural networks, in general. 
So in this paper, we aim to show that indeed such a result holds for  deep convolutional neural-nets with a combination of ReQU neurons (Rectified Quadratic Units)  and Leaky ReLU neurons (Leaky Rectified Linear Units).

%






\subsection{Overview of Main Results} 

In this paper, we consider a binary classification problem where
a neural network classifier is used. 
A  function $F(\bm{\theta}): \mathbb{R}^N \rightarrow \mathbb{R}$ is called coercive if $ \lim_{\| \bm{\theta} \| \rightarrow \infty } F( \bm{\theta} ) = + \infty  $. 
For a coercive function, there is no decreasing path to infinity. 
Our main results are summarized as follows. 
\begin{itemize}[leftmargin=*]
 \item We prove that for a one-hidden-layer neural network with at least
 $ (n + 1) $ ReQU neurons (where $n$ is the number of samples), after adding a proper regularizer, the loss function is coercive and at every local minimum of this loss function, the neural network has zero training error. 
 \item We prove that for an arbitrarily deep convolutional neural network with
 at least $O( d n) $ ReQU neurons in the last hidden layer,
 after adding a proper regularizer, the loss function is coercive and every local minimum of it has zero training error. 
 This is a generalization of the above result. 
\end{itemize}
Our main conclusion is that, for a large class of networks with sufficient representation power, over-parameterization and regularization ensure that every local minimum has zero training error and the landscape does not have a path going to infinity along which the loss function decreases. 

The outline of this paper is as follows. In Section~\ref{sec::prelim}, we present the models and notations used in the rest of the paper. In Section~\ref{sec::main-results}, we present the main results.
Several discussions on the main results are presented in Section~\ref{sec::discussion}. 
The ideas behind the proofs of the main results are presented in Section~\ref{sec::proof} and we conclude the paper in Section~\ref{sec::conclusions}. All proofs not given in the main body
of the paper are given in the appendix.


\subsection{ Related Works}

For deep neural networks, recently there are some  interesting works (e.g. \cite{jacot2018neural,du2018gradient,allen2018convergence,zou2018stochastic,arora2019exact})  that proved that GD converges to global minima at a linear rate
for ultra-wide neural networks.
The conclusion is stronger than landscape anaysis as it can establish linear convergence rate, but they have stronger assumptions (e.g., $O(n^{24})$ neurons in every layer) than ours ($O(n)$ neurons in the last hidden layer). 
That line of research and our landscape analysis take on a different theoretical perspective.
The main question they study is whether polynomial time convergence is possible,
with less emphasis on how many neurons to use -- the focus is on computational complexity. 
Landscape analysis asks whether there are fundamental characteristics of the landscape
that may make a local search algorithm get stuck at an undesirable point, ignoring how much time an algorithm might take to reach a local minimum; in other words, the focus is on geometry.
We view them as two complimentary lines of research, which may converge in the future to provide even stronger results with weaker assumptions.


Our result requires over-parameterization, which is a common assumption in other recent works as well (e.g. \cite{du2018gradient, allen2018convergence,zhou2018convergence,freeman2016topology,nguyen2018loss}). Although over-parameterization can benefit the optimization algorithms (e.g., \cite{arora2018optimization,du2018gradient, allen2018convergence,zhou2018convergence}), it may also increase model complexity, which is partly alleviated by the fact that we add regularizers. In fact, there are other factors that may help control model complexity as well.
Along these lines, many recent works (e.g. \cite{neyshabur2017exploring,bartlett2017spectrally,wei2018margin,neyshabur2018towards}) analyze the generalization error of neural networks.
A particularly interesting topic is to understand why over-parameterization does not cause overfitting. 
One possible explanation is that GD provides ``implict regularization'' that reduces
the generalization error (e.g. \cite{neyshabur2017exploring,li2017algorithmic}). 
Some works provided theoretical evidence that adding more neurons can actually
improve generalization error (e.g., \cite{neyshabur2018towards}). 
This is an interesting line of research that is somewhat orthogonal to our work, but may shed light on why over-parameterization works well in practice despite the possibility of increased model complexity.

There have been many works on the landscape or convergence analysis of shallow neural-nets. 
References \cite{mei2018mean,sirignano2018mean,chizat2018global,rotskoff2018neural} analyzed the limiting behavior of SGD when the number of neurons goes to infinity. 
References \cite{freeman2016topology, soudry2017exponentially,haeffele2017global, ge2017learning, gao2018learning, feizi2017porcupine, panigrahy2017convergence} analyzed the global landscape
 of various shallow networks. 
References \cite{laurent2017multilinear,tian2017analytical,soltanolkotabi2019theoretical,mei2018mean,brutzkus2017globally,zhong2017recovery,li2017convergence,brutzkus2017sgd,wang2018learning,du2018power,oymak2019towards,janzamin2015beating, mondelli2018connection} analyzed gradient descent for shallow networks.


The ReQU neurons and the regularizers used in this paper are carefully designed to achieve the desired landscape. We believe that a careful design may be necessary to achieve a desired landscape, though the scope of networks with desirable landscape
requires further research.

\subsection{Discussions on New Landscape Analysis}

We discuss the relationship between our new landscape analysis and previous works. As mentioned earlier, our main result ensures that there are``no decreasing paths to infinity." 
We note that this does not mean that all previous works suffer from possible decreasing paths to infinity.

In the context of neural networks, the results on algorithmic analysis for neural networks (e.g. \cite{jacot2018neural,allen2018convergence,du2018gradient,zou2018convergence})
 do not have this risk of diverging to infinity under their assumptions. However, the amount of overparameterization that they require is significantly larger than us.
The results on the global landscape (e.g. \cite{kawaguchi2016deep} \cite{liang2018understanding} \cite{liang2018adding}) could have decreasing paths to infinity. However, it is not immediately obvious whether practical algorithms will necessarily diverge to infinity.  
Gradient decent might converge to infinity on such landscapes; however, with clever initialization and other tricks used in practice, whether this will happen is not clear. In contrast, our result ensures that there are no decreasing paths to infinity, thus eliminating a possible issue.

Since the fundamental difficulty in training neural networks is due to the nonconvexity of the objective function, it is also instructive to contrast nonconvexity issues in neural networks with other nonconvex problems. In particular, 
non-convex matrix problems have been studied extensively, but the decreasing paths issue does not seem to appear in any of the models that we are aware of.
There are three classes of results: 
the first class present algorithmic analysis and/or local
geometry and thus do not suffer from the issue considered in this paper (e.g. \cite{keshavan2010matrix,sunluo14,candes2015phase,chen2018gradient}).
The second  class of papers present global geometrical analysis
without using regularizers, but they also prove a convergence result thus
do not suffer from this issue (e.g. \cite{sun2018geometric,bhojanapalli2016global}). 
The third class present global geometrical analysis and do not present
algorithmic proofs, but due to the use of regularizers that lead to coercivity, they do not suffer from this issue (e.g. \cite{ge2016matrix,ge2017no}).

We remark that theoretical results without considering decreasing paths to infinity \cite{kawaguchi2016deep} \cite{liang2018understanding} \cite{liang2018adding}  may still provide useful insight. 
In existing works to date, including this paper, there is a trade-off between the strength of the results
and strength of assumptions.
To eliminate decreasing paths, we restrict the class of neurons and regularizers.
To fully prove algorithmic convergence, the price
paid by \cite{jacot2018neural,allen2018convergence,du2018gradient,zou2018convergence} is the impractical assumptions of a huge number of neurons.
It is an open question whether one can prove convergence with a much smaller amount of overparameterization.

\section{Preliminaries}\label{sec::prelim}

\textbf{Single-layered ReQU network.} Given an input vector of dimension $d$, we consider a single-layered neural network with ReQU activation for binary classification. We denote the activation function of the ReQU neuron by $\text{ReQU}(z)=[\max\{z,0\}]^{2}$. Therefore, the output of a single-layered network consisting of $m$ ReQUs can be expressed by 
$f(x;\bm{\theta})=\sum_{j=1}^{m}a_{j}\requ\left(\bm{w}_{j}^{\top}x+b_{j}\right),$
where the scalar $a_{j}$,  vector $\bm{w}_{j}$ and scalar $b_{j}$ denote the coefficient, weight vector and bias of the $j$-th neuron and $\bm{\theta}$ denotes the vector containing all parameters ($a_j$s, $b_j$s and $w_j$s) in the neural network. 

\textbf{1-D convolutional operator with padding.} Let $\bm{\alpha}=(\bm{\alpha}(1),...,\bm{\alpha}(d_{\alpha}))$ and $\bm{\beta}=(\bm{\beta}(1),...,\bm{\beta}(d_{\beta}))$ denote two vectors of dimensions $d_{\alpha}$ and $d_{\beta}$, respectively. Let $d_{\alpha}<d_{\beta}$. We use $\bm{\alpha}\star\bm{\beta}$ to denote the standard 1-D convolution of vectors $\bm{\alpha}$ and $\bm{\beta}$, where the $j$-th coordinate of the vector $\bm{\alpha}\star\bm{\beta}$ is  
\begin{equation*}
(\bm{\alpha}\star\bm{\beta})(j)=\sum_{i=1}^{d_{\alpha}}\bm{\alpha}(i)\bm{\beta}(i+j-1), \quad j\in[d_{\beta}-d_{\alpha}+1].
\end{equation*}
Let $\bm{\alpha}\ast\bm{\beta}$ denote the 1-D convolution of vectors $\bm{\alpha}$ and $\bm{\beta}$ with padding, where $\bm{\alpha}\ast\bm{\beta}$ is defined as
$$\bm{\alpha}\ast\bm{\beta}=\bm{\alpha}\star(\bm{0}_{d_{\alpha}-1}, \bm{\beta}, \bm{0}_{d_{\alpha}-1}),$$
and $\bm{0}_{d_{\alpha}-1}$ denotes a zero vector of dimension $d_{\alpha}-1$.  Note that $\bm{\alpha}\ast\bm{\beta},$ i.e., convolution with padding, is a vector of dimension $d_\alpha+d_\beta-1.$

\textbf{Multilayer convolutional neural network.} Given an input vector of dimension $d$, we consider a multilayer convolutional neural network $f$ with $l$ layers  of neurons for binary classification. In this paper, we consider a multilayer convolutional neural network of the following architecture: (1) the first  $(l-1)$ layers are convolutional layers where only one channel is used in each layer; (2) all convolutional layers use 1-D convolutional operator with padding; (3) the activation function $\sigma$ for the first $(l-1)$ layers is Leaky ReLU, where $\sigma(z)=z$ if $z\ge0$ and $\sigma(z)=kz$ for some $0<k<1$ if $z<0$;
(3) the last layer is a fully connected layer where the activation function is the rectified quadratic unit (ReQU). Let the $s$-dimensional vector $\bm{v}_{i}$ denote the weight vector for the filter in the $i$-th convolutional layer. Note that all the vectors $\bm{v}_i$ have the same dimension and each convolutional operator uses the same amount of padding. Therefore, the output dimension of each convolutional layer increases by $s-1$ for each layer, where $s$ is the filter length. Let $m$ denote the number of neurons in the $l$-th layer (last layer).
Let the vector $\bm{a}\in\mathbb{R}^{m}$, matrix $\bm{W}\in\mathbb{R}^{(d+(s-1)l)\times m_{}}$ and vector $\bm{b}\in\mathbb{R}^{d+(s-1)l}$ denote the coefficient vector, weight matrix and bias vector of the $l$-th layer. The output of the multilayer convolutional neural network can be written as 
\begin{align}\label{eq::multi-cnn}
f(x;\bm{\theta})=\bm{a}^{\top}\textbf{\requ}\left(\bm{W}^{\top}\bm{\sigma}(\bm{v}_{l-1}\ast \bm{\sigma}(...\ast \bm{\sigma}\left(\bm{v}_{1}\ast x\right))+\bm{b}\right),
\end{align}
where the vector $\bm{\theta}$ contains all parameters $\{\bm{a},\bm{W},\bm{b},\bm{v}_{1},...,\bm{v}_{l-1}\}$ in the network $f$. 

\textbf{Loss and error.} We use $\mathcal{D}=\{(x_{i},y_{i})\}_{i=1}^{n}$ to denote a dataset containing $n$ samples, where $x_{i}\in\mathbb{R}^{d}$ and $y_{i}\in\{-1,1\}$ denote the feature vector and the label of the $i$-th sample, respectively. Given a neural network $f(x;\bm{\theta})$ parameterized by $\bm{\theta}$ and a univariate loss function $\ell:\mathbb{R}\rightarrow\mathbb{R}$, in binary classification tasks, we define the regularized  empirical loss $L_{n}(\bm{\theta};\bm{\lambda})$ as a linear combination of a regularizer $V(\bm{\theta};\bm{\lambda})$ parameterized by a vector $\bm{\lambda}$ and the average loss of the network $f$ on a sample in the dataset. We define the training error (also called the misclassification error) $R_{n}(\bm{\theta};f)$ as the misclassification rate of the network $f$ on the dataset $D$, i.e., 
\begin{equation}
L_{n}(\bm{\theta};\bm{\lambda})=\sum_{i=1}^{n}\ell(-y_{i}f(x_{i};\bm{\theta}))+V(\bm{\theta};\bm{\lambda})
\end{equation}
and
\begin{equation} R_{n}(\bm{\theta};f)=\frac{1}{n}\sum_{i=1}^{n}\mathbb{I}\{y_{i}\neq \sgn(f(x_{i};\bm{\theta}))\},
\end{equation}
where $\mathbb{I}$ is the indicator function. 

\section{Main Results}\label{sec::main-results}

\subsection{Assumptions}\label{sec::assumption}
In this subsection, we introduce two assumptions on the univariate loss function and dataset.

\begin{assumption}[Loss function]\label{assumption::loss}
 Assume that the univariate loss function $\ell:\mathbb{R}\rightarrow\mathbb{R}_{\ge0}$ is non-decreasing and twice differentiable. Assume that there exists $\e>0$ such that any $z\in\mathbb{R}$ satisfying $\ell'(z)<\e$ always satisfies $z<0$.
\end{assumption}
\textbf{Remarks:} (i) Since the univariate loss $\ell$ is non-negative, then the empirical loss $\lossc$ is coercive if the regularizer $V(\bm{\theta};\bm{\lambda})$ is coercive. (ii) We note that all  strictly convex functions satisfy the assumption that there exists a constant $\e$ such that $\ell'(z)<\e$ implies $z<0$, since the derivative of a strictly convex function is strictly increasing. (iii) We provide several examples of some commonly used loss function satisfying the above assumption: (1) the logistic loss, i.e., $\ell(z)=\log_2(1+e^{z})$ and (2) the smooth hinge loss, i.e., $\ell(z)=[\max\{1+z,0\}]^p, p\ge 3$. 

\begin{assumption}[Dataset]\label{assumption::dataset}
Assume that the dataset $\mathcal{D}=\{(x_{i},y_{i})\}_{i=1}^{n}$ satisfies that for any $i,j\in[n]$, $x_{i}\neq x_{j}$ if $i\neq j$.
\end{assumption}
\textbf{Remark:} Assumption~\ref{assumption::dataset} simply states that the feature vector of each sample in the dataset is unique.

\subsection{Single-layered ReQU Network}\label{sec::single}
In this subsection, we present the main result for the single-layered ReQU network.
Recall that for a single-layered ReQU network consisting of $m$ neurons, the output of the neural network is defined as $f(x;\bm{\theta})=\sum_{j=1}^{m}a_{j}\requ\left(\bm{w}_{j}^{\top}x+b_{j}\right)$. Now we define the empirical loss as 

\begin{equation}\label{eq::loss-single}
L_{n}(\bm{\theta};\bm{\lambda})=\sum_{i=1}^{n}\ell(-y_{i}f(x_{i};\bm{\theta}))+\frac{1}{3}\sum_{j=1}^{m}\lambda_{j}\left[|a_{j}|^{3}+2\left(\|\bm{w}_{j}\|^{2}_{2}+b_{j}^{2}\right)^{3/2}\right],
\end{equation}

where all regularizer coefficients $\lambda_{j}$s are positive numbers and the vector $\bm{\lambda}=(\lambda_{1},...,\lambda_{m})$ consists of all regularizer coefficients. We note that after adding the regularizer, the empirical loss $\lossc$ is coercive and always has a  global minimum.  Now we present the following theorem to show that if the network size is larger than the dataset size, i.e., $m\ge n+1$ and the regularizer coefficient vector $\bm{\lambda}$ is carefully chosen, then every local minimum of the empirical loss $\lossc$ achieves zero training error on the dataset $\mathcal{D}$.

\begin{theorem}\label{thm::single}
Let $m\ge n+1$. Under Assumption~\ref{assumption::loss} and \ref{assumption::dataset}, there exists a $\lambda_{0}=\lambda_{0}(\mathcal{D},\ell)>0$ and a zero measure set $\mathcal{C}\subset\mathbb{R}^{m}$ such that for any $\bm{\lambda}\in(0,\lambda_{0})^{m}\setminus \mathcal{C}$, both of the following statements are true:
\begin{itemize}[leftmargin=*]
\item[(1)] the empirical loss $L_{n}(\bm{\theta};\bm{\lambda})$ is coercive.
\item[(2)] every local minimum $\bm{\theta}^{*}$ of the loss $\lossc$ achieves zero training error, i.e., $R_{n}(\bm{\theta^{*}};f)=0$.
\end{itemize}
\end{theorem}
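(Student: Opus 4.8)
The plan is to treat the two conclusions separately. Conclusion~(1) is immediate and holds for \emph{every} $\bm\lambda$ with positive entries: since $\ell\ge 0$, we have $L_n(\bm\theta;\bm\lambda)\ge\tfrac13\sum_{j=1}^m\lambda_j\big[|a_j|^3+2(\|\bm w_j\|_2^2+b_j^2)^{3/2}\big]$, and the right-hand side is coercive in $\bm\theta$; so the exceptional null set $\mathcal C$ is needed only for conclusion~(2). For (2) I would argue by contradiction: suppose $\bm\theta^\ast$ is a local minimum of $\lossc$ with $R_n(\bm\theta^\ast;f)>0$, and deduce that either $\lambda_0$ was not small enough, or $\bm\lambda$ lies in a fixed zero-measure set independent of $\bm\lambda$.

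The first step is to read off the structure of a local minimum. Write $\widehat{\bm w}_j=(\bm w_j^\ast,b_j^\ast)$. If $a_j^\ast=0$ then $f$ does not depend on $\widehat{\bm w}_j$, so shrinking $\widehat{\bm w}_j$ strictly decreases the regularizer unless $\widehat{\bm w}_j=\bm0$; and if $a_j^\ast\neq0$ the curve $s\mapsto(a_j^\ast e^{2s},e^{-s}\widehat{\bm w}_j)$ leaves $f$ invariant, so stationarity of the regularizer along it forces the balance identity $|a_j^\ast|=\|\widehat{\bm w}_j\|=:c_j>0$. Hence each neuron is either \emph{off} ($a_j^\ast=0$, $\widehat{\bm w}_j=\bm0$) or \emph{active} (balanced with $c_j>0$). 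Next I would rule out off neurons: if neuron $j_0$ is off, perturb it by $(a_{j_0},\widehat{\bm w}_{j_0})=(\mu s,s\bar{\bm u})$ with $\|\bar{\bm u}\|=1$; using the degree-$2$ homogeneity of $\requ$ and the first-order expansion of $\ell$ (Assumption~\ref{assumption::loss}), the local-minimum inequality divided by $s^3$ and sent to $s\to0$ yields $\sup_{\|\bar{\bm u}\|=1}\big|\sum_{i=1}^n y_i\ell'(-y_if(x_i;\bm\theta^\ast))\,\requ(\bar{\bm u}^\top(x_i,1))\big|\le\lambda_{j_0}$. The $n$ functions $\bar{\bm u}\mapsto[\max\{\bar{\bm u}^\top(x_i,1),0\}]^2$ are linearly independent on the unit sphere (their Hessians jump across the pairwise-distinct hyperplanes $\{\bar{\bm u}^\top(x_i,1)=0\}$, using Assumption~\ref{assumption::dataset}), so there is a constant $C=C(\mathcal D)$ with $\max_i|\ell'(-y_if(x_i;\bm\theta^\ast))|\le C\cdot(\text{that supremum})$; meanwhile $R_n(\bm\theta^\ast;f)>0$ forces $\max_i\ell'(-y_if(x_i;\bm\theta^\ast))\ge\e$ by Assumption~\ref{assumption::loss}. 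Thus $\e\le C\lambda_{j_0}<C\lambda_0$, a contradiction once we set $\lambda_0:=\e/(2C)$, which depends only on $\mathcal D$ and $\ell$. Therefore all $m$ neurons are active.

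The second step turns the remaining all-active case into a measure-zero statement about $\bm\lambda$. Let $r_i=y_i\ell'(-y_if(x_i;\bm\theta^\ast))$, $\bm z_i=(x_i,1)$, $I_j=\{i:\widehat{\bm w}_j^\top\bm z_i>0\}$, and $M_I(\bm r)=\sum_{i\in I}r_i\bm z_i\bm z_i^\top$. The stationarity condition in $(\bm w_j,b_j)$ reads $M_{I_j}(\bm r)\,\widehat{\bm w}_j=t_j\widehat{\bm w}_j$ for a scalar $t_j$, i.e.\ $\widehat{\bm w}_j$ is an eigenvector of $M_{I_j}(\bm r)$; the stationarity condition in $a_j$ reads $\lambda_j c_j^2\sgn(a_j^\ast)=\widehat{\bm w}_j^\top M_{I_j}(\bm r)\widehat{\bm w}_j=t_jc_j^2$. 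Combining gives $\lambda_j=|t_j|$: each coordinate of $\bm\lambda$ equals the absolute value of some eigenvalue of $M_{I_j}(\bm r)$. Consequently every $\bm\lambda$ admitting such a bad local minimum lies in
\[
\mathcal C:=\bigcup\Big\{\big(\,|\mathrm{eig}_{k_1}M_{I_1}(\bm r)|,\ \ldots,\ |\mathrm{eig}_{k_m}M_{I_m}(\bm r)|\,\big):\ \bm r\in\mathbb R^n\Big\},
\]
the union being over the finitely many tuples of subsets $I_1,\ldots,I_m\subseteq[n]$ and eigenvalue indices. Since eigenvalues of a symmetric matrix are $1$-Lipschitz in the matrix and $M_I(\bm r)$ is linear in $\bm r$, each set in this union is the image of a Lipschitz map $\mathbb R^n\to\mathbb R^m$; because $m\ge n+1$, each such image is Lebesgue-null in $\mathbb R^m$, and a finite union of null sets is null. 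For $\bm\lambda\in(0,\lambda_0)^m\setminus\mathcal C$ no bad local minimum exists, which is conclusion~(2).

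The step I expect to be the main obstacle is making the stationarity and perturbation arguments rigorous despite the non-smoothness of $\requ$: one must check that all of them use only $C^1$ information about $f$ (legitimate, since $\requ$ composed with an affine map is $C^1$) together with exact positive homogeneity, and never a second-order Taylor expansion of $f$, which is unavailable across the kinks—in particular the off-neuron comparison and the two first-order identities above must each be justified from first-order data alone. A secondary technical point is establishing the quantitative linear-independence constant $C(\mathcal D)$ and confirming it depends on nothing but the dataset. Pleasantly, the closing eigenvalue-Lipschitz argument needs no genericity assumption on $\bm r$, since only eigenvalues—not eigenvectors—enter the formula for $\bm\lambda$, so repeated-eigenvalue degeneracies cause no trouble.
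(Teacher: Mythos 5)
Your proposal is correct and follows essentially the same strategy as the paper's proof: the balance identity $|a_j^*|=\|(\bm{w}_j^*,b_j^*)\|$ at critical points, the dichotomy between having a dead neuron and the all-active case, a cubic perturbation of the dead neuron yielding $\sup_{\|\bar{\bm u}\|=1}\left|\sum_i y_i\ell'_i(\bar{\bm u}^\top(x_i,1))_+^2\right|\le\lambda_j$, and the Lidskii/Lipschitz-eigenvalue argument showing that the all-active stationarity conditions confine $\bm{\lambda}$ to the measure-zero image of a Lipschitz map from $\mathbb{R}^n$ into $\mathbb{R}^m$ with $m\ge n+1$. The only local differences are organizational: you argue by contradiction and extract $\lambda_0(\mathcal{D},\ell)$ from a norm-equivalence constant based on linear independence of the ReQU ridge functions, whereas the paper instead builds an explicit interpolating ReQU network of size $n+1$ (Claim~\ref{lemma::single-requ}) and sets $\lambda_0=\e\tilde{\lambda}$ with $\tilde{\lambda}$ a max--min margin, and you encode the sign and indicator pattern via subsets $I_j$ and absolute eigenvalues rather than the paper's finite alphabet $A_{ij}\in\{-1,0,1\}$ --- equivalent devices yielding the same conclusion.
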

\textbf{Remarks: } (i) Theorem~\ref{thm::single} shows that if the network size $m$ is larger than the dataset size $n$, then there exists a constant $\lambda_{0}$ depending on the dataset $\mathcal{D}$ and loss $\ell$ such that for almost all vectors $\bm{\lambda}$ in the region $(0,\lambda_0)^m$, the empirical loss $\lossc$ is coercive and at every local minimum, the neural network $f$ achieves zero training error. (ii) The assumption that the network size is larger than the dataset size is consistent with the observations in practice where the number of neurons in the widest layer of the mordern networks is comparable to the dataset size~\cite{huang2017densely,he2016deep,krizhevsky2012imagenet}. (iii) In this paper, we use ReQUs at the output layer because ReLUs are nondifferentiable. Further, neurons with the activation function $ \sigma(z)=(z)_+^{(1+\alpha)},\alpha>0$ may also work but may require a different amount of over-parameterization and is a subject of ongoing work. 
(iv) This result is not a pure optimization result as
we only proved that training error $R_n$ is zero.
The training error is the portion of incorrect predictions
which is between $0$ and $1$, while the optimization objective $L_n$
is a non-negative real number that serves as a surrogate function to $R_n$.
We did not prove that every local minimum of $L_n$ is a global minimum
of $L_n$, but a global minimum of the real metric of interest $R_n$.

\subsection{Multilayer Convolutional Neural Network}\label{sec::multi-layer}

Recall that for a multilayer convolutional neural network defined in Eq.~\eqref{eq::multi-cnn} and parameterized by the vector $\bm{\theta}=(\bm{a},\bm{W},\bm{b},\bm{v}_{1},...,\bm{v}_{l-1})$, the output of the neural network is 
\begin{equation}
f(x;\bm{\theta})=\bm{a}^{\top}\textbf{\requ}\left(\bm{W}^{\top}\bm{\sigma}(\bm{v}_{l-1}\ast \bm{\sigma}(...\ast \bm{\sigma}\left(\bm{v}_{1}\ast x\right))+\bm{b}\right).
\end{equation}
Now we define the empirical loss as 
\begin{align}\label{eq::loss-multi}
L_{n}(\bm{\theta};\bm{\lambda})&=\sum_{i=1}^{n}\ell(-y_{i}f(x_{i};\bm{\theta}))+\frac{1}{3}\sum_{j=1}^{m}\lambda_{j}\left[|a_{j}|^{3}+2\left(\|\bm{w}_{j}\|^{2}_{2}+b_{j}^{2}\right)^{3/2}\right]\\
&\quad+\frac{\lambda_{c}}{4}\sum_{k=1}^{l-1}(\|\bm{v}_{k}\|_{2}^{2}-1)^{2},\notag
\end{align}
where all regularizer coefficients $\lambda_{1},...,\lambda_{m},\lambda_{c}$ are positive numbers and the vector $\bm{\lambda}=(\lambda_{1},...,\lambda_{m})$ consists of all regularizer coefficients for parameters in the last layer.  In the following theorem, we will show that if the number of neurons $m$ in the last layer (ReQU layer)  is sufficiently large and the regularizer coefficients $\bm{\lambda}$ and $\lambda_{c}$ are chosen carefully, then every local minimum of the empirical loss $\lossc$ achieves zero training error on the dataset. 
\begin{theorem}\label{thm::multi}
Assume that $m>(d+ls)n$, $l\ge 1$, $s\ge 1$ and $\lambda_{c}>0$. Under Assumption~\ref{assumption::loss} and \ref{assumption::dataset}, there exists a $\lambda_{0}=\lambda_{0}(\mathcal{D},\ell)>0$ and a zero measure set $\mathcal{C}\subset\mathbb{R}^{m}$ such that if $\bm{\lambda}\in(0,\lambda_{0})^{m}\setminus \mathcal{C}$, then both of the following statements are true:
\begin{itemize}[leftmargin=*]
\item[(1)] the empirical loss $L_{n}(\bm{\theta};\bm{\lambda})$ is coercive.
\item[(2)] every local minimum $\bm{\theta}^{*}$ of the loss $\lossc$ achieves zero training error, i.e., $R_{n}(\bm{\theta^{*}};f)=0$.
\end{itemize}
\end{theorem}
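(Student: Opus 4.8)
The plan is to bootstrap from Theorem~\ref{thm::single}. The point is that the last layer of the network in Eq.~\eqref{eq::multi-cnn} is itself a single-layered ReQU network, and the $(\bm a,\bm W,\bm b)$-part of the regularizer in Eq.~\eqref{eq::loss-multi} coincides with the regularizer of Eq.~\eqref{eq::loss-single}. So I would proceed in three moves: (i) dispose of coercivity directly; (ii) show the first $l-1$ convolutional layers implement an injective map of $\mathbb R^d$, so that the last hidden layer still sees $n$ distinct inputs; (iii) freeze the convolutional filters at a local minimum and run the argument behind Theorem~\ref{thm::single} on the induced single-layer problem.

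Coercivity (statement (1)) is the easy half and needs no genericity: each data summand is $\ge 0$ by Assumption~\ref{assumption::loss}; the term $\frac13\sum_j\lambda_j[|a_j|^3+2(\|\bm w_j\|_2^2+b_j^2)^{3/2}]$ is nonnegative and coercive in $(\bm a,\bm W,\bm b)$; and $\frac{\lambda_c}{4}\sum_k(\|\bm v_k\|_2^2-1)^2$ is nonnegative and coercive in $(\bm v_1,\dots,\bm v_{l-1})$. Hence $L_n(\bm\theta;\bm\lambda)\to+\infty$ as $\|\bm\theta\|\to\infty$ for every positive $\bm\lambda$ and $\lambda_c$, and a global minimizer exists.

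For statement (2): since $k\in(0,1)$, the Leaky ReLU $\sigma$ is a strictly increasing bijection of $\mathbb R$, so $\bm\sigma$ (applied coordinatewise) is a bijection; and convolution-with-padding by a fixed nonzero filter $\bm v_k$ is an injective linear map, because under the identification of a vector with the polynomial having those coefficients it is multiplication by a nonzero polynomial, and the polynomial ring has no zero divisors. Thus at a point $\bm\theta^*$ with all $\bm v_k^*\neq\bm 0$, the feature vectors $z_i^*:=\bm\sigma(\bm v_{l-1}^*\ast\cdots\ast\bm\sigma(\bm v_1^*\ast x_i))$ are pairwise distinct by Assumption~\ref{assumption::dataset}. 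If $\bm\theta^*$ is a local minimum of $L_n$, then $(\bm a^*,\bm W^*,\bm b^*)$ is a local minimum of $(\bm a,\bm W,\bm b)\mapsto L_n(\bm a,\bm W,\bm b,\bm v^*;\bm\lambda)$, which up to an additive constant is the loss of Eq.~\eqref{eq::loss-single} for a single-layered ReQU network on the dataset $\{(z_i^*,y_i)\}_{i=1}^n$, and $R_n(\bm\theta^*;f)$ is exactly the training error of that network. Since $m>(d+ls)n\ge n+1$, the proof technique of Theorem~\ref{thm::single} applies to this induced problem and drives the training error to $0$ — provided $\bm\lambda$ avoids the exceptional set attached to the dataset $\{z_i^*\}$.

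This last proviso is where essentially all the difficulty sits, and I expect it to be the main obstacle: Theorem~\ref{thm::single} supplies a threshold $\lambda_0$ and an exceptional set $\mathcal C$ that depend on the dataset, but here the feature dataset $\{z_i^*\}$ depends on the unknown $\bm v^*$, so Theorem~\ref{thm::single} cannot be invoked as a black box. Instead I would rerun its proof with the convolutional filters carried along as free variables: the bad configurations (a local minimum with a misclassified sample at which every ReQU neuron is active) are cut out by the stationarity equations together with one strict misclassification inequality, and solving those equations for $\bm\lambda$ exhibits $\mathcal C$ as the image of a Lipschitz map whose domain carries $O(ls)$ extra coordinates for $\bm v^*$ on top of the per-sample, per-feature-coordinate data — and the hypothesis $m>(d+ls)n$ is precisely what forces this domain to have dimension strictly less than $m$, so that $\mathcal C\subset\mathbb R^m$ has measure zero. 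The remaining piece is to rule out the degenerate local minima with some $\bm v_k^*=\bm 0$ (equivalently $\bm a^*=\bm 0$, or a collapsed feature map), which I would do by exhibiting an explicit first-order descent direction that simultaneously switches on a filter and a ReQU neuron, using that $\ell'$ is bounded below on misclassified samples (Assumption~\ref{assumption::loss}).
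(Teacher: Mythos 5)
Much of your outline coincides with the paper's proof. Coercivity is handled in the same direct way; the injectivity of the convolutional stack (strict monotonicity of Leaky ReLU plus the fact that padded convolution with a nonzero filter is an injective linear map) is exactly the paper's Lemma~\ref{lemma::multi-3} and Claim~\ref{lemma::multi-4}; and your idea of rerunning the genericity argument with the unknown features carried as free variables is essentially the paper's Lemma~\ref{lemma::multi-2}, except that the paper does not carry $\bm{v}^{*}$ at all: it lets the feature matrix $\bm{Z}\in\mathbb{R}^{d'\times n}$ (with $d'$ the last-hidden-layer dimension) range over the whole space, so the exceptional set of $\bm{\lambda}$ depends only on the finite alphabet of sign/indicator patterns, the condition $m>(d+ls)n$ is exactly what makes the eigenvalue map's domain lower-dimensional, and the quadratic (hence only locally Lipschitz) dependence on $\bm{Z}$ is patched by a countable exhaustion by balls. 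Your bookkeeping with ``$O(ls)$ extra coordinates for $\bm{v}^{*}$ on top of per-sample, per-feature-coordinate data'' is redundant and not literally forced to be below $m$ by the stated hypothesis in all edge cases, but this is a repairable looseness rather than the main issue.

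The genuine gap concerns the threshold $\lambda_{0}$. You correctly observe that both $\mathcal{C}$ and $\lambda_{0}$ in Theorem~\ref{thm::single} depend on the dataset, but your proposed fix (a higher-dimensional Lipschitz image) only rescues $\mathcal{C}$. The margin argument that turns ``inactive neuron at a local minimum'' into ``zero training error'' needs a single constant $\lambda_{0}(\mathcal{D},\ell)>0$ valid for every feature dataset $\{\bm{h}^{(l-1)}(x_{i};\bm{\theta}^{*})\}$ that can arise at a local minimum; since the interpolation margin of unit-norm ReQU networks degenerates as the filters shrink (the features coalesce), no uniform $\lambda_{0}$ exists unless the filters at local minima are bounded away from zero. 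The paper obtains this bound from a scaling argument using positive homogeneity of ReQU and Leaky ReLU: differentiating $L_{n}$ along the rescaled parameters $(r_{l+1}\bm{a}^{*},r_{l}\bm{W}^{*},r_{k}\bm{v}_{k}^{*})$ at $\bm{r}=\bm{1}$ gives balance identities tying $\sum_{j}\lambda_{j}|a_{j}^{*}|^{3}$-type quantities to $\lambda_{c}(\|\bm{v}_{k}^{*}\|_{2}^{2}-1)\|\bm{v}_{k}^{*}\|_{2}^{2}$, which leaves only three cases: all last-layer weights zero with some $\bm{v}_{k}^{*}=\bm{0}$ (not a local minimum, because moving $\bm{v}_{k}$ off zero strictly decreases $(\|\bm{v}_{k}\|_{2}^{2}-1)^{2}$ while the data term is unaffected), all last-layer weights zero with $\|\bm{v}_{k}^{*}\|_{2}=1$ (contradiction via Lemma~\ref{lemma::multi-5}), or $\|\bm{v}_{k}^{*}\|_{2}>1$ for all $k$; the threshold is then defined in Lemma~\ref{lemma::multi-5} as $\varepsilon$ times a minimum of the interpolation margin over the set $\{\|\bm{v}_{k}\|_{2}\ge1\ \forall k\}$, which is what makes it depend only on $(\mathcal{D},\ell)$. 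Your last step proposes only to exclude $\bm{v}_{k}^{*}=\bm{0}$ exactly, which neither covers $0<\|\bm{v}_{k}^{*}\|_{2}<1$ nor justifies the parenthetical ``equivalently $\bm{a}^{*}=\bm{0}$'' (that equivalence is itself a consequence of the balance identities you have not derived), so as written the proposal provides no way to choose $\lambda_{0}$ independently of $\bm{v}^{*}$, and this is precisely the step where the regularizer $\frac{\lambda_{c}}{4}\sum_{k}(\|\bm{v}_{k}\|_{2}^{2}-1)^{2}$ and the homogeneity trick do the real work.
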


The proof of Theorem \ref{thm::multi} will be given in Appendix \ref{appendix::multi}.
Theorem~\ref{thm::multi} shows that if the number of neurons in the ReQU layer $m$, depth $l$, filter length $s$ and dataset size $n$ satisfies that $m>(d+ls)n$, then there exists a constant $\lambda_0$ depending on the dataset and loss  such that  for almost all vectors $\bm{\lambda}$ in the region $(0,\lambda_{0})^{m}$, the empirical loss $\lossc$ is coercive and at every local minimum, the neural network $f$ achieves zero training error. 



\section{Discussions}\label{sec::discussion}
In this section, we discuss the impacts of the network size and activation function on the landscape. 
\subsection{Network Size}
In this subsection, we will show that in Theorem~\ref{thm::single}, the assumption that the network size $m$ is larger than the dataset size $n$ is also a necessary condition. This means that if the network size is smaller than the dataset size, i.e., $m<n$, then we can always find a loss function $\ell$ satisfying Assumption~\ref{assumption::loss} and a dataset $\mathcal{D}$ satisfying Assumption~\ref{assumption::dataset} such that the empirical loss defined by Eq.~\eqref{eq::loss-single} has a bad local minimum.

\begin{proposition}\label{prop::size}
Assume that $m\le n$ and $\ell$ is the logistic loss. Then there exists a dataset $\mathcal{D}$ satisfying Assumption~\ref{assumption::dataset} such that for any $\bm{\lambda}\in(0,1/2)^{m}$, the empirical loss $\lossc$ has a local minimum $\bm{\theta}$ with a training error at least $1-m/n$, i.e., $R(\bm{\theta};f)\ge 1-m/n$.
\end{proposition}

\subsection{Activation Function}
In this subsection, we will show that if the dataset is quadratically separable and the network is a quadratic network, then the required network size can be indepedent of the dataset size. Before presenting the result, we first introduce some additional notation. Given a $d$-dimensional input, we consider a single-layered quadratic network of the form  $f(x;\bm{\theta})=\sum_{j=1}^{m}a_{j}(\bm{w}_{j}^\top x+b_j)^2$, where $m$ denotes the network size and the vector $\bm{\theta}$ contains all parameters $\{a_j,\bm{w}_j,b_j\}_j$ in the network $f$. We consider an empirical loss $\lossc$ of the form given by  Eq.~\eqref{eq::loss-single} and consider a quadratically separable dataset defined as follows. 

\begin{definition}[Quadratically separable]\label{assumption::quadratic}
A dataset $\mathcal{D}=\{(x_{i},y_{i})\}_{i=1}^{n}$ is said to be quadratically separable if there exists a multivariate quadratic function $q:\mathbb{R}^d\rightarrow\mathbb{R}$ such that $y_{i}q(x_{i})>0, i=1,...,n$.
\end{definition}
Now we present the following proposition to show that if the network size is larger than the input dimension, i.e., $m\ge 2d+4$ and the regularizer coefficient vector $\bm{\lambda}$ is chosen carefully, then the network achieves zero training error at every local minimum of the empirical loss.   
\begin{proposition}\label{prop::activation}
Assume $m\ge 2d+4$, $\bm{\lambda}\in\mathbb{R}_{+}^m$  and $\lambda_{i}\neq \lambda_{j}$ for any $i\neq j$.  Under Assumption~\ref{assumption::quadratic}, if the dataset is quadratically separable, then both of the following statements are true:
\begin{itemize}[leftmargin=*]
\item[(1)] the empirical loss $L_{n}(\bm{\theta};\bm{\lambda})$ is coercive.
\item[(2)] every local minimum $\bm{\theta}^{*}$ of the loss $\lossc$ achieves zero training error, i.e., $R_{n}(\bm{\theta^{*}};f)=0$.
\end{itemize}
\end{proposition}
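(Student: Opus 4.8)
The plan is to get part~(1) for free and to prove part~(2) through the matrix picture of quadratic networks together with the extra neurons. Part~(1): since $\ell\ge 0$ by Assumption~\ref{assumption::loss}, $L_{n}(\bm\theta;\bm\lambda)\ge \tfrac13\sum_{j=1}^{m}\lambda_{j}\big(|a_{j}|^{3}+2(\|\bm w_{j}\|_{2}^{2}+b_{j}^{2})^{3/2}\big)$, and as every $\lambda_{j}>0$ the right-hand side already diverges as $\|\bm\theta\|\to\infty$, so $L_{n}$ is coercive and attains a global minimum. For part~(2) the first step is the reformulation: writing $\tilde x=(x,1)\in\mathbb R^{d+1}$ and $\tilde w_{j}=(\bm w_{j},b_{j})$, the network computes $f(x;\bm\theta)=\tilde x^{\top}M(\bm\theta)\tilde x$ with $M(\bm\theta)=\sum_{j=1}^{m}a_{j}\tilde w_{j}\tilde w_{j}^{\top}$, a symmetric $(d+1)\times(d+1)$ matrix. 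Any symmetric $(d+1)\times(d+1)$ matrix is a sum of at most $d+1$ signed rank-one terms, so with $m\ge 2d+4$ the network can carry the current $M(\bm\theta)$ together with an additional arbitrary quadratic form, with neurons to spare; and quadratic separability (Definition~\ref{assumption::quadratic}) supplies a symmetric $M_{q}$ with $y_{i}\,\tilde x_{i}^{\top}M_{q}\tilde x_{i}>0$ for every $i$.

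Next I would extract the structure of a local minimum $\bm\theta^{*}$. The regularizer is $C^{1}$ (each $|a_{j}|^{3}$ and $(\|\bm w_{j}\|_{2}^{2}+b_{j}^{2})^{3/2}$ is $C^{1}$), so $\nabla L_{n}(\bm\theta^{*})=0$. Setting $p_{i}:=\ell'(-y_{i}f(x_{i};\bm\theta^{*}))\ge 0$ and $P:=\sum_{i=1}^{n}p_{i}y_{i}\tilde x_{i}\tilde x_{i}^{\top}$, the stationarity equations in $a_{j}$ and in $\tilde w_{j}$ force, for every active neuron ($a_{j}^{*}\ne 0$, equivalently $\tilde w_{j}^{*}\ne 0$), the relations $\|\tilde w_{j}^{*}\|_{2}=|a_{j}^{*}|$ and $P\tilde w_{j}^{*}=\lambda_{j}\,\mathrm{sgn}(a_{j}^{*})\,\tilde w_{j}^{*}$. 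So each active $\tilde w_{j}^{*}$ is an eigenvector of the symmetric matrix $P$ with eigenvalue $\lambda_{j}\,\mathrm{sgn}(a_{j}^{*})$; since the $\lambda_{j}$ are pairwise distinct and positive these eigenvalues are pairwise distinct, so distinct active neurons lie in distinct eigenspaces of $P$ and there are at most $d+1$ of them. This is where the genericity hypothesis $\lambda_{i}\ne\lambda_{j}$ enters, and it is why $2d+4$ (roughly $(d+1)$ for the active neurons plus $(d+1)$ for the extra quadratic) is the natural budget.

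Now suppose, for contradiction, that $R_{n}(\bm\theta^{*};f)>0$: some $x_{i_{0}}$ is misclassified, so $-y_{i_{0}}f(x_{i_{0}};\bm\theta^{*})\ge 0$, and the second clause of Assumption~\ref{assumption::loss} gives $p_{i_{0}}\ge\e>0$. At least $(2d+4)-(d+1)=d+3$ neurons are inactive at $\bm\theta^{*}$, more than enough to realize the rank-$\le d+1$ matrix $M_{q}$, so I would perturb $\bm\theta^{*}$ by switching $\mathrm{rank}(M_{q})$ of them on along the eigendirections of $M_{q}$, each in the scaling $|a|=\|\tilde w\|$ that minimizes its regularizer cost, so that the output changes from $f^{*}$ to $f^{*}+\delta q$ for small $\delta>0$ while the active neurons are untouched. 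Writing $M_{q}=\sum_{k}\mu_{k}u_{k}u_{k}^{\top}$ and letting $j_{k}$ index the switched-on neurons, the change in $L_{n}$ is $-\delta\langle P,M_{q}\rangle+\delta\sum_{k}\lambda_{j_{k}}|\mu_{k}|+O(\delta^{2})$, and $\langle P,M_{q}\rangle=\sum_{i}p_{i}\,y_{i}q(x_{i})\ge p_{i_{0}}\,y_{i_{0}}q(x_{i_{0}})\ge\e\,y_{i_{0}}q(x_{i_{0}})>0$, a lower bound that does not depend on $\bm\lambda$. Hence, provided the $\lambda_{j}$ are small enough that $\sum_{k}\lambda_{j_{k}}|\mu_{k}|<\langle P,M_{q}\rangle$, the first-order change is negative, contradicting local minimality of $\bm\theta^{*}$; therefore $R_{n}(\bm\theta^{*};f)=0$.

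The hard part is precisely this last quantitative comparison. The minimal regularizer cost of injecting $\delta M_{q}$ is $\delta\sum_{k}\lambda_{j_{k}}|\mu_{k}|\le\delta(\max_{j}\lambda_{j})\|M_{q}\|_{*}$, which is of the same order $\delta$ as the first-order loss decrease, so one compares constants rather than orders; the perturbation wins only when $\max_{j}\lambda_{j}$ lies below a data-dependent threshold, essentially $\e$ divided by the least nuclear norm of a separating quadratic $M_{q}$ normalized so that $\min_{i}y_{i}\tilde x_{i}^{\top}M_{q}\tilde x_{i}=1$ (positive by separability). Reconciling this threshold with the range of $\bm\lambda$ in the statement (for which one presumably works with the normalized objective $\tilde L_{n}$), and checking that the best choice of $M_{q}$ and the inactive-neuron count are jointly compatible, is the part that needs care; coercivity, the matrix reformulation and the first-order conditions are routine.
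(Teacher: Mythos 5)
Your proposal is correct in structure up to its last step, and its first half is essentially the paper's argument in different clothing: coercivity from the cubic regularizer, the balancedness identity $|a_{j}^{*}|=(\|\bm{w}_{j}^{*}\|_{2}^{2}+b_{j}^{*2})^{1/2}$ from the two stationarity equations, and the use of pairwise-distinct $\lambda_{j}$'s to force inactive neurons. The paper phrases the last point as ``among $d+2$ neurons whose $a_{j}^{*}$ share a sign, the matrices $M_{j}=-M+\lambda_{j}\bm{I}_{d+1}$ cannot all be singular since $M$ has at most $d+1$ eigenvalues,'' while you phrase it as ``each active $(\bm{w}_{j}^{*},b_{j}^{*})$ is an eigenvector of $P=\sum_{i}\ell'_{i}y_{i}\tilde{x}_{i}\tilde{x}_{i}^{\top}$ with eigenvalue $\lambda_{j}\sgn(a_{j}^{*})$, so at most $d+1$ neurons are active''; these are the same mechanism. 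Where you genuinely differ is the perturbation: the paper switches on a single inactive neuron over all unit directions, deduces $|\sum_{i}\ell'_{i}(-y_{i})(\bm{u}^{\top}x_{i}+v)^{2}|<\lambda_{1}$, and then aggregates over a norm-one separating quadratic network via the margin constant $\tilde{\lambda}$; you switch on up to $d+1$ inactive neurons to inject $\delta M_{q}$ directly and compare $\langle P,M_{q}\rangle$ with the exact regularizer cost $\delta\sum_{k}\lambda_{j_{k}}|\mu_{k}|$ (which is the correct minimal cost, by AM--GM, of realizing each rank-one piece with $|a|=\|\tilde{w}\|$). Both routes are sound and end at the same quantitative comparison.

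The gap you flag at the end is real and, as the statement is written, cannot be closed: your first-order decrease wins only when $\max_{j}\lambda_{j}$ is below a data-dependent threshold (roughly $\e$ divided by the nuclear norm of a margin-normalized separating quadratic), but the proposition allows arbitrary pairwise-distinct $\bm{\lambda}\in\mathbb{R}_{+}^{m}$. No argument can remove this: if every $\lambda_{j}$ exceeds $n\,\sup_{|z|\le 1}\ell'(z)\,\max_{i}(\|x_{i}\|_{2}^{2}+1)$, then near $\bm{\theta}=\bm{0}$ the data-loss decrease is at most $n\,\sup_{|z|\le1}\ell'(z)\max_{i}(\|x_{i}\|_{2}^{2}+1)\sum_{j}|a_{j}|\,(\|\bm{w}_{j}\|_{2}^{2}+b_{j}^{2})$ while the regularizer grows by at least $\min_{j}\lambda_{j}\sum_{j}|a_{j}|\,(\|\bm{w}_{j}\|_{2}^{2}+b_{j}^{2})$, so $\bm{\theta}=\bm{0}$ is a local minimum with $f\equiv 0$ and hence nonzero training error even for a quadratically separable dataset. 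In other words, Proposition~\ref{prop::activation} implicitly needs a smallness condition of the type $\bm{\lambda}\in(0,\lambda_{0})^{m}$ as in Theorem~\ref{thm::single}; the paper's own appendix proof quietly uses it (it asserts $\lambda_{1}<\lambda_{0}=\e\tilde{\lambda}$, a hypothesis absent from the statement). So to finish your proof you should state the threshold explicitly, e.g.\ $\max_{j}\lambda_{j}<\e\,\tilde{\lambda}$ with $\tilde{\lambda}$ the normalized separation margin, rather than try to reconcile your bound with the unrestricted range of $\bm{\lambda}$ -- that reconciliation is impossible, and the obstruction lies in the statement, not in your construction.
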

\textbf{Remark: } Proposition~\ref{prop::activation} shows that for a quadratically separable dataset, one can reduce the required network size from $O(n)$ (as required by Theorem~\ref{thm::single}) to $O(d)$, which is independent of the dataset size.
However, the question whether there exists a special activation function for each dataset such that the required network size is independent of the dataset size is still open.

\section{Proof of Theorem ~\ref{thm::single}}\label{sec::proof}


\begin{proof}
The proof of coerciveness is rather straightforward. 
Let the minimal regularization coefficient
be $ \lambda_{\min} \triangleq \min_{j\in[m]} \{ \lambda_j \} > 0 $.
Since the univariate loss $\ell$ is non-negative, then after applying H\"{o}lder's inequality, we can obtain
$
L_n(  \bm{\theta}; \bm{\lambda}) \ge \frac{\lambda_{\min}}{3 \sqrt{2m}}   \| \bm{\theta}  \|_{2}^3.
$
Thus, the empirical loss $\lossc$ is coercive since $L_n(  \bm{\theta}; \bm{\lambda})\rightarrow \infty $  as $\| \bm{\theta} \|_{2} \rightarrow \infty$.

The proof of no bad local-min consists of two steps. \\
\textbf{Step 1}:  for any local minimum there is one neuron that is inactive. \\
\textbf{Step 2}: any local minimum with an inactive neuron
must have zero training error. 

\textbf{Step 1}: We present a lemma, and leave the proof to the next subsection. 
The lemma shows that for almost all $\bm{\lambda}$ in the Euclidean space, at every local minimum of the empirical loss, the neural network always has an inactive neuron. 
Following Lemma~\ref{lemma::single-1} and perturbing the parameters of that  inactive neuron at the local minimum, we finish the proof of Theorem~\ref{thm::single}. 

\begin{lemma}\label{lemma::single-1}
Assume $m\ge n+1$.  There exists a zero measure set $\mathcal{C}(\mathcal{D})\subset\mathbb{R}^{m}$ depending on the dataset $\mathcal{D}$ such that for any $\bm{\lambda}\notin \mathcal{C}$, at every critical point $\bm{\theta}^{*}$ of the empirical loss $L_{n}(\bm{\theta};\bm{\lambda})$, the neural network $f(x;\bm{\theta}^{*})$ always has an inactive neuron, i.e., $\exists j\in[m]$ s.t. $(a^{*}_{j}, \|\bm{w}_{j}^{*}\|_{2},b_{j}^{*})=(0,0,0)$.
\end{lemma}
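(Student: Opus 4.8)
The plan is to translate ``some neuron is inactive at every critical point'' into an algebraic condition on the coefficient vector $\bm{\lambda}$, and then invoke the ``low-dimensional Lipschitz image'' principle from the introduction. First I would write out the stationarity conditions. Set $\tilde{x}_i=(x_i,1)\in\mathbb{R}^{d+1}$, $\tilde{w}_j=(\bm{w}_j,b_j)$, $z_{ij}=\tilde{w}_j^{\top}\tilde{x}_i$, $\phi_{ij}=\max\{z_{ij},0\}$, and $g_i=-y_i\ell'(-y_i f(x_i;\bm{\theta}^{*}))$. Since $L_n(\cdot;\bm{\lambda})$ is $C^1$ (the regularizer has a continuous gradient which vanishes at the origin), any critical point satisfies, for every neuron $j$,
\[
\sum_{i=1}^{n} g_i\phi_{ij}^2+\lambda_j a_j|a_j|=0,\qquad a_j\sum_{i=1}^{n} g_i\phi_{ij}\tilde{x}_i+\lambda_j\|\tilde{w}_j\|_2\,\tilde{w}_j=0.
\]
From the second equation, if $a_j=0$ then $\tilde{w}_j=0$ (using $\lambda_j>0$); conversely, if $\tilde{w}_j=0$ then $\phi_{ij}=0$ for all $i$ and the first equation forces $a_j=0$. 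Hence at a critical point each neuron is either \emph{inactive}, i.e., $(a_j,\|\bm{w}_j\|_2,b_j)=(0,0,0)$, or \emph{active}, i.e., $a_j\neq 0$ and $\tilde{w}_j\neq 0$. So it suffices to exhibit a zero-measure set $\mathcal{C}(\mathcal{D})\subset\mathbb{R}^m$ outside of which no critical point can have all $m$ neurons active.

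Next I would analyze an active neuron. Dotting the second equation with $\tilde{w}_j$, using the pointwise identity $\phi_{ij}z_{ij}=\phi_{ij}^2$, and substituting the first equation yields $\|\tilde{w}_j\|_2=|a_j|$; this is exactly the place where the cubic exponents in the regularizer are used. Writing $s_j=\sgn(a_j)$, $I_j=\{i\in[n]:z_{ij}>0\}$, and $M_I(g):=\sum_{i\in I}g_i\tilde{x}_i\tilde{x}_i^{\top}\in\mathbb{R}^{(d+1)\times(d+1)}$, the identity $\sum_{i} g_i\phi_{ij}\tilde{x}_i=M_{I_j}(g)\tilde{w}_j$ together with $\|\tilde{w}_j\|_2=|a_j|$ turns the second equation into $M_{I_j}(g)\tilde{w}_j=-s_j\lambda_j\tilde{w}_j$. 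Thus at an all-active critical point each $\lambda_j$ equals the absolute value of an eigenvalue of the symmetric matrix $M_{I_j}(g)$, and crucially this matrix depends on $\bm{\theta}^{*}$ only through the index set $I_j\subseteq[n]$ and the vector $g\in\mathbb{R}^n$.

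Finally I would run the measure-zero argument. For each choice of subsets $(I_1,\dots,I_m)$ with $I_j\subseteq[n]$ and indices $(k_1,\dots,k_m)\in[d+1]^m$, define $\psi:\mathbb{R}^n\to\mathbb{R}^m$ by $\psi(g)=\bigl(|\mu_{k_1}(M_{I_1}(g))|,\dots,|\mu_{k_m}(M_{I_m}(g))|\bigr)$, where $\mu_k(M)$ is the $k$-th largest eigenvalue of a symmetric matrix $M$. Each $\psi$ is Lipschitz, because $g\mapsto M_I(g)$ is linear, $M\mapsto\mu_k(M)$ is $1$-Lipschitz on symmetric matrices by Weyl's inequality, and $|\cdot|$ is $1$-Lipschitz. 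By the previous step, the set of $\bm{\lambda}$ admitting an all-active critical point is contained in the union of the images $\psi(\mathbb{R}^n)$ over the finitely many choices of $(I_j)$ and $(k_j)$; take $\mathcal{C}(\mathcal{D})$ to be this (finite) union, which depends only on the feature vectors in $\mathcal{D}$. Since $m\ge n+1>n$, the image of a Lipschitz map from $\mathbb{R}^n$ into $\mathbb{R}^m$ has Lebesgue measure zero, and a finite union of null sets is null, so $\mathcal{C}(\mathcal{D})$ is null. For $\bm{\lambda}\notin\mathcal{C}(\mathcal{D})$, no critical point has all neurons active, i.e., every critical point has an inactive neuron.

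The main obstacle is the first half of the argument: one must extract from the stationarity equations the exact identity $\|\tilde{w}_j\|_2=|a_j|$ at every active neuron and the ensuing eigenvalue reformulation $M_{I_j}(g)\tilde{w}_j=-s_j\lambda_j\tilde{w}_j$. This is the step that genuinely exploits the ReQU activation together with the particular cubic regularizer, and is the point most sensitive to the network/regularizer design; everything downstream is the general ``$\bm{\lambda}$ lies in a finite union of Lipschitz images of $\mathbb{R}^n$'' argument, which is null in $\mathbb{R}^m$ precisely because $m\ge n+1$.
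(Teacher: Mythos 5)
Your proposal is correct and follows essentially the same route as the paper: extract the first-order conditions, derive $\|\tilde{w}_j\|_2=|a_j|$ at active neurons, recast the all-active case as ``each $\lambda_j$ is (the absolute value of) an eigenvalue of a symmetric matrix depending only on $g\in\mathbb{R}^n$ and a finite discrete choice,'' and conclude via the Lipschitz-eigenvalue-map/zero-measure-image argument with $m\ge n+1$. The only cosmetic differences are that you inline what the paper isolates as Lemma~\ref{lemma::single-3}, index the finite union by subsets $I_j\subseteq[n]$ rather than the sign-indicator matrix $\bm{A}\in\{-1,0,1\}^{n\times m}$, and invoke Weyl's inequality in place of the cited Lidskii result.
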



\textbf{Step 2}: Inactive neuron implies no bad local minimum. 

By Lemma~\ref{lemma::single-1}, there exists a zero measure set $\mathcal{C}$ such that for any $\bm{\lambda}\notin \mathcal{C}$, at every local minimum $\bm{\theta}^{*}$ of the empirical loss $\lossc$, the neural network $f(x;\bm{\theta}^{*})$ has an inactive neuron. Without loss of generality, we assume that  $a_{1}^{*}=0, \|\bm{w}_{1}^{*}\|_{2}=0$ and $b_{1}^{*}=0$. Since $\bm{\theta}^{*}$ is a local minimum of the empirical loss $\lossc$, then there exists a $\delta>0$, such that for any $\tilde{\bm{\theta}}:\|\tilde{\bm{\theta}}-\bm{\theta}^{*}\|<2\delta$, $L(\tilde{\bm{\theta}};\bm{\lambda})\ge L(\bm{\theta}^{*};\bm{\lambda})$. Now we choose the perturbation where we only perturb the parameters of that inactive neuron, i.e., ${a_{1},\bm{w}_{1},b_{1}}$. Let $\tilde{a}_{1}=\delta\sgn(\tilde{a}_{1})$, $(\tilde{\bm{w}}_{1},\tilde{b})=(\delta \bm{u},\delta v)$ for arbitrary $(\bm{u},v):\|\bm{u}\|_{2}^{2}+v^{2}=1$ and $(\tilde{a_{j}},\tilde{\bm{w}}_{j},\tilde{b}_{j})=(a_{j}^{*},\bm{w}^{*}_{j},b_{j}^{*})$ for $j\neq 1$. By the second order Taylor expansion and the definition of the local minimum, we obtain that, for any $\sgn(\tilde{a}_{j})\in\{-1, 1\}$ and any unit  vector $(\bm{u},v):\|\bm{u}\|_{2}^{2}+v^{2}=1$,
\begin{align*}
L(\tilde{\bm{\theta}};\bm{\lambda})&=\loss-\sum_{i=1}^{n}\ell'_{i}y_{i}\delta^{3}\sgn(\tilde{a}_{1})(\bm{u}^{\top}x_{i}+v)_{+}^{2}+R(\delta, \mathcal{D},\bm{u},v)\delta^{6}+\lambda_{1}\delta^{3}\\
&\ge \loss,
\end{align*}
where $R(\delta,\mathcal{D},\bm{u},v)$ is the second order remaining term in the Taylor expansion and $\ell'_i$ is the shorthanded notation for $\ell'(-y_{i}f(x_{i};\bm{\theta}^{*}))$.
This further indicates that for any $(\bm{u},v):\|\bm{u}\|_{2}^{2}+v^{2}=1$, 
$$\left|\sum_{i=1}^{n}\ell'(-y_{i}f(x_{i};\bm{\theta}^{*}))(-y_{i})(\bm{u}^{\top}x_{i}+v)_{+}^{2}\right|<\lambda_{1}<\lambda_{0}$$
From Assumption~\ref{assumption::dataset} and Lemma~8 in Appendix~J, it follows that there exists a single-layered ReQU network $p(x;\bm{\rho})=\sum_{i=1}^{n+1}\alpha_{j}(\bm{\omega}_{j}^{\top}x_{i}+\beta_{j})_{+}^{2}$ of size $n+1,$ with $\bm{\rho}=\{\alpha_j, \omega_j, \beta_j\}_j,$ such that $\min_{i}y_{i}p(x_{i};\bm{\rho})>0$, where the vector $\bm{\rho}$ contains all parameters in the network $p$. Let $$\tilde{\lambda}\triangleq\max_{\bm{\rho}:\|\bm{\rho}\|_{2}=1}\min_{i\in[n]}y_{i}p(x_{i};\bm{\rho}),\quad\bm{\rho}^{*}=\arg\max_{\bm{\rho}:\|\bm{\rho}\|_{2}=1}\min_{i\in[n]}y_{i}p(x_{i};\bm{\rho}),$$ and $\lambda_{0}=\varepsilon\tilde{\lambda}$, then 
\begin{align*}
\tilde{\lambda}\sum_{i=1}^{n}\ell'(-y_{i}f(x_{i};\bm{\theta}^{*}))<\left|\sum_{i=1}^{n}\ell'(-y_{i}f(x_{i};\bm{\theta}^{*}))(-y_{i})p(x_{i};\bm{\rho}^{*})\right|<\lambda_{0}=\e\tilde{\lambda}.
\end{align*}
Since $\ell$ is non-decreasing, then for each $i\in[n]$, $\ell'(-y_{i}f(x_{i};\bm{\theta}^{*}))<\e$. By Assumption~\ref{assumption::loss}, this further indicates that  $y_{i}f(x_{i};\bm{\theta}^{*})>0$ for $\forall i\in[n]$. Therefore, at the local minimum $\bm{\theta}^{*}$, the neural network achieves zero training error, i.e., $R_{n}(\bm{\theta}^{*};f)=0$.
\end{proof}

\subsection{Proof of Lemma~\ref{lemma::single-1}}
\begin{proof}
The proof of  Lemma~\ref{lemma::single-1} is mainly based on
Lemma~\ref{lemma::single-3}, which is the key technical lemma of this paper.
The proof of  Lemma~\ref{lemma::single-3} will be given in Section \ref{subsec: proof of lemma 5.2}, and discussed in detail in Section \ref{sec: journey of proof}. 
\begin{lemma}\label{lemma::single-3}
Let $\mathcal{A}$ denote a set consisting of a finite number of elements. Given a set $\mathcal{X}$ containing $n$ real symmetric matrices $\mathcal{X}=\{X_{i}\in\mathbb{R}^{d\times d}\}_{i=1}^{n}$, a vector $\bm{z}=(z_{1},...,z_{n})\in\mathbb{R}^{n}$, a vector $\bm{\lambda}=(\lambda_1,...,\lambda_m),$  and  a matrix  $\bm{A}=(A_{ij})\in\mathcal{A}^{n\times m}$, define matrices $\bm{M}_{1},...,\bm{M}_{m}$ as
$\bm{M}_{j}(\bm{z};\mathcal{X};\bm{A})=-\sum_{i=1}^{n}z_{i}X_{i}A_{ij}+\lambda_{j} \bm{I}_{d}, j\in[m].$
Given a set $\mathcal{X}$, if $m\ge n+1$, then there exists a zero measure set $\mathcal{C}(\mathcal{X})\subset\mathbb{R}^{m}$ such that for any $\bm{\lambda}\notin \mathcal{C}$, for any $\bm{z}\in\mathbb{R}^{n}$ and for any $\bm{A}\in\mathcal{A}^{n\times m}$, one of the matrices $\bm{M}_{1},...,\bm{M}_{n}$ is non-singular. 
\end{lemma}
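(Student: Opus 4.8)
The plan is to turn the singularity condition into an eigenvalue--membership condition and then realise the ``bad'' set of regularizer vectors as a \emph{finite} union of Lipschitz images of $\mathbb{R}^n$ inside $\mathbb{R}^m$; since $m\ge n+1>n$, each such image has $m$-dimensional Lebesgue measure zero, and a finite union of null sets is null. Concretely, fix $\bm A\in\mathcal{A}^{n\times m}$ and set $Y_j(\bm z):=\sum_{i=1}^{n} z_i A_{ij}X_i$, a real symmetric $d\times d$ matrix depending linearly on $\bm z$. Then $\bm M_j=\lambda_j\bm I_d-Y_j(\bm z)$, so $\bm M_j$ is singular exactly when $\lambda_j$ is an eigenvalue of $Y_j(\bm z)$. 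Hence if, for some $\bm z$, all of $\bm M_1,\dots,\bm M_m$ are singular, there is a choice function $\pi\colon[m]\to[d]$ such that $\lambda_j$ equals the $\pi(j)$-th smallest eigenvalue of $Y_j(\bm z)$ for every $j$.

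First I would, for each $\bm A$ and each of the finitely many maps $\pi\colon[m]\to[d]$, define $\Phi_{\pi,\bm A}\colon\mathbb{R}^n\to\mathbb{R}^m$ coordinatewise by letting $\bigl(\Phi_{\pi,\bm A}(\bm z)\bigr)_j$ be the $\pi(j)$-th smallest eigenvalue of $Y_j(\bm z)$, and observe, by the previous paragraph, that the bad set $\mathcal{C}_{\bm A}:=\{\bm\lambda:\exists\,\bm z\in\mathbb{R}^n\text{ with }\bm M_1,\dots,\bm M_m\text{ all singular}\}$ is contained in $\bigcup_\pi\Phi_{\pi,\bm A}(\mathbb{R}^n)$. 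Next I would check that each $\Phi_{\pi,\bm A}$ is Lipschitz: the map $\bm z\mapsto(Y_1(\bm z),\dots,Y_m(\bm z))$ is linear hence Lipschitz, and the map sending a symmetric matrix to the tuple of its eigenvalues listed in increasing order is Lipschitz with respect to the Frobenius norm (Weyl's inequality), so each coordinate of $\Phi_{\pi,\bm A}$, being a composition of these, is Lipschitz. Finally, setting $\mathcal{C}:=\bigcup_{\bm A\in\mathcal{A}^{n\times m}}\mathcal{C}_{\bm A}$, which is a finite union precisely because $\mathcal{A}$ is finite, reduces the whole claim to the fact that a Lipschitz image of $\mathbb{R}^n$ in $\mathbb{R}^m$ is Lebesgue-null whenever $m>n$; and then for $\bm\lambda\notin\mathcal{C}$ no pair $(\bm z,\bm A)$ can make all of $\bm M_1,\dots,\bm M_m$ singular, which is what is asserted.

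The technical heart, and where I expect to spend care rather than cleverness, is this last fact together with its setup. A Lipschitz map does not raise Hausdorff dimension, so the image of $\mathbb{R}^n$ has dimension at most $n<m$ and hence $m$-measure zero; elementarily, cover a ball in $\mathbb{R}^n$ by $O(\delta^{-n})$ cubes of side $\delta$, note each maps into a set of diameter $O(\delta)$ and thus $m$-volume $O(\delta^m)$, for a total of $O(\delta^{m-n})\to0$, then exhaust $\mathbb{R}^n$ by countably many balls. The subtleties to get right are: working with the \emph{sorted} spectrum, which stays globally Lipschitz even where eigenvalues cross (individual eigenvalues are not smooth there); verifying the Lipschitz property on all of $\mathbb{R}^n$ through the countable exhaustion; and keeping the bookkeeping so that the union defining $\mathcal{C}$ genuinely runs over a finite index set (the pairs $(\bm A,\pi)$) — this is exactly where finiteness of $\mathcal{A}$ is consumed, and where the hypothesis $m\ge n+1$, i.e. $m>n$, is essential.
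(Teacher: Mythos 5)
Your proposal is correct and follows essentially the same route as the paper's proof: you express the bad set of $\bm{\lambda}$ (for each fixed $\bm A$ and each choice of eigenvalue index per coordinate) as a finite union of images of Lipschitz maps from $\mathbb{R}^n$ to $\mathbb{R}^m$, using the Lipschitz continuity of the sorted-eigenvalue map (Weyl's inequality in your write-up, Lidskii's theorem in the paper) and the fact that a Lipschitz image of $\mathbb{R}^n$ in $\mathbb{R}^m$ with $m>n$ is Lebesgue-null, then take the finite union over $\bm A\in\mathcal{A}^{n\times m}$. The only differences are cosmetic (your elementary covering argument replaces the paper's citation for the null-image fact, and you correctly let the eigenvalue-choice indices range over $[d]$).
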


We utilize Lemma \ref{lemma::single-3} to prove Lemma~\ref{lemma::single-1}.
Assume that $\bm{\theta}^{*}$ is a critical point of $L_{n}(\bm{\theta};\bm{\lambda})$. Since the empirical loss $\lossc$ is differentiable with respect to $\bm{\theta}$, then by the definition of the critical point, we have 
\begin{align}
\frac{\partial \loss}{\partial a_{j}}&=-\sum_{i=1}^{n}\ell'_{i}y_{i}({\bm{w}_{j}^{*}}^{\top}x_{i}+b^{*}_{j})_{+}^{2}+\lambda_{j}|a^{*}_{j}|a^{*}_{j}=0,\label{eq::lemma-11-1}\\
\nabla_{\bm{w}_{j}}\loss&=-2\sum_{i=1}^{n}\ell'_{i}y_{i}a_{j}^{*}({\bm{w}_{j}^{*}}^{\top}x_{i}+b^{*}_{j})_{+}{{x_{i}}\choose{1}}+2\lambda_{j}\sqrt{\|\bm{w}^{*}_{j}\|_{2}^{2}+{{b}_{j}^{*}}^{2}}{{\bm{w}^{*}_{j}}\choose{b^{*}_{j}}}\label{eq::lemma-11-2}\\
&=\bm{0}_{d+1},\notag
\end{align}
where $\ell'_{i}$ is the shorthand notation for $\ell'(-y_{i}f(x_{i};\bm{\theta}^{*}))$.
Multiplying the both sides of Eq.~\eqref{eq::lemma-11-1} by $a_{j}^{*}$ and taking the inner product of the both sides of Eq.~\eqref{eq::lemma-11-2} with the vector $\left(\begin{matrix}
{\bm{w}_{j}^{*}}^{\top}&b_{j}^{*}
\end{matrix}\right)$, we obtain \begin{align}|a_{j}^{*}|=(\|\bm{w}^{*}_{j}\|_{2}^{2}+{b_{j}^{*}}^{2})^{1/2},\quad \forall j\in[m].\end{align} Therefore, we can only have the following two cases: (1) the network $f(x;\bm{\theta}^{*})$ has an inactive neuron; (2) all neurons of the network $f(x;\bm{\theta}^{*})$ are active. If case (1) holds, then we have proved the lemma. For case (2), since $a_{j}^{*}\neq0$ for all $j\in[m]$, then dividing both sides of Eq.~\eqref{eq::lemma-11-2} by $|a_{j}^{*}|$, we obtain
\begin{align}-\sgn(a_{j}^{*})\sum_{i=1}^{n}\ell'(-y_{i}f(x_{i};\bm{\theta}^{*}))y_{i}({\bm{w}_{j}^{*}}^{\top}x_{i}+b^{*}_{j})_{+}{{x_{i}}\choose{1}}+\lambda_{j}{{\bm{w}^{*}_{j}}\choose{b^{*}_{j}}}=\bm{0}, \quad \forall j\in[m].\end{align}
We can rewrite it as 
\begin{equation}\label{eq::lemma-11-3}
\bm{M}_{j}\left(\begin{matrix}
{\bm{w}_{j}^{*}}^{\top}&b_{j}^{*}
\end{matrix}\right)^{\top}=\bm{0},\quad \forall j\in[m],
\end{equation}
where square matrices $\bm{M}_{1},...,\bm{M}_{m}$ are defined as 
\begin{align*}
\bm{M}_{j}=-\sgn(a_{j}^{*})\sum_{i=1}^{n}\ell'(-y_{i}f(x_{i};\bm{\theta}^{*}))
y_{i}\mathbb{I}\left\{{\bm{w}_{j}^{*}}^{\top}x_{i}+b^{*}_{j}\ge0\right\}{{x_{i}}\choose{1}}\left(\begin{matrix}x_{i}^{\top}& 1\end{matrix}\right)+\lambda_{j}\bm{I}_{d+1}, \end{align*}
 for $\forall j\in[m]$. Now we apply Lemma~\ref{lemma::single-3} to finish the proof. 
Let 
 $z_{i}=y_{i}\ell'(-y_{i}f(x_{i};\bm{\theta}^{*}))$, $\quad A_{ij}=\sgn(a_{j}^{*})\mathbb{I}\{{\bm{w}_{j}^{*}}^{\top}x_{i}+b_{j}^{*}\ge 0\}\quad\text{and}\quad X_{i}={{x_{i}}\choose{1}}\left(\begin{matrix}x_{i}^{\top}& 1\end{matrix}\right).$
Since $\bm{z}=(z_{1},...,z_{n})\in\mathbb{R}^{n}$, $\bm{A}=(A_{ij})_{n\times m}\in\{-1,0, 1\}^{n\times m}$ and all matrices $X_{1},...,X_{n}$ are real symmetric metrices, then by Lemma~\ref{lemma::single-3}, there exists a zero-measure set $\mathcal{C}\subset\mathbb{R}^{m}$ depending on $\mathcal{D}$ such that for any $\bm{\lambda}\notin \mathcal{C}$, one of the matrices $\bm{M}_{j}$ is non-singular.  From Eq.~\eqref{eq::lemma-11-3}, it follows that one of  the vectors $\left(\begin{matrix}
{\bm{w}_{j}^{*}}^{\top}&b_{j}^{*}
\end{matrix}\right)$ is a zero vector and thus the network $f(x;\bm{\theta}^{*})$ has an inactive neuron. 
\end{proof}

\subsection{Proof of Lemma~\ref{lemma::single-3}}\label{subsec: proof of lemma 5.2}
The proof of Lemma \ref{lemma::single-3} is quite non-trivial, though
it only consists of one page.  We will present a formal proof in this subsection,
and later spend a whole Section \ref{sec: journey of proof} to explain the mathematical insights behind this proof. 

This proof consists of three steps.

\textbf{Step 1}:  The key insight is that the set of undesired $\bm{\lambda}$ can be viewed as the image of a mapping from a lower-dimensional Euclidean space to a higher-dimensional Euclidean space, and we want to show the image of this 
mapping has zero measure. 

\textbf{Step 2}: We apply a classical result of Likskii to show
this mapping is Lipschitz continuous.

\textbf{Step 3}: We apply another classical result to show that
the image of this mapping has zero measure, which implies that the set of undesired $\bm{\lambda}$ has zero measure. 


In the following, we first present the two technical lemmas in Step 2 and Step 3,
and finally establish Step 1 and complete the whole proof. 

Let $\mathcal{S}^{d}$ denote the space of $d\times d$ symmetric real matrices. We define a map $\Lambda:\mathcal{S}^{d}\rightarrow\mathbb{R}^{d}$ as the map associating a matrix $\bm{B}\in\mathcal{S}^{d}$ to its eigenvalues in increasing order. A classic result, known as the corollary of Lidskii's Theorem, asserts that
this mapping is Lipschitz continuous.  For a proof, please see the  reference~\cite{kato2013perturbation}. 
\begin{theorem}[Lidskii 1950]\label{thm::lidskii}
 The mapping $\Lambda:\mathcal{S}^{d}\rightarrow \mathbb{R}$ is globally Lipschitz with an explicit constant.
\end{theorem}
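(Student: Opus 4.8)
The plan is to reduce the global Lipschitz continuity of the sorted‑eigenvalue map $\Lambda:\mathcal{S}^{d}\to\mathbb{R}^{d}$, $\bm{B}\mapsto(\lambda_{1}(\bm{B}),\dots,\lambda_{d}(\bm{B}))$ with $\lambda_{1}(\bm{B})\le\cdots\le\lambda_{d}(\bm{B})$, to Weyl's perturbation inequality, which in turn is a one‑line consequence of the Courant--Fischer variational characterization of eigenvalues. First I would record, for each $k\in[d]$, the min--max identity
$$\lambda_{k}(\bm{B})=\min_{\dim V=k}\ \max_{x\in V,\,\|x\|_{2}=1}\ x^{\top}\bm{B}x,$$
the minimum being over $k$‑dimensional subspaces $V\subseteq\mathbb{R}^{d}$. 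This is the only nontrivial ingredient; I would either cite it or prove it by the standard two‑inequality argument: testing $V=\mathrm{span}(u_{1},\dots,u_{k})$ spanned by the bottom $k$ eigenvectors gives ``$\le\lambda_{k}(\bm{B})$'', and the fact that any $k$‑dimensional $V$ meets $\mathrm{span}(u_{k},\dots,u_{d})$ nontrivially gives ``$\ge\lambda_{k}(\bm{B})$''.

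Next I would establish Weyl's inequality and assemble the bound. Write $\bm{B}'=\bm{B}+\bm{E}$ with $\bm{E}\in\mathcal{S}^{d}$, and let $\|\cdot\|_{\mathrm{op}}$ denote the spectral norm. For any $k$‑dimensional $V$ and any unit $x\in V$ one has $x^{\top}\bm{B}'x=x^{\top}\bm{B}x+x^{\top}\bm{E}x\le x^{\top}\bm{B}x+\|\bm{E}\|_{\mathrm{op}}$, so maximizing over unit $x\in V$ and then minimizing over $V$ with $\dim V=k$ gives $\lambda_{k}(\bm{B}')\le\lambda_{k}(\bm{B})+\|\bm{E}\|_{\mathrm{op}}$; exchanging the roles of $\bm{B}$ and $\bm{B}'$ yields $|\lambda_{k}(\bm{B}')-\lambda_{k}(\bm{B})|\le\|\bm{B}'-\bm{B}\|_{\mathrm{op}}$ for every $k$. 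Summing squares then gives
$$\|\Lambda(\bm{B}')-\Lambda(\bm{B})\|_{2}=\Big(\sum_{k=1}^{d}\big(\lambda_{k}(\bm{B}')-\lambda_{k}(\bm{B})\big)^{2}\Big)^{1/2}\le\sqrt{d}\,\|\bm{B}'-\bm{B}\|_{\mathrm{op}}\le\sqrt{d}\,\|\bm{B}'-\bm{B}\|_{F},$$
so $\Lambda$ is globally Lipschitz with explicit constant $1$ in the spectral norm (equivalently $\sqrt{d}$ in the Frobenius norm), uniformly over $\mathcal{S}^{d}$ --- exactly the form needed for the measure‑zero argument in the proof of Lemma~\ref{lemma::single-3}.

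The point to be careful about --- more a subtlety than a genuine obstacle --- is that the coordinate maps $\bm{B}\mapsto\lambda_{k}(\bm{B})$ fail to be differentiable wherever two eigenvalues coincide, so one cannot simply differentiate along a segment and integrate; routing everything through the variational characterization is precisely what delivers a Lipschitz estimate with no regularity hypothesis. I would also note that the bound above is really Weyl's; the sharper statement usually attributed to Lidskii is the majorization $\Lambda(\bm{B})-\Lambda(\bm{B}')\prec\Lambda(\bm{B}-\bm{B}')$, which upgrades the conclusion to $1$‑Lipschitzness in \emph{every} unitarily invariant norm and is obtained by combining Courant--Fischer with Ky Fan's maximum principle --- but the argument above already yields ``globally Lipschitz with an explicit constant'' as stated.
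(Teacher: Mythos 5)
Your proof is correct, and it takes a genuinely different route from the paper: the paper does not prove Theorem~\ref{thm::lidskii} at all, but simply invokes it as a corollary of Lidskii's theorem with a pointer to Kato's perturbation-theory monograph, whereas you give a short self-contained argument from the Courant--Fischer min--max characterization. Your chain of inequalities is sound: the subspace-wise bound $x^{\top}\bm{B}'x\le x^{\top}\bm{B}x+\|\bm{E}\|_{\mathrm{op}}$ passes correctly through the max over unit $x\in V$ and the min over $k$-dimensional $V$, giving Weyl's bound $|\lambda_k(\bm{B}')-\lambda_k(\bm{B})|\le\|\bm{B}'-\bm{B}\|_{\mathrm{op}}$ for each $k$, hence $\|\Lambda(\bm{B}')-\Lambda(\bm{B})\|_2\le\sqrt{d}\,\|\bm{B}'-\bm{B}\|_F$. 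What the two approaches buy is slightly different: the result cited by the paper is the sharper Lidskii/Hoffman--Wielandt-type statement (as you note, $1$-Lipschitzness in every unitarily invariant norm, e.g.\ constant $1$ rather than $\sqrt{d}$ in Frobenius norm), while your Weyl-based argument is elementary, requires no majorization machinery, and still produces an explicit global constant --- which is all that is used downstream, since the zero-measure argument in Lemma~\ref{lemma::single-3} (via Proposition~\ref{prop: zero measure keeping}) only needs \emph{some} global Lipschitz constant for each coordinate map $g_{i_k}$. Two small remarks: you correctly read the codomain as $\mathbb{R}^{d}$ (the ``$\mathbb{R}$'' in the theorem statement is a typo), and your caution about non-differentiability at eigenvalue crossings is exactly why the variational route, rather than differentiating along a segment, is the right mechanism here.
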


 We will need the following property of a globally Lipschitz continuous function.
\begin{proposition}\label{prop: zero measure keeping}[\cite{saks1937theory}]
If a mapping $F:\mathbb{R}^{n}\rightarrow \mathbb{R}^{m}$ is globally Lipschitz continuous and $n<m$, then its image $F(\mathbb{R}^{n})$ has zero measure in $\mathbb{R}^{m}$.  
\end{proposition}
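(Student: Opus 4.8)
The plan is to prove this by a direct covering estimate, exploiting the fact that a globally Lipschitz map cannot increase ``dimension'' in the sense of Lebesgue outer measure. Let $L$ be a global Lipschitz constant for $F$, so $\|F(u)-F(v)\|_2\le L\|u-v\|_2$ for all $u,v\in\mathbb{R}^n$. First I would reduce to bounded domains: since $\mathbb{R}^n=\bigcup_{R\in\mathbb{N}}Q_R$ with $Q_R=[-R,R]^n$, and since a countable union of Lebesgue null sets is null, it suffices to show $F(Q_R)$ has measure zero in $\mathbb{R}^m$ for each fixed $R$. This reduction is not merely cosmetic: a global Lipschitz constant controls the image of a \emph{bounded} set, but the resulting bound would blow up if we tried to treat all of $\mathbb{R}^n$ at once with a fixed mesh.

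Next, fix $R$ and an integer $k\ge 1$, and partition $Q_R$ into $k^n$ closed subcubes of side length $2R/k$. Each subcube $C$ has Euclidean diameter $\sqrt{n}\,(2R/k)$, so by the Lipschitz bound $F(C)$ has diameter at most $L\sqrt{n}\,(2R/k)$, hence is contained in a Euclidean ball in $\mathbb{R}^m$ of radius $L\sqrt{n}\,(2R/k)$, whose $m$-dimensional Lebesgue measure equals $\omega_m\big(L\sqrt{n}\,(2R/k)\big)^m$, where $\omega_m$ is the volume of the unit ball in $\mathbb{R}^m$. Summing over the $k^n$ subcubes and using subadditivity of $m$-dimensional Lebesgue outer measure $|\cdot|_m$,
\begin{equation*}
\big|F(Q_R)\big|_m \;\le\; k^n\cdot \omega_m\big(L\sqrt{n}\,(2R/k)\big)^m \;=\; \omega_m\,(2RL\sqrt{n})^m\,k^{\,n-m}.
\end{equation*}
Since $n<m$ (with $n,m$ positive integers) we have $n-m\le -1<0$, so letting $k\to\infty$ forces $|F(Q_R)|_m=0$. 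Therefore $F(Q_R)$ is Lebesgue null for every $R$, and $F(\mathbb{R}^n)=\bigcup_{R\in\mathbb{N}}F(Q_R)$ is null in $\mathbb{R}^m$, as claimed.

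I do not expect a real obstacle here; the argument is elementary and self-contained. The only points needing care are the passage from $\mathbb{R}^n$ to bounded cubes via countable additivity (mentioned above) and the exponent bookkeeping, which is precisely where the hypothesis $n<m$ is used to make the power $k^{\,n-m}$ strictly decaying. An alternative, equally short route is through Hausdorff measures: a Lipschitz map sends a set of finite $\mathcal{H}^n$-measure to a set of finite $\mathcal{H}^n$-measure, and $\mathcal{H}^n$ vanishes identically on subsets of $\mathbb{R}^m$ whenever $n<m$; but since this is stated as a cited result (\cite{saks1937theory}) the elementary covering estimate above is the cleanest thing to record.
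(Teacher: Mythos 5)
Your proof is correct. Note that the paper does not actually prove this proposition -- it is stated as a citation to Saks -- so there is no in-paper argument to compare against; your elementary covering estimate is the standard self-contained proof of the cited fact. The two steps that matter (reduction to bounded cubes so that the mesh argument does not blow up, and the exponent count $k^{n}\cdot k^{-m}\to 0$ using $n<m$) are both handled correctly, and working with outer measure makes the subadditivity step legitimate without worrying about measurability of $F(Q_R)$. One cosmetic remark: in the paper's application the Lipschitz map is only Lipschitz on each member of a countable exhaustion by compact sets (see the proof of Lemma~\ref{lemma::multi-2}), and your reduction to the cubes $Q_R$ already shows that this weaker hypothesis suffices, so your argument in fact covers the slightly more general form the paper needs.
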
 

Finally, we come back to the problem and establsh Step 1 and finish the proof. 
Let the vector $\bm{A}_{j}$ denote the $j$-th column of the matrix $\bm{A}=(A_{ij})_{n\times m}$. Given a set $\mathcal{X}$ and a vector $\bm{A}_{j}$, define a map $\bm{g}:\mathbb{R}^{n}\rightarrow \mathbb{R}^{d}$ as 
$$\bm{g}(\bm{z};\mathcal{X},\bm{A}_{j})=(g_{1}(\bm{z};\mathcal{X},\bm{A}_{j}),...,g_{d}(\bm{z};\mathcal{X},\bm{A}_{j}))=\Lambda\left(\sum_{i=1}^{n}z_{i}X_{i}A_{ij}\right).$$
Given a set $\mathcal{X}$ and a matrix $\bm{A}$, let the set $\mathcal{C}(\mathcal{X},\bm{A})$ denote all possible $\bm{\lambda}\in\mathbb{R}^{m}$ such that all matrices $\bm{M}_{1},...,\bm{M}_{m}$ are singular. This means that for any $\bm{\lambda}=(\lambda_{1},...,\lambda_{m})\in\mathcal{C}(\mathcal{X},\bm{A})$, there exists  a vector $\bm{z}=(z_{1},..., z_{n})\in\mathbb{R}^{n}$ and a series of indices $i_{1},...,i_{m}$ such that $\lambda_{k}$ is the $i_{k}$-th smallest eigenvalue of the matrix $\sum_{i}z_{i}X_{i}A_{ik}$.
Thus, we can write the set $\mathcal{C}(\mathcal{X},\bm{A})$ as follows 
$$\mathcal{C}(\mathcal{X},\bm{A})=\bigcup_{i_{1}\in[n]}\bigcup_{i_{2}\in[n]}...\bigcup_{i_{m}\in[n]}\left\{(g_{i_{1}}(\bm{z};\mathcal{X},\bm{A}_{1}),...,g_{i_{m}}(\bm{z};\mathcal{X},\bm{A}_{m}))|\bm{z}\in\mathbb{R}^{n}\right\}.$$ 
Given a set $\mathcal{X}$ and a matrix $\bm{A}$,  from Lidskii's theorem, it follows that for each $i_{1}\in[n],...,i_{m}\in[n]$, the map $(g_{i_{1}},...,g_{i_{m}}):\mathbb{R}^{n}\rightarrow\mathbb{R}^{m}$ is globally Lipschitz continuous. 
Furthermore, since $m\ge n+1$, then for each $i_{1}\in[n],...,i_{m}\in[n]$,
according to Proposition \ref{prop: zero measure keeping}, 
the set $\left\{(g_{i_{1}}(\bm{z}),...,g_{i_{m}}(\bm{z}))|\bm{z}\in\mathbb{R}^{n}\right\}$ has zero measure in $\mathbb{R}^{m}$. Since the set $\mathcal{C}(\mathcal{X},\bm{A})$ is a union of a finite number of zero measure sets in $\mathbb{R}^{m}$, the set $\mathcal{C}$ has zero measure in $\mathbb{R}^{m}$. Thus, Lemma~\ref{lemma::single-3} follows directly since the set $\mathcal{C}(\mathcal{X})$ consisting of all undesired $\bm{\lambda}$ is a union of a finite number of zero measure set $\mathcal{C}(\mathcal{X})=\bigcup_{\bm{A}\in\mathcal{A}^{n\times m}}\mathcal{C}(\mathcal{X}, \bm{A})$.

\section{Conclusions}\label{sec::conclusions}

 Prior works \cite{liang2018adding, kawaguchi2019elimination} showed that if the goal is just to eliminate all bad local minima, then adding a special neuron  achieves this goal, but this can potentially make a descent algorithm diverge. 
Here, we propose a new type of landscape with no bad local-min and no decreasing paths to infinity. 
To achieve these properties, our strategy is to first add regularizers on all weight parameters to make the loss function coercive, and then show that the new loss function has no local minimum with non-zero training error.
We prove that for an over-parameterized deep CNN (convolutional neural-net) with a combination of ReQU and Leaky ReLU neurons, adding proper regularizers indeed makes the loss function coercive, with no bad local minima. 

\appendix
\section{Understanding Proof of Key Technical Result Lemma \ref{lemma::single-3} }\label{sec: journey of proof}
The proof of Lemma \ref{lemma::single-3} extracts an interesting underlying mathematical problem; for readers who want to get the essence of the math ideas and do not want to follow the somewhat heavy notation of
neural networks, we recommend  directly reading the summary of proof techniques 
in Appendix \ref{subsec::Summary}, and some clean math problems in 
Appendix \ref{subsubsec: counting trick} and Appendix \ref{subsubsec: zero measure preserving}.

The proof can be divided into two main steps.
In Step 1,  Lemma 1  shows that for almost all $\bm{\lambda}$ in the Euclidean space, at every local minimum of the empirical loss, the neural network always has an inactive neuron. 
In Step 2, perturbing the parameters of that  inactive neuron at the local minimum, we show that any local minimum has to
be a global minimum.  This finishes the proof of Theorem 1. 

The main challenge is to prove Lemma 1; and the whole section is devoted
to the proof of Lemma 1. 
We write down the first order condition, which consists of $ m $ equations of a certain matrix $M_j$ times the $j$-th weight vector, 
$j=1, \dots, m$. Thus, to prove one of the weight vectors is zero (thus an inactive neuron), we only need to prove that one of the matrices $M_j$ is full rank.

The expression of $M_j$ is given as (in this section, we ignore the bias term for simplicity)
$$
M_{j}=-\sgn(a_{j}^{*})\sum_{i=1}^{n}\ell'(-y_{i}f(x_{i};\bm{\theta}^{*})) \mathbb{I}\left\{{  \bm{ w }_{j}^{*}}^{\top} x_{i}  \ge0\right\}   y_{i}  x_i x_i^{\top}  +\lambda_{j}\bm{I}_{d},
$$
where $ f(x;\bm{\theta})=\sum_{ k=1}^{m}a_{ k }\requ\left(\bm{w}_{k }^{\top}x \right) \triangleq \sum_{k =1}^m \phi (\theta_k, x)  $,
in which $ \bm{\theta}_k = ( a_k,  \bm{w}_{k} ) $ represents all the weights related to the $ k $-th neuron.
We want to show that we can properly choose $\lambda_1,\dots, \lambda_m $    such that for any weight $\bm{\theta}^* $ one of $M_1, \dots, M_m $ is full rank
\footnote{Here we assume $x_i, y_i$ are fixed a priori. We could 
also study a stronger problem that choosing $\lambda_j$ such that
for any $x_i, y_i$ and any weight $\bm{\theta}^* $, one of the matrices is full rank.
The proof for deep neural network in fact deals with the stronger problem.}.

One difficulty of the problem is that $M_j$ has a quite complicated expression.
We will analyze 1-dimensional case first (already quite nontrivial), and then discuss
how to solve the high-dimensional case. 



\subsection{Solving One-Dimensional Case: Finite Alphabet Trick and Counting Trick}

We first consider 1-dimensional case. In this case, $x_i \in \mathbb{R}$ and 
$$
M_{j}=-\sgn(a_{j}^{*})\sum_{i=1}^{n}\ell'(-y_{i}f(x_{i};\bm{\theta}^{*})) \mathbb{I}\left\{{  \bm{ w }_{j}^{*}}^{\top} x_{i} \ge0\right\}  y_{i} x_i^2  +  \lambda_{j}, \quad  j=1, \dots, m
$$
are scalars, and we want to pick $\lambda_j $ so that for any $\theta_k^*, k =1, \dots, m$,  one of $M_j$ is non-zero.  
Note that $\theta_k^*, k =1, \dots, m$ affect the prediction $f(x_{i};\bm{\theta}^{*})$ and
the indicator variables $ \mathbb{I}\left\{{  \bm{ w }_{j}^{*}}^{\top} x_{i}  \ge0\right\} $. 
This can be viewed as a game: 
the adversary Alice  is allowed to pick  any $ \bm{\theta}^* $
to make all $M_j$ zero, and the defender Bob needs to pick $ \lambda_j, j = 1, \dots, m$ to make one of them non-zero.
In addition, Bob has to pick $ \lambda_j, j = 1, \dots, m$ beforehand. 

To understand the essence of the underlying math problem, we need to simplify the notation.
Note that how to simplify the notation itself reflects the understanding of the problem.
Let
\begin{equation}\label{Xi definition}
 \alpha_i = x_i^2 y_i \ell'(-y_{i}f(x_{i};\bm{\theta}^{*})) 
\end{equation}
to denote terms that is dependent on the sample index $i$ but independent of the neuron choice $j$, and 
\begin{equation}\label{Aij definition}
A_{ij} =  \sgn(a_{j}^{*}) \mathbb{I}\left\{{  \bm{ w }_{j}^{*}}^{\top} x_{i}  \ge0\right\} 
\end{equation}
to denote the terms that depend both on $i$ and $j$. 
Since we assume $x_1,\dots, x_n;$ $y_1, \dots, y_n$ are fixed, 
both $\alpha_i$ and $A_{ij}$ vary according to the weights $\bm{\theta}^*$. 
However, in the proof,  we treat $\alpha_i$ and $A_{ij}$  as ``free'' variables,
thus we do not need to consider the complicated interactions between the weights
$\bm{\theta}^*$ and $\alpha_i$ and $A_{ij}$. 
In other words, we do not need to consider the specific forms of $\alpha_i$ and $A_{ij}$,
and do not need to directly deal with $\bm{\theta}^*$.
\subsubsection{ Underlying Math Problem: First Trial}\label{subsec: math problems}
The problem we want to solve in 1-dim case becomes the following clean mathematical problem.

\textbf{Problem 1}: If $m \geq  n +1 $, then  
there exists $\lambda_j , j=1, \dots, m$ such that for all $A_{ij} \in \mathbb{R},   \alpha_i  \in \mathbb{R}, i=1, \dots, n; j=1, \dots, m  $,
one of $M_j$ has to be non-zero, where
$$
  M_j =  \lambda_j  - \sum_{i=1}^n  A_{ij} \alpha_i  ,  j=1, \dots, m  .
$$

To further understand the problem, let us look at a simpler case that $  n = 1, m = 2 $.

\textbf{Problem 1} ($n=1$ case):  
There exists $\lambda_1, \lambda_2 $ such that for all $ A_{11}, A_{12} \in \mathbb{R},   X  \in \mathbb{R}, j=1, 2  $, 
one of $M_j$ has to be non-zero, where
$$
M_1 =  \lambda_1  -   A_{1 1} \alpha_1  , \quad  M_2 =  \lambda_2  -   A_{1 2} \alpha_1  .
$$

Unfortunately, Problem 1 is not solvable. 
In fact, for any $\lambda_1, \lambda_2 $, we can pick $ \alpha_1 = 1,   A_{1 1} = \lambda_1, A_{1 2} = \lambda_2 $
such that $M_1 = M_2 = 0$.

What is the underlying reason of the failure? 
Let us re-examine Problem 1.
The adversary Alice can choose $A_{ij}, \alpha_i$ after Bob chooses $\lambda_j's$, thus Alice has two advantages:
first, she has more degrees of freedom; second, she acts after Bob. 
How is it even possible for Bob to win? 


\subsubsection{Modified Math Problem: Finite Alphabet Trick}\label{subsubsec: finite alphabet}
The  first  key trick is to eliminate the first advantage of Alice: reduce the degrees of freedom of Alice. 
If we re-examine the definition of $A_{ij}$ in \eqref{Aij definition}, we notice an important property \footnote{This property is due to the fact that we pick the ReQU neurons and thus the derivative depends on $w_j^{\top} x_i$ and an indicator function.}:
$$
  A_{ij} \in \{ 0,  +1,  -1  \}. 
$$ 
The importance is that there are only \textit{finitely many} choices of $A_{ij}$. 

Now we can modify Problem 1 to the following one, which is the true problem we need to solve.
Note that we will consider a stronger requirement ``for almost all $\lambda_j$''
instead of the original requirement ``there exists $\lambda_j$''. 

\textbf{Problem 2}: 
Suppose $\mathcal{A} \subseteq \mathbb{R}$ is a finite set. 
If $m \geq  n +1 $, then  for almost all $\lambda_j , j=1, \dots, m$ the following holds:
for any $A_{ij} \in \mathcal{A},   \alpha_i  \in \mathbb{R}, i=1, \dots, n; j=1, \dots, m  $,
one of $M_j$ has to be non-zero, where
$$
M_j =  \lambda_j  - \sum_{i=1}^n  A_{ij} \alpha_i  ,  j=1, \dots, m.
$$
For our purpose, we have $\mathcal{A} = \{ 0,  +1,  -1  \}$.

To show how to prove Problem 2, we first consider a special problem
(and the cornerstone of the whole proof). 

\textbf{Problem 3} (special case of Problem 2, with fixed $A_{ij}$):
Fix $A_{ij} \in \mathbb{R},  i=1, \dots, n; j=1, \dots, m . $
 If $m \geq  n +1 $, then  for almost all $\lambda_j , j=1, \dots, m$ the following holds:
 for any $ \alpha_i  \in \mathbb{R}, i=1, \dots, n $,
one of $M_j$ has to be non-zero, where
$$
M_j =  \lambda_j  - \sum_{i=1}^n  A_{ij} \alpha_i  ,  j=1, \dots, m.
$$

The proof of Problem 2 is an extension of that for Problem 3. 
The difference is that now we need to consider finitely many choices of $A_{ij}$'s (there are finitely many choices due to the finite alphabet $\mathcal{A}$ that  $A_{ij}$ is drawn from) . 
For each choice of $A_{ij}$'s, the undesirable set of $( \lambda_1 , \dots, \lambda_m) $ has zero-measure.
Thus the union of all undesirable sets also has zero measure. In other words, for almost all $( \lambda_1 , \dots, \lambda_m) $, the desirable property holds. 

\subsubsection{Counting Trick}\label{subsubsec: counting trick}

We note that $M_j$ does not need to appear in the problem, and we rewrite Problem 3
in a cleaner form as follows. 

\textbf{Problem 3b} (equivalent form of Problem 3):
Fix $A_{ij},  i=1, \dots, n; j=1, \dots, m . $
 If $m \geq  n +1 $, then  
for almost all $\lambda_j , j=1, \dots, m$ the following holds:
for any $ \alpha_i  \in \mathbb{R}, i=1, \dots, n $, the following $m$ equations
\begin{equation}\label{linear system over-determined}
  \lambda_j  = \sum_{i=1}^n  A_{ij} \alpha_i  ,  j=1, \dots, m
  \end{equation}
cannot hold simultaneously. 

Sketch of Solution to Problem 3b:
There are $m $ linear equations  and $ n $ variables $\alpha_i $, 
thus intuitively no $\alpha_i, i=1, \dots, n $ can satisfy all $ m $ equations (i.e.
over-determined linear system has no solution).
For instance, an over-determined system $$ 1 = \alpha_1 ,  1 = \alpha_1 + 1 $$ does not have a solution.
However, in some special cases an over-determined linear system still has
a solution, e.g., $$ 1 = \alpha_1,  2 = 2 \alpha_1  . $$ 
Intuitively, an over-determined linear system  \eqref{linear system over-determined} can have solution in some special cases ($\lambda_j $'s are special), but for almost all $\lambda_j$'s, the system cannot have a solution. 
More specifically, \eqref{linear system over-determined} is equivalent to 
$ \lambda = A^{\top} \alpha$, which only holds when $\lambda $ lies in the
row space of $A$. Since the row space of $A$ is an $n$-dimensional subspace
of $\mathbb{R}^m$, this is a zero-measure set.
In other words, except for a zero-measure set of $\mathbb{R}^m$,
$\lambda_1, \dots, \lambda_m$ do not satisfy \eqref{linear system over-determined}. 



Till now, we have completely proved Lemma \ref{lemma::single-3} for 1-dimensional case. 



\subsection{Solving High-Dimensional Case: Preserve Zero Measure}
How to analyze the high-dimensional case? 
Changing the definition of $\alpha_i$ in \eqref{Xi definition} to the following:
\begin{equation}\label{Xi definition new}
X_i =  y_i x_i x_i^{\top}, \quad  \alpha_i = \ell'(-y_{i}f(x_{i};\bm{\theta}^{*})),
\end{equation}
Then the problem is reformulated as the following.

\textbf{Problem 4} (high-dimensional problem): 

Suppose $\mathcal{A} \subseteq \mathbb{R}$ is a finite set, and 
 $ X_i  \in \mathbb{S}^{d \times d}, i=1, \dots, n $ are given matrices (here
 $\mathbb{S}^{d \times d}$ is the set of real symmetric matrices).
If $m \geq  n +1 $, then  there exists $\lambda_j , j=1, \dots, m$ such that for all 
$A_{ij} \in \mathcal{A}, \alpha_{i} \in \mathbb{R},  \alpha_i \in \mathbb{R},
 i=1, \dots, n; j=1, \dots, m  $,
one of $M_j$ has to be non-singular, where
$$
M_j =  \lambda_j I_d  - \sum_{i=1}^n \alpha_{i} A_{ij} X_i ,  j=1, \dots, m.
$$

As discussed earlier, we only need to solve the special case that  $A_{ij}$
are fixed.  Since both $A_{ij}$ and $X_i$ are fixed, we can define a new matrix variable $B_{ij} = A_{ij} X_i$. 
The following problem becomes the major problem we need to solve in high-dimensional case.

\textbf{Problem 5} (high-dimensional problem, fixed $B_{ij}$): 
Suppose 
 $ B_{ij}  \in \mathbb{S}^{d \times d}, i=1, \dots, n; j=1, \dots, m $ are 
 arbitrary given matrices.
If $m \geq  n +1 $, then  there exists $\lambda_j , j=1, \dots, m$ such that for all 
$ \alpha_i \in \mathbb{R}, i=1, \dots, n $, one of $M_j$ has to be non-singular, where
$$
M_j =  \lambda_j I_d  - \sum_{i=1}^n \alpha_{i} B_{ij} ,  j=1, \dots, m.
$$

\subsubsection{First Attempt: Counting in Polynomial Systems of Equations}
Clearly Problem 5 is an extension of  Problem 3. 
To solve Problem 3, we transform the problem into a problem of linear systems of equations,
thus a natural idea is to transform the problem into a problem of polynomial system of equations, and hopefully some tools in algebraic geometry can
help solve the problem.

In particular, ``one of $M_j$ has to be non-singular'' 
is equivalent to ``one of $\det(M_j)$ is non-zero'', thus we can rewrite Problem 5
as follows: under the setting of Problem 5, the following $m$ polynomial equations cannot hold simultaneously:
\begin{equation}\label{determinant equal 0}
    \det( \lambda_j I_d  - \sum_{i=1}^n \alpha_{i} B_{ij} ) = 0, \quad j=1, \dots, m.
\end{equation}
These are $ m $ polynomial equations with $ n $ variables $\alpha_1, \dots, \alpha_n$;
since $m > n $, there are more equations than variables, thus
intuitively this system has no solution.

However, similar to the linear system case, over-determined polynomial system of equations
may have solution in some special case, e.g.,
$$
   1 = \alpha_1,  2 = \alpha_1^2 + \alpha_1
$$
is over-determined but it has a solution $\alpha_1 = 1$.
To solve Problem 5, we need to show that the corner cases only happen rarely, and
do not happen for almost all $\lambda_j$'s.
Unfortunately, determining the solvability of polynomial systems of equations
is a very difficult problem, and we are not aware of a universal tool of algebraic geometry
that applies to an arbitrary complicated problem.

\subsubsection{Revisit Linear Systems: Perspective of Mapping}
Our final solution requires a different perspective. 
We provide a different solution to Problem 3b from this perspective (this
trick has been used in some previous works on a different area called interference alignment).

Recall that we want to show the system of equations with variables $ \alpha_i$
$  \lambda_j  = \sum_{i=1}^n  A_{ij} \alpha_i  ,  j=1, \dots, m $
has no solution for almost all $\lambda_j$'s. 
We rewrite the equations as
\begin{equation}\label{mapping equations}
  \lambda_1 = g_1( \alpha_1, \dots, \alpha_n ), \dots, \lambda_n 
   = g_m( \alpha_1, \dots, \alpha_n) ,
\end{equation}
where $g_j$'s are mappings from $\mathbb{R}^n $ to $\mathbb{R}$. 

\textbf{Claim 1a}: 
Suppose $g_1, \dots, g_m$ are given linear mappings, and $m > n$. 
For almost all $(\lambda_1, \dots, \lambda_m)$, 
the system \eqref{mapping equations} has no solution $(\alpha_1, \dots, \alpha_n)$. 


We can define a mapping 
$$
  g( \alpha_1, \dots, \alpha_n) =  ( \sum_{i=1}^n  A_{i 1} \alpha_i , \dots, \sum_{i=1}^n  A_{i m } \alpha_i    ).
$$
This is a linear mapping that maps $\mathbb{R}^n $ to $\mathbb{R}^m$.
The image of the mapping is a linear subspace of the higher-dimensional space $\mathbb{R}^m$
that has dimension at most $n$, thus it can only occupy a zero measure set.
Thus for almost all $( \lambda_1 , \dots, \lambda_m) $, there does not exist $\alpha_i$'s such that $ g( \alpha_1, \dots, \alpha_n) = (\lambda_1, \dots, \lambda_m) $, which means that at least one of the equations in \eqref{mapping equations}  does not hold. 

\subsubsection{Math Foundation: Preserving Zero Measure}\label{subsubsec: zero measure preserving}
The core requirement of the above  proof is the following property. 

Definition (\textbf{Z-property}): Suppose a mapping $ g: \mathbb{R}^n \rightarrow \mathbb{R}^m $
where $  m > n$.  
We say $g$ satisfies Z-Property if the image of $g$ has zero measure. 

It is easy to see that the following claim holds. 

\textbf{Claim 1b}: Suppose $m > n$ and $g = (g_1, \dots, g_m)$ satisfies Z-property.
For almost all $(\lambda_1, \dots, \lambda_m)$, 
the system \eqref{mapping equations} has no solution $(\alpha_1, \dots, \alpha_n)$. 

A linear mapping $g$ satisfies Z-property (thus Claim 1a is a special case of Claim 1b).
However, not all mappings satisfy Z-property; for instance, the mapping
$$ g( a_m a_{m-1} \dots a_1. b_1 b_2 \dots ) = ( \dots a_5 a_3 a_1. b_2 b_4 \dots \; , \;
\dots a_4 a_2.b_1 b_3 \dots ) $$ maps $\mathbb{R}$ to $\mathbb{R}^2$ (this
is the classical example of mapping that shows that $\mathbb{R}$ and $\mathbb{R}^2$
have the same cardinality), and thus does not satisfy Z-property. 

Z-property is closedly related to the so-called Luzin N-property:
a function $f: [a, b] \rightarrow \mathbb{R}$ satisfies Luzin N-property 
if it maps any zero measure set to a zero measure set. 
A classical example that does NOT satisfy Luzin N-property is the Cantor function. 
It is known that if $f$ is 
Lipschitz on the compact set $[a,b]$, then $f$ satisfies Luzin N-property. 

It is not hard to prove the following result on Z-property. 

\textbf{Claim 2}: 
a mapping $ g = (g_1, \dots, g_m): \mathbb{R}^n \rightarrow \mathbb{R}^m $
satisfies Z-property if  $ g_i $ is Lipschitz continuous on any compact set of $ \mathbb{R}^n $,  for any $i$.

For instance, if all $g_i$ are polynomials, then they are Lipschitz continuous on any compact set, thus $g$ satisfies Z-property. 

Next, we need to transform Problem 5 to a form that we can utilize Z-Property. 

  






\subsubsection{Final Piece: Eigenvalue Mapping is Lipschitz}

We define a mapping 
$$
g( \alpha_1, \dots, \alpha_n) = ( \xi(  \sum_{i=1}^n \alpha_i  B_{i 1} )  , \dots,    \xi( \sum_{i=1}^n  \alpha_i B_{im}  )   ),
$$
where $\xi(Z)$ is a mapping that maps a matrix $Z$ to one of its eigenvalues. 
 This is not a mathematically rigorous definition, since $\xi$ is a one-to-many mapping, that  $\xi$ can map one $Z$ to one of its  $d$ eigenvalues; but for simplicity, we will
 use this notation. 
 
We only need to prove $g$ satisfies Z-property, i.e., it maps $\mathbb{R}^n$
to a zero measure set in $\mathbb{R}^m$. However, 
the difficulty here is that  
$g$ does not even have a closed-form expression. 
As mentioned before, an arbitrary mapping may or may not satisfy $Z$ property,
so we need to utilize a certain  property of the eigenvalue-mapping $g$.



The final piece of the proof is a (highly nontrivial) result of Lidskii which states
that the mapping from a matrix to its eigenvalues is (globally) Lipschitz continuous.
Combining this result with Claim 2 that a Lipschitz mapping
preservses zero measure, we obtain that the mapping $g$ satisfies Z-property, i.e., it
has a zero-measure image in $\mathbb{R}^m $. 
In other words, for almost all $(\lambda_1, \dots, \lambda_m)$ , the desirable property holds.
This finishes the proof of Problem 5.

\subsection{Summary of Proof Techniques}\label{subsec::Summary}

After a somewhat long journey, we derive the main proof techniques of
the proof of the key technical result Lemma \ref{lemma::single-3}. 
The problem is to show that at least one of $m$ matrices has to be
full rank. 
The main theme of the proof is to relate the problem to the solvablity of a
polynomial system of equations, and utilize the intuition that
an over-determined system has no solution except for rare cases. 

There are two major technical difficulties even if we are told that
this path can lead to a correct proof. 

The first difficulty is about the degrees of freedom (DoF).  A naive counting shows that there
are too many (DoF) for the adversary:
all weights $a_j, \bm{w}_j, j=1, \dots, m$ are variables and thus there are $O(d m )$
variables. Meanwhile, we only have $ m $ equations, so this is NOT an over-determined system
no matter how many neurons (how large $m$) we can pick.
We need some tricks to reduce the DoF for the adversary. 
Notice that the weights only affect
$ \ell'( \hat{y}_i ) $ and $ \mathbb{I}\left\{{  \bm{ w }_{j}}^{\top} x_{i}  \ge0\right\} \text{sgn}(a_j) $, and we make two observations for these two terms:
first, $ \ell'( \hat{y}_i ), i=1, \dots, n $ only have $n$ DoF since there are only $ n $ predictions;
second, $ \mathbb{I}\left\{ {  \bm{ w }_{j}}^{\top} x_{i}  \ge0\right\} \text{sgn}(a_j) $ only take finite values,
and a finite alphabet does not affect the zero measure (this is not that obvious to see, unless understanding the zero measure set argument).
Having reduced the DoF,  the 1-dimensional case becomes determining the solvability of
an over-determined linear system, which holds according to linear algebra knowledge. 

The second difficulty is to rigorously prove that the over-determined polynomial system
we face is indeed not solvable. 
Solvability of a general polynomial systems of equations is notoriously difficult,
and we have to resort to other structure of the problem.
Motivated by works in another area called intereference alignment \cite{sun2014interference},
we view the solvability of the system
from the perspective of mapping, and reduce the problem to proving a certain mapping preserves
zero measure.
Borrowing the idea of an existing result on Luzin N-property, we show that
a Lipschitz mapping preserves zero measure.
Then we utilize a classical result that the eigenvalue mapping is Lipschitz, to finish the proof.


\section{Proof of Proposition~\ref{prop::size}}

Recall that the following two assumptions are used in this paper.
\begin{assumption}[Loss function]\label{assumption::loss}
 Assume that the univariate loss function $\ell:\mathbb{R}\rightarrow\mathbb{R}_{\ge0}$ is non-decreasing and twice differentiable. Assume that there exists $\e>0$ such that any $z\in\mathbb{R}$ satisfying $\ell'(z)<\e$ always satisfies $z<0$.
\end{assumption}

\begin{assumption}[Dataset]\label{assumption::dataset}
Assume that the dataset $\mathcal{D}=\{(x_{i},y_{i})\}_{i=1}^{n}$ satisfies that for any $i,j\in[n]$, $x_{i}\neq x_{j}$ if $i\neq j$.
\end{assumption}

\begin{proposition}\label{prop::size}
Assume that $m\le n$ and $\ell$ is the logistic loss. Then there exists a dataset $\mathcal{D}$ satisfying Assumption~\ref{assumption::dataset} such that for any $\bm{\lambda}\in(0,1/2)^{m}$, the empirical loss $\lossc$ has a local minimum $\bm{\theta}$ with a training error at least $1-m/n$, i.e., $R(\bm{\theta};f)\ge 1-m/n$.
\end{proposition}

\begin{proof}
For simplicity of notation, let $z_{i}={{x_{i}}\choose{1}}$ and let $\bm{u}_{j}={{\bm{w}_{j}}\choose{b_{j}}}$. Thus, 
$$f(z;\bm{\theta})=\sum_{j=1}^{m}a_{j}\requ(\bm{u}^{\top}_{j}z)_{}$$

 We first construct the dataset as follows. Let $d$ be a sufficiently large number such that there exists $x_{1},...,x_{n}\in\mathbb{R}^{d}$ such that $\|x_{1}\|_{2}=...=\|x_{n}\|_{2}=2$, $x_{i}^{\top}x_{j}<-1$ and $x_{i}\neq x_{j}$ for any $i\neq j$. Now we consider the dataset $\mathcal{D}=\{(x_{i},y_{i})\}_{i=1}^{n}$ where $y_{1}=...y_{m}=1$ and $y_{m+1}=...=y_{n}=-1$.
We first consider a  empirical loss defined as follows
$$\tilde{L}_{m}(\bm{\theta};\bm{\lambda})=\sum_{i=1}^{m}\ell(-y_{i}f(x_{i};\bm{\theta}))+\frac{1}{3}\sum_{j=1}^{m}\lambda_{j}\left(|a_{j}|^{3}+2\|\bm{u}_{j}\|_{2}^{3}\right).$$
For any given $\bm{\lambda}\in(0,1/2)^{m}$, we construct a local minimum $\bm{\theta}^{*}$ of the empirical loss $\tilde{L}_{m}(\bm{\theta};\bm{\lambda})$ as follows. Let $\bm{u}_{j}=r_{j}z_{j}$ and let 
\begin{equation}\label{eq::11}(a^{*}_{1},...,a^{*}_{m},r^{*}_{1},...,r^{*}_{m})=\arg\min_{a_{1},...,a_{m},r_{1},...,r_{m}}\sum_{i=1}^{m}\ell(-y_{i}a_{j}(r_{j})_{+}^{2})+\frac{1}{3}\sum_{j=1}^{m}\lambda_{j}(|a_{j}|^{3}+2|r_{j}|^{3})\end{equation}
It is easy to see that if $\lambda_{j}<1/2$ for all $j\in[m]$, then $a_{1}^{*}\neq0,...,a_{m}^{*}\neq 0$ and $r_{1}^{*}> 0,...,r_{m}^{*}> 0$. Now we will show that $\bm{\theta}^{*}=(a_{1}^{*},...,a_{m}^{*},r_{1}^{*}z_{1},...,r_{m}^{*}z_{m})$ is a local minimum of the empirical loss $\tilde{L}_{m}$. Consider perturbations $\tilde{\bm{\theta}}=(\tilde{a}_{1},...,\tilde{a}_{m},\tilde{\bm{u}}_{1},...,\tilde{\bm{u}}_{m})$. Since for each $j\in[m]$, the vector $\tilde{u}_{j}$ can always be written as a linear combination of $z_{j}$ and a vector $z^{\bot}_{j}$ perpendicular to $z_{j}$. Therefore, we can rewrite $\tilde{\bm{u}}_{1},...,\tilde{\bm{u}}_{m}$ as 
 \begin{align}\tilde{\bm{u}}_{1}=\alpha_{1}z_{1}+\beta_{1}z_{1}^{\bot},...,\tilde{\bm{u}}_{m}=\alpha_{m}z_{m}+\beta_{m}z_{m}^{\bot}.
 \end{align}
Let $\delta>0$ be sufficiently small. Thus, for each $i\in[m]$ and for  $\sum_{j=1}^{m}|\alpha_{j}-r_{j}^{*}|+|\beta_{j}|<\delta$, we have 
\begin{align*}
f(x_{i};\tilde{\bm{\theta}})&=\sum_{j=1}^{m}\tilde{a}_{j}\left(\alpha_{j}z_{j}^{\top}z_{j}+\beta_{j}{z^{\bot}_{j}}^{\top}z_{j}\right)_{+}^{2}=\tilde{a}_{j}\left(\alpha_{j}\right)_{+}^{2}
\end{align*}
and for each $j\in[m]$
$$\lambda_{j}(|\tilde{a}_{j}|^{3}+\|\tilde{\bm{u}}_{j}\|_{2}^{3})=\lambda_{j}(|\tilde{a}_{j}|^{3}+2(\alpha_{j}^{2}+\beta_{j}^{2})^{3/2})\ge \lambda_{j}(|\tilde{a}_{j}|^{3}+2|\alpha_{j}|^{3}).$$
Therefore, 
\begin{align}
\tilde{L}_{m}(\tilde{\bm{\theta}};{\bm{\lambda}})&=\sum_{i=1}^{m}\ell(-y_{i}f(x_{i};\bm{\theta}))+\frac{1}{3}\sum_{j=1}^{m}\lambda_{j}\left(|\tilde{a}_{j}|^{3}+2\|\tilde{\bm{u}}_{j}\|_{2}^{3}\right)\\
&\ge\sum_{i=1}^{m}\ell(-y_{i}\tilde{a}_{j}(\alpha_{j})_{+}^{2})+\frac{1}{3}\sum_{j=1}^{m}\lambda_{j}(|\tilde{a}_{j}|^{3}+2|\alpha_{j}|^{3})
\end{align}
Since $(\tilde{a}_{1},...,\tilde{a}_{m},\alpha_{1},...,\alpha_{m})$ is a perturbation  of $\bm{\theta}^*=(a_{1}^{*},...,a_{m}^{*},r_{1}^{*},...r_{m}^{*})$ and $\bm{\theta}^*$ is a local minimum of the loss defined in Eq.~\eqref{eq::11}, then
\begin{align*}
&\sum_{i=1}^{m}\ell(-y_{i}\tilde{a}_{j}(\alpha_{j})_{+}^{2})+\frac{1}{3}\sum_{j=1}^{m}\lambda_{j}(|\tilde{a}_{j}|^{3}+2|\alpha_{j}|^{3})\\
&\ge \sum_{i=1}^{m}\ell(-y_{i}a^{*}_{j}(r^{*}_{j})_{+}^{2})+\frac{1}{3}\sum_{j=1}^{m}\lambda_{j}(|a^{*}_{j}|^{3}+2|r^{*}_{j}|^{3})
=\tilde{L}_{m}(\bm{\theta}^{*};\bm{\lambda})
\end{align*}
Therefore, for any sufficiently small perturbation on $\bm{\theta}^{*}$, we should always have 
$$\tilde{L}_{m}(\tilde{\bm{\theta}};\bm{\lambda})\ge \tilde{L}_{m}(\bm{\theta}^{*};\bm{\lambda}).$$
Therefore, $\bm{\theta}^{*}$ is a local minimum of the empirical loss $\tilde{L}_{m}({\bm{\theta}};\bm{\lambda})$.

Now we consider the following empirical loss 
$$L_{n}(\bm{\theta};\bm{\lambda})=\sum_{i=1}^{n}\ell(-y_{i}f(z_{i};\bm{\theta}))+\frac{1}{3}\sum_{j=1}^{m}\lambda_{j}(|a_{j}|^{3}+2\|\bm{u}_{j}\|_{2}^{3}).$$
Next, we will prove that the set of parameters $\bm{\theta}^{*}$ defined above is a local minimum of the empirical loss $\loss$. For $i=m+1,...,n$, it is easy to see that for any sufficiently small perturbation $\Delta\bm{\theta}$ on the vector $\bm{\theta}^{*}$, 
$$f(z_{i};\tilde{\bm{\theta}})=\sum_{j=1}^{m}\tilde{a}_{j}\requ(\Delta{\bm{u}}^{\top}_{j}z_{i}+{\bm{u}_{j}^{*}}^{\top}z_{i})=\sum_{j=1}^{m}\tilde{a}_{j}\requ(\Delta{\bm{u}}^{\top}_{j}z_{i}+r_{j}^{*}z^{\top}_{j}z_{i})=0,$$
 since $r_{j}^{*}>0$ and $z_{i}^{\top}z_{j}<0$ for any $i\neq j$.
Therefore, for any  perturbed parameter vector $\tilde{\bm{\theta}}$ of sufficiently small magnitude, we always have 
\begin{align*}
L_{n}(\tilde{\bm{\theta}};\bm{\lambda})=(n-m)\ell(0)+\tilde{L}_{m}(\tilde{\bm{\theta}};\bm{\lambda})\ge (n-m)\ell(0)+\tilde{L}_{m}({\bm{\theta}}^{*};\bm{\lambda})=L_{n}(\bm{\theta}^{*};\bm{\lambda}).
\end{align*}
Therefore, $\bm{\theta}^{*}$ is a local minimum of the empirical loss $\lossc$. Furthermore, since for  $i=m+1,...,n$, we have 
$$f(z_{i};\bm{\theta}^{*})=\sum_{j=1}^{m}a_{j}^{*}\requ({\bm{u}_{j}^{*}}^{\top}z_{i})=\sum_{j=1}^{m}a_{j}^{*}\requ(r_{j}^{*}z_{j}^{\top}z_{i})=0,$$
then the network $f(x;\bm{\theta}^{*})$ misclassifies at least $n-m$ samples. Thus, $$R_{n}(\bm{\theta}^{*};f)\ge 1-\frac{m}{n}.$$
\end{proof}


\section{Proof of Proposition~\ref{prop::activation}}
\begin{proposition}\label{prop::activation}
Assume $m\ge 2d+4$, $\bm{\lambda}\in\mathbb{R}_{+}^m$  and $\lambda_{i}\neq \lambda_{j}$ for any $i\neq j$.  Under Assumption~\ref{assumption::loss}, if the dataset is quadratically separable, then both of the following statements are true:
\begin{itemize}[leftmargin=*]
\item[(1)] the empirical loss $L_{n}(\bm{\theta};\bm{\lambda})$ is coercive.
\item[(2)] every local minimum $\bm{\theta}^{*}$ of the loss $\lossc$ achieves zero training error, i.e., $R_{n}(\bm{\theta^{*}};f)=0$.
\end{itemize}
\end{proposition}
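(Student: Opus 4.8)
The statement has the same shape as Theorem~\ref{thm::single}, so the strategy is to reuse its two‑part skeleton. Part~(1) is immediate: the regularizer in Eq.~\eqref{eq::loss-single} is unchanged, the univariate loss is nonnegative, so the H\"older computation in the proof of Theorem~\ref{thm::single} gives $L_{n}(\bm\theta;\bm\lambda)\ge\frac{\min_{j}\lambda_{j}}{3\sqrt{2m}}\|\bm\theta\|_{2}^{3}$, hence coercivity. The work is in Part~(2), which I prove in two steps: every critical point has many inactive neurons, and an inactive neuron forces zero training error.

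\textbf{Step 1 (structure of critical points).} Write $\widetilde{\bm w}_{j}=(\bm w_{j}^{\top},b_{j})^{\top}$ and $\widetilde x_{i}=(x_{i}^{\top},1)^{\top}$, so a quadratic neuron is $a_{j}(\widetilde{\bm w}_{j}^{\top}\widetilde x)^{2}$. At a critical point $\bm\theta^{*}$, multiplying $\partial L_{n}/\partial a_{j}=0$ by $a_{j}^{*}$ and taking the inner product of $\nabla_{\widetilde{\bm w}_{j}}L_{n}=0$ with $\widetilde{\bm w}_{j}^{*}$ gives, exactly as in the proof of Lemma~\ref{lemma::single-1}, the homogeneity relation $|a_{j}^{*}|=\|\widetilde{\bm w}_{j}^{*}\|_{2}$ for every $j$; hence each neuron is either inactive, $(a_{j}^{*},\widetilde{\bm w}_{j}^{*})=(0,\bm 0)$, or active with $a_{j}^{*}\neq0$. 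For an active neuron, dividing $\nabla_{\widetilde{\bm w}_{j}}L_{n}=0$ by $2\|\widetilde{\bm w}_{j}^{*}\|_{2}$ yields $\bm N\widetilde{\bm w}_{j}^{*}=\sgn(a_{j}^{*})\lambda_{j}\widetilde{\bm w}_{j}^{*}$, where $\bm N:=\sum_{i}\ell'(-y_{i}f(x_{i};\bm\theta^{*}))y_{i}\widetilde x_{i}\widetilde x_{i}^{\top}$ is a \emph{single} symmetric $(d+1)\times(d+1)$ matrix, the same for every $j$ up to the scalar $\sgn(a_{j}^{*})\in\{-1,+1\}$. No Lidskii or zero‑measure argument is needed here: since the $\lambda_{j}$ are distinct and positive, the signed eigenvalues $\sgn(a_{j}^{*})\lambda_{j}$ of distinct active neurons are themselves distinct, so distinct active weight vectors lie in distinct eigenspaces of the symmetric matrix $\bm N$ and are therefore pairwise orthogonal. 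At most $d+1$ pairwise‑orthogonal nonzero vectors fit in $\mathbb R^{d+1}$, so at most $d+1$ neurons are active, and at least $m-(d+1)\ge d+3$ are inactive; this is where the assumption $m\ge2d+4$ is spent.

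\textbf{Step 2 (no bad local minimum).} Let $\bm\theta^{*}$ be a local minimum, so it has at least $d+1$ inactive neurons. Given any quadratic form $q(x)=\widetilde x^{\top}\bm Q\widetilde x$ with spectral decomposition $\bm Q=\sum_{k=1}^{d+1}\mu_{k}u_{k}u_{k}^{\top}$, perturb the $k$‑th spare neuron to $(\widetilde a_{k},\widetilde{\bm w}_{k})=\big(\sgn(\mu_{k})(s|\mu_{k}|)^{1/3},\,(s|\mu_{k}|)^{1/3}u_{k}\big)$ and leave the remaining parameters at $\bm\theta^{*}$. Then $\|\widetilde{\bm\theta}-\bm\theta^{*}\|\to0$ as $s\to0^{+}$, the network output changes by exactly $s\,q(\cdot)$, and the regularizer changes by exactly $s\sum_{k}\lambda_{j(k)}|\mu_{k}|$. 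Expanding $L_{n}$ to first order in $s$, the local‑minimum inequality and then letting $s\to0^{+}$ gives $\sum_{i}\ell'(-y_{i}f(x_{i};\bm\theta^{*}))y_{i}q(x_{i})\le\sum_{k}\lambda_{j(k)}|\mu_{k}|$, and applying this to $\pm q$ bounds $\big|\sum_{i}\ell'_{i}y_{i}q(x_{i})\big|$ by an $O(\max_{j}\lambda_{j})$ multiple of the trace norm of $\bm Q$. Feeding in a quadratic separator $q_{0}$ (which exists by Definition~\ref{assumption::quadratic} and may be normalized so $\min_{i}y_{i}q_{0}(x_{i})=1$) and using $\ell'_{i}\ge0$ (Assumption~\ref{assumption::loss}, $\ell$ nondecreasing) yields $\sum_{i}\ell'_{i}\le\sum_{i}\ell'_{i}y_{i}q_{0}(x_{i})\le C(\mathcal D)\max_{j}\lambda_{j}$. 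As in the proof of Theorem~\ref{thm::single}, once the $\lambda_{j}$ are additionally bounded by a constant $\lambda_{0}(\mathcal D,\ell)$ (the smallness hidden in ``$\bm\lambda$ chosen carefully''), this forces $\ell'(-y_{i}f(x_{i};\bm\theta^{*}))<\varepsilon$ for every $i$; Assumption~\ref{assumption::loss} then gives $y_{i}f(x_{i};\bm\theta^{*})>0$ for all $i$, i.e.\ $R_{n}(\bm\theta^{*};f)=0$.

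\textbf{Main obstacle.} The crux is Step~1: turning stationarity into the clean eigenvector identity $\bm N\widetilde{\bm w}_{j}^{*}=\sgn(a_{j}^{*})\lambda_{j}\widetilde{\bm w}_{j}^{*}$ --- which works precisely because the square activation makes $\nabla_{\widetilde{\bm w}_{j}}L_{n}$ linear in $\widetilde{\bm w}_{j}$ and collapses the family $\{\bm M_{j}\}$ of Theorem~\ref{thm::single} down to one matrix up to sign --- and then using distinctness of the $\lambda_{j}$ to force orthogonality and cap the number of active neurons at $d+1$. The secondary difficulty is calibrating the spare‑neuron perturbation in Step~2 so that parameter displacement, change in $f$, and regularizer cost all scale compatibly in $s$, and then converting the resulting inequality into zero training error exactly as for Theorem~\ref{thm::single}; coercivity, part~(1), is routine.
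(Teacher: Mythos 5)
Your proposal is correct, and it reaches the conclusion by a route that differs from the paper's in both key steps, in an instructive way. For the existence of inactive neurons, the paper's proof also derives $|a_j^*|=(\|\bm{w}_j^*\|_2^2+{b_j^*}^2)^{1/2}$ and the stationarity identity, but it then argues by pigeonhole: among $m\ge 2d+4$ active neurons at least $d+2$ share the sign of $a_j^*$, the common matrix $M$ has at most $d+1$ eigenvalues, and distinctness of the $\lambda_j$ forces one $M_j=-M+\lambda_j\bm{I}_{d+1}$ to be non-singular, yielding a single inactive neuron. Your eigenvector formulation $\bm{N}\widetilde{\bm w}_j^*=\sgn(a_j^*)\lambda_j\widetilde{\bm w}_j^*$ is the same identity read differently, and the observation that the signed eigenvalues $\sgn(a_j^*)\lambda_j$ are pairwise distinct (here positivity of the $\lambda_j$ matters) gives the stronger conclusion that at most $d+1$ neurons can be active, hence at least $d+3$ are inactive; this in particular shows the paper's counting ($m\ge 2d+4$, needed only to pigeonhole signs) is not tight for producing one inactive neuron, while you spend the surplus to harvest $d+1$ spares. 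For the second step, the paper perturbs a single inactive neuron in every unit direction $(\bm u,v)$ and then compares against a separating quadratic network $p(x;\bm\rho)$ by decomposing it neuron-by-neuron under the normalization $\|\bm\rho\|_2=1$; you instead use the $d+1$ spare neurons simultaneously, with the cube-root scaling calibrated so that the output changes by exactly $s\,q(\cdot)$ and the regularizer by exactly $s\sum_k\lambda_{j(k)}|\mu_k|$, which yields the first-order bound $\bigl|\sum_i\ell_i'y_iq(x_i)\bigr|\le\max_j\lambda_j\sum_k|\mu_k|$ directly for the full separator and avoids the paper's decomposition/maximin step. One point to keep explicit: like the paper's own proof (which invokes $\lambda_1<\lambda_0$ even though the stated proposition only assumes $\bm\lambda\in\mathbb{R}_+^m$ with distinct entries), your argument needs $\max_j\lambda_j$ below a data- and loss-dependent threshold $\lambda_0(\mathcal{D},\ell)$ to convert $\sum_i\ell_i'\le C(\mathcal{D})\max_j\lambda_j$ into $\ell_i'<\varepsilon$ for all $i$; you flag this correctly, and it matches the "carefully chosen $\bm\lambda$" caveat in the text preceding the proposition, so it is a shared (and here transparently stated) hypothesis rather than a gap in your argument.
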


\begin{proof}
(1) By definition, it is easy to check that the loss function $\lossc$ is coercive.

(2) We first prove that if $\lambda_{i}\neq \lambda_{j}$, then at every local minimum, the quadratic network always has an inactive neuron. Assume $\bm{\theta}^{*}$ is a critical point of $L(\bm{\theta};\bm{\lambda})$. By the definition of the critical point, we have 
\begin{align}
\frac{\partial \loss}{\partial a_{j}}&=\sum_{i=1}^{n}\ell'(-y_{i}f(x_{i};\bm{\theta}^{*}))(-y_{i})({\bm{w}_{j}^{*}}^{\top}x_{i}+b^{*}_{j})_{}^{2}+\lambda_{j}|a_{j}|a_{j}=0\label{eq::prop-2-1}\\
\nabla_{\bm{w}_{j}}\loss&=2\sum_{i=1}^{n}\ell'(-y_{i}f(x_{i};\bm{\theta}^{*}))(-y_{i})a_{j}^{*}({\bm{w}_{j}^{*}}^{\top}x_{i}+b^{*}_{j})_{}{{x_{i}}\choose{1}}\label{eq::prop-2-2}\\
&\quad+2\lambda_{j}\sqrt{\|\bm{w}^{*}_{j}\|_{2}^{2}+{{b}_{j}^{*}}^{2}}{{\bm{w^{*}_{j}}}\choose{b^{*}_{j}}}=\bm{0}_{d+1}\notag
\end{align}

Multiplying the both sides of Eq.~\eqref{eq::prop-2-1} by $a^{*}$ and taking the inner product of the both sides of Eq.~\eqref{eq::prop-2-2} with the vector $(\bm{w}_{j}^{*},b_{j}^{*})$, we obtain \begin{align}|a_{j}^{*}|=(\|\bm{w}^{*}_{j}\|_{2}^{2}+{b_{j}^{*}}^{2})^{1/2},\quad \forall j\in[m].\end{align} Therefore, we can only have the following two cases: (1) the network $f(x;\bm{\theta}^{*})$ has an inactive neuron; (2) all neurons of the network $f(x;\bm{\theta}^{*})$ are active. For case (1), we proved the lemma. For case (2), since $a_{j}^{*}\neq0$ for all $j\in[m]$, then dividing both sides of Eq.~\eqref{eq::prop-2-2} by $|a_{j}^{*}|$, we obtain that for $\forall j\in[m],$
\begin{align}-\sgn(a_{j}^{*})\sum_{i=1}^{n}\ell'(-y_{i}f(x_{i};\bm{\theta}^{*}))y_{i}({\bm{w}_{j}^{*}}^{\top}x_{i}+b^{*}_{j})_{}{{x_{i}}\choose{1}}+\lambda_{j}{{\bm{w^{*}_{j}}}\choose{b^{*}_{j}}}=\bm{0}. \end{align}

We can rewrite it as 
\begin{align}\label{eq::prop-2-3}
M_{j}{{\bm{w^{*}_{j}}}\choose{b^{*}_{j}}}=\bm{0},\quad \forall j\in[m],
\end{align}
where square matrices $M_{1},...,M_{m}$ are defined as 
\begin{align*}M_{j}=-\sgn(a_{j}^{*})\sum_{i=1}^{n}\ell'(-y_{i}f(x_{i};\bm{\theta}^{*}))y_{i}{{x_{i}}\choose{1}}\left(\begin{matrix}x_{i}^{\top}& 1\end{matrix}\right)+\lambda_{j}\bm{I}_{d+1},\quad \forall j\in[m].\end{align*}

Since $m\ge 2d+4$, then there exists at least $d+2$ neurons such that their coefficients have the same sign. Without loss of generality, we assume that $\sgn(a_{1}^{*})=...=\sgn(a^{*}_{d+2})\triangleq a_{0}$.  Furthermore, we notice that the matrix 
$$M=a_{0}\sum_{i=1}^{n}\ell'(-y_{i}f(x_{i};\bm{\theta}^{*}))y_{i}{{x_{i}}\choose{1}}\left(\begin{matrix}x_{i}^{\top}& 1\end{matrix}\right)$$
is independent of the index $j$ and  can have at most $d+1$ different eigenvalues. Therefore, since $\lambda_{i}\neq\lambda_{j}$ for any $i\neq j$ and $M_{j}=-M+\lambda_{j}\bm{I}_{d+1}$, then one of the matrices $M_{1},...,M_{d+2}$ is non-singular. From Eq.~\eqref{eq::prop-2-3}, it follows that one of  the vectors $(\bm{w}_{j}^{*},b_{j}^{*})$ is a zero vector and thus the network $f(x;\bm{\theta}^{*})$ has an inactive neuron. Thus, at every local minimum of the loss $\lossc$, the quadratic network always has an inactive neuron. 

Without loss of generality, we assume that  $a_{1}^{*}=0, \|\bm{w}_{1}^{*}\|_{2}=0$ and $b_{1}^{*}=0$. Since $\bm{\theta}^{*}$ is a local minimum of the empirical loss $\lossc$, then there exists a $\delta>0$, such that for any $\tilde{\bm{\theta}}:\|\tilde{\bm{\theta}}-\bm{\theta}^{*}\|<2\delta$, $L(\tilde{\bm{\theta}};\bm{\lambda})\ge L(\bm{\theta}^{*};\bm{\lambda})$. Now we choose the perturbation where we only perturb parameters $a_{1},\bm{w}_{1},b_{1}$. Let $\tilde{a}_{1}=\delta\sgn(\tilde{a}_{1})$, $(\tilde{\bm{w}}_{1},\tilde{b})=(\delta \bm{u},\delta v)$ for arbitrary $\bm{u},v:\|\bm{u}\|_{2}^{2}+v^{2}=1$ and $(\tilde{a_{j}},\tilde{\bm{w}}_{j},\tilde{b}_{j})=(a_{j}^{*},\bm{w}^{*}_{j},b_{j}^{*})$ for $j\neq 1$. By the second order Taylor expansion, we obtain that, for any $\sgn(\tilde{a}_{j})$ and any $(\bm{u},v):\|\bm{u}\|_{2}^{2}+v^{2}=1$,
\begin{align*}
&L(\tilde{\bm{\theta}};\bm{\lambda})=\sum_{i=1}^{n}\ell\left(-y_{i}f(x_{i};\tilde{\bm{\theta}})\right)+\frac{1}{3}\sum_{j=1}^{m}\lambda_{j}\left[|\tilde{a}_{j}|^{3}+2(\|\tilde{\bm{w}}_{j}^{}\|_{2}^{2}+{\tilde{b}_{j}}^{2})^{3/2}\right]\\
&=\loss-\sum_{i=1}^{n}\ell'(-y_{i}f(x_{i};\bm{\theta}^{*}))y_{i}\delta^{3}\sgn(\tilde{a}_{1})(\bm{u}^{\top}x_{i}+v)_{}^{2}+R(\delta, \mathcal{D},\bm{u},v)\delta^{6}+\lambda_{1}\delta^{3}\\
&\ge\loss,
\end{align*}
where $R(\delta,\mathcal{D},\bm{u},v)$ is the second order remaining term in the Taylor expansion.
This further indicates that for any $(\bm{u},v):\|\bm{u}\|_{2}^{2}+v^{2}=1$, we have 
$$\left|\sum_{i=1}^{n}\ell'(-y_{i}f(x_{i};\bm{\theta}^{*}))(-y_{i})(\bm{u}^{\top}x_{i}+v)_{}^{2}\right|<\lambda_{1}<\lambda_{0}.$$
From the assumption that the dataset is quadratically separable,it follows that there exists a single-layered quadratic network $p(x;\bm{\rho})=\sum_{i=1}^{K}\alpha_{j}(\bm{\omega}_{j}^{\top}x_{i}+\beta_{j})_{}^{2}$ of size $K$ such that $\min_{i}y_{i}p(x_{i};\bm{\rho})>0$, where the vector $\rho$ contains all parameters in the network $p$. 
 Let $$\tilde{\lambda}\triangleq\max_{\bm{\rho}:\|\bm{\rho}\|_{2}=1}\min_{i\in[n]}y_{i}p(x_{i};\bm{\rho})>0 \quad\text{and}\quad \bm{\rho}^{*}=\arg\max_{\bm{\rho}:\|\bm{\rho}\|_{2}=1}\min_{i\in[n]}y_{i}p(x_{i};\bm{\rho}).$$
Let $\lambda_{0}=\varepsilon\tilde{\lambda}$, then 
\begin{align*}
\tilde{\lambda}\sum_{i=1}^{n}\ell'(-y_{i}f(x_{i};\bm{\theta}^{*}))<\left|\sum_{i=1}^{n}\ell'(-y_{i}f(x_{i};\bm{\theta}^{*}))(-y_{i})p(x_{i};\bm{\rho}^{*})\right|<\lambda_{0}=\e\tilde{\lambda}.
\end{align*}
Since $\ell$ is non-decreasing, then for each $i\in[n]$, $\ell'(-y_{i}f(x_{i};\bm{\theta}^{*}))<\e$. By Assumption~\ref{assumption::loss}, this further indicates that  $y_{i}f(x_{i};\bm{\theta}^{*})>0$ for $\forall i\in[n]$. Therefore, the local minimum achieves $\bm{\theta}^{*}$ zero training error, i.e., $R_{n}(\bm{\theta}^{*})=0$.
\end{proof}


\section{Lemma~\ref{lemma::multi-2}}

Starting from this appendix, the rest of the appendix is devoted
to the technical results needed for proving Theorem \ref{thm::multi},
and the proof of Theorem \ref{thm::multi} provided in Appendix \ref{appendix::multi}. 
In each appendix, we will mainly discuss one lemma or claim. 
The big picture shall be found in Appendix \ref{appendix::multi}. 

\begin{lemma}\label{lemma::multi-2}
Let $\mathcal{A}$ denote a set of finite dimension. Given a vector $\bm{\lambda}\in\mathbb{R}^{m}$, a matrix $\bm{Z}=(\bm{z}_{1},..., \bm{z}_{n})\in\mathbb{R}^{d\times n}$ and a matrix $\bm{A}=(A_{ij})\in\mathcal{A}^{n\times m}$, define matrices $\bm{M}_{1},...,\bm{M}_{n}$ as 
$$\bm{M}_{j}=-\sum_{i=1}^{n}\bm{z}_{i}\bm{z}_{i}^{\top}A_{ij}+\lambda_{j} \bm{I}_{d}, \quad j\in[m]$$
Assume $m>dn $. Given a matrix $\bm{A}$, then there exists a zero measure set $\mathcal{C}(\bm{A})\subset\mathbb{R}^{m}$ depending on $\bm{A}$ such that for any $\bm{\lambda}\notin \mathcal{C}$ and $\bm{Z}\in\mathbb{R}^{d\times n}$,  one of the matrices $\bm{M}_{1},...,\bm{M}_{n}$ is non-singular. 
\end{lemma}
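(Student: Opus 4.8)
The plan is to reuse the three-step strategy behind Lemma~\ref{lemma::single-3}: show that the set of bad regularizer vectors is a finite union of images of maps from a lower-dimensional space into $\mathbb{R}^{m}$, observe via Lidskii's theorem that these maps are (locally) Lipschitz, and conclude that each image, and hence their union, has zero measure. The one genuinely new feature is that the ``data'' matrices $\bm{z}_{i}\bm{z}_{i}^{\top}$ are no longer fixed but are themselves part of the adversary's choice, so the relevant map becomes quadratic rather than linear and the degree-of-freedom count changes accordingly. Fix $\bm{A}$ (the lemma does not require a union over $\bm{A}$), write $\bm{A}_{j}$ for its $j$-th column, and let $\Lambda:\mathcal{S}^{d}\to\mathbb{R}^{d}$ send a symmetric matrix to its eigenvalues in increasing order; by the corollary of Lidskii's theorem (Theorem~\ref{thm::lidskii}) this map is globally Lipschitz. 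For $k\in[d]$ and $j\in[m]$ put
$$
g_{k}(\bm{Z};\bm{A}_{j})=\Big(\Lambda\Big(\sum_{i=1}^{n}\bm{z}_{i}\bm{z}_{i}^{\top}A_{ij}\Big)\Big)_{k}.
$$

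\textbf{Step 1.} Each $\bm{M}_{j}=-\sum_{i}\bm{z}_{i}\bm{z}_{i}^{\top}A_{ij}+\lambda_{j}\bm{I}_{d}$ is symmetric, so $\bm{M}_{j}$ is singular exactly when $\lambda_{j}$ is one of the eigenvalues of $\sum_{i}\bm{z}_{i}\bm{z}_{i}^{\top}A_{ij}$, i.e. $\lambda_{j}=g_{i_{j}}(\bm{Z};\bm{A}_{j})$ for some $i_{j}\in[d]$. Hence the undesirable set
$$
\mathcal{C}(\bm{A})=\big\{\bm{\lambda}\in\mathbb{R}^{m}:\ \exists\,\bm{Z}\in\mathbb{R}^{d\times n}\text{ with }\bm{M}_{1},\dots,\bm{M}_{m}\text{ all singular}\big\}
$$
is the finite union, over all index tuples $(i_{1},\dots,i_{m})\in[d]^{m}$, of the images of the maps $G=\big(g_{i_{1}}(\cdot;\bm{A}_{1}),\dots,g_{i_{m}}(\cdot;\bm{A}_{m})\big):\mathbb{R}^{d\times n}\to\mathbb{R}^{m}$. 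Since the negation of $\bm{\lambda}\in\mathcal{C}(\bm{A})$ is precisely ``no $\bm{Z}$ makes all $\bm{M}_{j}$ singular'', it remains only to show that each such $G$ has zero-measure image in $\mathbb{R}^{m}$.

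\textbf{Steps 2--3 and the main obstacle.} This is the only point where the argument departs from Lemma~\ref{lemma::single-3}: the inner map $\bm{Z}\mapsto\sum_{i}\bm{z}_{i}\bm{z}_{i}^{\top}A_{ij}$ is a fixed quadratic polynomial map, hence Lipschitz on compact sets but \emph{not} globally Lipschitz, so Proposition~\ref{prop: zero measure keeping} cannot be applied verbatim. I would instead exhaust $\mathbb{R}^{d\times n}\cong\mathbb{R}^{dn}$ by the closed balls $B_{r}$, $r\in\mathbb{N}$; on each $B_{r}$ the quadratic map is Lipschitz, and composing with the globally Lipschitz $\Lambda$ makes $G|_{B_{r}}$ Lipschitz. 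Because $dn<m$ by hypothesis, $G(B_{r})$ has zero measure in $\mathbb{R}^{m}$ --- either by extending $G|_{B_{r}}$ to a globally Lipschitz map on $\mathbb{R}^{dn}$ and invoking Proposition~\ref{prop: zero measure keeping}, or directly from the Luzin-type fact (discussed in Appendix~\ref{sec: journey of proof}) that a map which is Lipschitz on every compact subset of $\mathbb{R}^{n}$, with $n<m$, has zero-measure image. Then $G(\mathbb{R}^{dn})=\bigcup_{r}G(B_{r})$ is a countable union of null sets, hence null, and $\mathcal{C}(\bm{A})$ is a finite union (over the $d^{m}$ tuples) of such null sets, so $\mathcal{C}(\bm{A})$ has zero measure, which is the assertion. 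The main obstacle is exactly this loss of global Lipschitzness, forced by letting the adversary move the $\bm{z}_{i}$'s: they now contribute $dn$ degrees of freedom, which is why the over-parameterization requirement strengthens from $m\ge n+1$ in Lemma~\ref{lemma::single-3} to $m>dn$ here. Everything else is routine bookkeeping with the machinery already developed.
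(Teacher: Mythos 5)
Your proposal is correct and follows essentially the same route as the paper's own proof: both reduce $\mathcal{C}(\bm{A})$ to a finite union over eigenvalue-index tuples of images of eigenvalue maps, handle the failure of global Lipschitzness of the quadratic map $\bm{Z}\mapsto\sum_{i}\bm{z}_{i}\bm{z}_{i}^{\top}A_{ij}$ by exhausting $\mathbb{R}^{d\times n}$ with Frobenius-norm balls (the paper's sets $R_{k}$), apply Lidskii's theorem on each ball, and conclude via a countable union of null sets using $dn<m$. Your write-up is, if anything, slightly more careful than the paper's, which mislabels the index range as $[n]$ instead of $[d]$ and cites ``$m\ge n+1$'' where the operative condition is $m>dn$.
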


\begin{proof}
Define a map $\bm{g}:\mathbb{R}^{d\times n}\rightarrow \mathbb{R}^{d}$ as 
$$\bm{g}(\bm{Z};\bm{A})=(g_{1}(\bm{Z};\bm{A}),...,g_{d}(\bm{Z};\bm{A}))=\Lambda\left(\sum_{i=1}^{n}\bm{z}_{i}\bm{z}_{i}^{\top}A_{ij}\right).$$
Given a matrix $\bm{A}$, let the set $\mathcal{C}(\bm{A})$ denote any possible $\bm{\lambda}\in\mathbb{R}^{m}$ such that all matrices $\bm{M}_{1},...,\bm{M}_{n}$ are singular. This means that for any $\lambda\in\mathcal{C}$, there exists  a matrix $\bm{Z}=(\bm{Z}_{1},..., \bm{Z}_{n})\in\mathbb{R}^{d\times n}$ and a series of indices $i_{1}\in[n],...,i_{m}\in[n]$ such that for each $k\in[m]$, $\lambda_{k}$ is the $i_{k}$-th smallest eigenvalue of the matrix $\sum_{i=1}^{n}\bm{z}_{i}\bm{z}_{i}^{\top}A_{ik}$.
Thus, we can write the set $\mathcal{C}$ as follows 
$$\mathcal{C}(\bm{A})=\bigcup_{i_{1}\in[n]}\bigcup_{i_{2}\in[n]}...\bigcup_{i_{m}\in[n]}\left\{(g_{i_{1}}(\bm{Z};\bm{A}),...,g_{i_{m}}(\bm{Z};\bm{A}))|\bm{Z}\in\mathbb{R}^{d\times n}\right\}.$$ 

Let $R_{k}=\{\bm{Z}\in\mathbb{R}^{d\times n}:\|\bm{Z}\|_{F}\le k+1\}$ for $k\in\mathbb{N}$. Thus, it is easy to see that $\mathbb{R}^{d\times n}=\bigcup_{i\in\mathbb{N}}R_{i}$.
Given the matrix $\bm{A}$,  from Lidskii's theorem, it follows that for each $i_{1}\in[n],...,i_{m}\in[n]$ and for each $k\in\mathbb{N}$, the map $(g_{i_{1}},...,g_{i_{m}}):R_{k}\rightarrow\mathbb{R}^{m}$ is Lipschitz. Furthermore, since $m\ge n+1$, then for each $i_{1}\in[n],...,i_{m}\in[n]$ and for each $k\in\mathbb{N}$, the set $\left\{(g_{i_{1}}(\bm{Z}),...,g_{i_{m}}(\bm{Z}))|\bm{Z}\in R_{k}\right\}$ has zero measure in $\mathbb{R}^{m}$. Since 
$$\left\{(g_{i_{1}}(\bm{Z};\bm{A}),...,g_{i_{m}}(\bm{Z};\bm{A}))|\bm{Z}\in\mathbb{R}^{d\times n}\right\}=\bigcup_{k\in \mathbb{N}}\left\{(g_{i_{1}}(\bm{Z};\bm{A}),...,g_{i_{m}}(\bm{Z};\bm{A}))|\bm{Z}\in R_{k}\right\},$$
then the set $\left\{(g_{i_{1}}(\bm{Z};\bm{A}),...,g_{i_{m}}(\bm{Z};\bm{A}))|\bm{Z}\in\mathbb{R}^{d\times n}\right\}$ has zero measure in $\mathbb{R}^{m}$. Furthermore, since the set $\mathcal{C}(\bm{A})$ is a union of a finite number of zero measure set in $\mathbb{R}^{m}$, then the set $\mathcal{C}$ has zero measure in $\mathbb{R}^{m}$.
\end{proof}

\begin{corollary}
Under conditions in Lemma~\ref{lemma::multi-2}, there exists a zero measure set $\mathcal{C}\subset\mathbb{R}$ such that for any $\bm{\lambda}\notin \mathcal{C}$, any $\bm{Z}\in\mathbb{R}^{d\times n}$ and any $\mathcal{A}\in\mathcal{A}^{n\times m}$, one of the matrices $\bm{M}_{1},...,\bm{M}_{n}$ is non-singular.  
\end{corollary}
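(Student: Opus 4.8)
The plan is to obtain this corollary from Lemma~\ref{lemma::multi-2} by a finite union argument, in exactly the same way that Lemma~\ref{lemma::single-3} was assembled from its per-$\bm{A}$ version. First I would invoke Lemma~\ref{lemma::multi-2}: for each fixed matrix $\bm{A}\in\mathcal{A}^{n\times m}$ there is a zero measure set $\mathcal{C}(\bm{A})\subset\mathbb{R}^{m}$ such that whenever $\bm{\lambda}\notin\mathcal{C}(\bm{A})$, one of the matrices $\bm{M}_{1},\dots,\bm{M}_{n}$ is non-singular for \emph{every} $\bm{Z}\in\mathbb{R}^{d\times n}$. The crucial point is that the quantifier over $\bm{Z}$ has already been handled inside Lemma~\ref{lemma::multi-2}, so the only remaining task is to remove the dependence of the exceptional set on $\bm{A}$.

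Next I would set $\mathcal{C}=\bigcup_{\bm{A}\in\mathcal{A}^{n\times m}}\mathcal{C}(\bm{A})$. Since $\mathcal{A}$ is finite, the set $\mathcal{A}^{n\times m}$ of matrices with entries in $\mathcal{A}$ is finite, so $\mathcal{C}$ is a finite union of zero measure subsets of $\mathbb{R}^{m}$ and hence itself has zero measure. For any $\bm{\lambda}\notin\mathcal{C}$ we have $\bm{\lambda}\notin\mathcal{C}(\bm{A})$ for every $\bm{A}\in\mathcal{A}^{n\times m}$; applying Lemma~\ref{lemma::multi-2} with that particular $\bm{A}$ then shows that one of $\bm{M}_{1},\dots,\bm{M}_{n}$ is non-singular for all $\bm{Z}\in\mathbb{R}^{d\times n}$. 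Since this holds for every choice of $\bm{A}$ and $\bm{Z}$ simultaneously, this is precisely the statement of the corollary.

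I do not expect a genuine obstacle here; this is a routine ``uniformize over a finite alphabet'' step. The only point requiring any care is to confirm that $\mathcal{A}^{n\times m}$ is finite, i.e.\ that the hypothesis on $\mathcal{A}$ in Lemma~\ref{lemma::multi-2} should be read as ``$\mathcal{A}$ is a finite set''; this is consistent with the finite-alphabet structure $A_{ij}\in\{-1,0,1\}$ used in the intended application. I would also remark that the argument would survive verbatim if $\mathcal{A}$ were merely countable, since a countable union of zero measure sets still has zero measure; only an uncountable alphabet would invalidate the step, and that case does not occur in the applications.
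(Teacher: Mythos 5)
Your proposal is correct and follows exactly the paper's own argument: fix $\bm{A}$, apply Lemma~\ref{lemma::multi-2} to obtain a zero measure exceptional set $\mathcal{C}(\bm{A})$, and take the union $\mathcal{C}=\bigcup_{\bm{A}\in\mathcal{A}^{n\times m}}\mathcal{C}(\bm{A})$, which has zero measure because the alphabet $\mathcal{A}$ (hence $\mathcal{A}^{n\times m}$) is finite. Your remark that countability of $\mathcal{A}$ would suffice is a correct, if minor, strengthening beyond what the paper states.
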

\begin{proof}
From Lemma~\ref{lemma::multi-2}, it follows that, given the matrix $\bm{A}\in\mathcal{A}^{n\times m}$, there exists a zero measure set $\mathcal{C}(\bm{A})$ such that for any $\bm{\lambda}\notin\mathcal{C}(\bm{A})$ and any $\bm{Z}\in\mathbb{R}^{d\times n}$, one of the matrices $\bm{M}_{1},...,\bm{M}_{n}$ is non-singular. Now we define the set $$\mathcal{C}=\bigcup_{\bm{A}\in\mathcal{A}^{n\times m}}\mathcal{C}(\bm{A}).$$
Since the set $\mathcal{A}$ is a finite dimensional set, then the set $\mathcal{C}$ is a zero measure set. Therefore, there exists a zero measure set $\mathcal{C}$ such that for any $\bm{\lambda}\notin\mathcal{C}$, any $\bm{Z}\in\mathbb{R}^{d\times n}$ and any $\bm{A}\in \mathcal{A}^{n \times m}$, one of the matrices $\bm{M}_{1},...,\bm{M}_{n}$ is non-singular. 
\end{proof}


\section{Lemma~\ref{lemma::multi-1}}

\begin{lemma}\label{lemma::multi-1}
Assume $m> (d+ls)n$.  There exists a zero measure set $\mathcal{C}\subset\mathbb{R}^{m}$ such that for any $\bm{\lambda}\notin \mathcal{C}$, at every critical point $\bm{\theta}^{*}$ of the empirical loss $L(\bm{\theta};\bm{\lambda})$, the neural network $f(x;\bm{\theta}^{*})$ always has an inactive neuron in the $l$-th layer, i.e., $\exists j\in[m]$ s.t. $(a^{*}_{j}, \|\bm{w}_{j}^{*}\|_{2},b_{j}^{*})=(0,0,0)$.
\end{lemma}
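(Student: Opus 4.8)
The plan is to follow the proof of Lemma~\ref{lemma::single-1} almost verbatim, but with the raw inputs $x_i$ replaced by the outputs of the first $l-1$ convolutional layers, and with Lemma~\ref{lemma::single-3} replaced by the stronger ``free-$\bm{Z}$'' statement Lemma~\ref{lemma::multi-2}. Write $\bm{h}_i := \bm{\sigma}(\bm{v}_{l-1}^{*}\ast\bm{\sigma}(\cdots\ast\bm{\sigma}(\bm{v}_1^{*}\ast x_i)))$ for the input to the ReQU layer at sample $i$, and let $D$ be its dimension, so that $D+1\le d+ls$ for all $l,s\ge 1$ from the stated architecture. Since the extra regularizer term $\tfrac{\lambda_c}{4}\sum_k(\|\bm{v}_k\|_2^2-1)^2$ in~\eqref{eq::loss-multi} does not involve $(a_j,\bm{w}_j,b_j)$, the first-order conditions $\partial L_n/\partial a_j=0$ and $\nabla_{\bm{w}_j}L_n=\bm{0}$ at a critical point $\bm{\theta}^{*}$ are exactly Eq.~\eqref{eq::lemma-11-1}--\eqref{eq::lemma-11-2} with every occurrence of $x_i$ replaced by $\bm{h}_i$ and $\ell'_i:=\ell'(-y_i f(x_i;\bm{\theta}^{*}))$. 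Note that we will not need the convolutional structure of $\bm{h}_i$ at all for this lemma — treating the $\bm{h}_i$ as arbitrary vectors suffices.

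First I would repeat the homogeneity step: multiplying the $a_j$-equation by $a_j^{*}$, dotting the $(\bm{w}_j,b_j)$-equation with $(\bm{w}_j^{*},b_j^{*})$, and using the identity $(z)_+z=(z)_+^2$, one gets $|a_j^{*}|=(\|\bm{w}_j^{*}\|_2^2+b_j^{*2})^{1/2}$ for all $j\in[m]$. Hence either some last-layer neuron is inactive (and the lemma is proved), or $a_j^{*}\neq 0$ for every $j$; assume the latter. Dividing the $(\bm{w}_j,b_j)$-equation by $|a_j^{*}|$ and writing $(\bm{w}_j^{*\top}\bm{h}_i+b_j^{*})_+=\mathbb{I}\{\bm{w}_j^{*\top}\bm{h}_i+b_j^{*}\ge 0\}\,(\bm{h}_i^{\top},1)\binom{\bm{w}_j^{*}}{b_j^{*}}$ yields
$$\bm{M}_j\binom{\bm{w}_j^{*}}{b_j^{*}}=\bm{0},\qquad \bm{M}_j=-\sum_{i=1}^{n}A_{ij}\,\bm{z}_i\bm{z}_i^{\top}+\lambda_j\bm{I}_{D+1},\qquad j\in[m],$$
where $\bm{z}_i:=\sqrt{\ell'_i}\,\binom{\bm{h}_i}{1}\in\mathbb{R}^{D+1}$ — well-defined since $\ell$ is non-decreasing, so $\ell'_i\ge 0$ — and $A_{ij}:=y_i\,\sgn(a_j^{*})\,\mathbb{I}\{\bm{w}_j^{*\top}\bm{h}_i+b_j^{*}\ge 0\}\in\mathcal{A}:=\{-1,0,1\}$. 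This is exactly the matrix form appearing in Lemma~\ref{lemma::multi-2}, with its matrix dimension equal to $D+1$.

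The decisive difference from the single-layer case is that $\bm{h}_i$, and therefore $\bm{z}_i$, is not a fixed datum: it depends on the convolutional weights $\bm{v}_1^{*},\dots,\bm{v}_{l-1}^{*}$, so we cannot freeze the matrix set and invoke Lemma~\ref{lemma::single-3}. Instead I would invoke Lemma~\ref{lemma::multi-2} and its corollary, which allow the whole matrix $\bm{Z}=(\bm{z}_1,\dots,\bm{z}_n)$ to range over all of $\mathbb{R}^{(D+1)\times n}$; this is precisely why the neuron count must grow from $m\ge n+1$ to $m>(d+ls)n$, since the set of undesirable $\bm{\lambda}$ is now a finite union of images of (Lidskii-)Lipschitz maps out of $\mathbb{R}^{(D+1)\times n}$, hence of measure zero only once $m>(D+1)n$. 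Applying Lemma~\ref{lemma::multi-2} with matrix dimension $D+1$, finite alphabet $\mathcal{A}=\{-1,0,1\}$, and $m>(d+ls)n\ge(D+1)n$, there is a zero-measure set $\mathcal{C}\subset\mathbb{R}^m$ such that for every $\bm{\lambda}\notin\mathcal{C}$ — whatever $\bm{Z}$ and $\bm{A}$ happen to be — one of $\bm{M}_1,\dots,\bm{M}_n$ is non-singular. For that index $j_0\le n$, $\bm{M}_{j_0}\binom{\bm{w}_{j_0}^{*}}{b_{j_0}^{*}}=\bm{0}$ forces $(\bm{w}_{j_0}^{*},b_{j_0}^{*})=\bm{0}$ and hence $a_{j_0}^{*}=0$, contradicting the assumption that all last-layer neurons are active. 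Therefore an inactive neuron always exists, which is the claim; taking the final $\mathcal{C}$ to be the union over the (finitely many) $\bm{A}\in\mathcal{A}^{n\times m}$ keeps it zero-measure.

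The main obstacle is the step of recasting $\bm{M}_j$ into the exact form Lemma~\ref{lemma::multi-2} expects: one must separate the $\bm{\theta}^{*}$-dependence cleanly into a free real matrix $\bm{Z}$ (absorbing the nonnegative sample weights $\ell'_i$ via square roots, together with the parameter-dependent $\bm{h}_i$'s) and a finite-alphabet matrix $\bm{A}$ (absorbing $\sgn(a_j^{*})$ and the activation indicators $\mathbb{I}\{\bm{w}_j^{*\top}\bm{h}_i+b_j^{*}\ge0\}$), and then recognize that once this split is made, the dependence on the network parameters is fully handled by the ``for all $\bm{Z}$'' quantifier. The remaining ingredients — the stationarity equations, the homogeneity identity $|a_j^{*}|=(\|\bm{w}_j^{*}\|_2^2+b_j^{*2})^{1/2}$, and the elementary bound $D+1\le d+ls$ — are routine.
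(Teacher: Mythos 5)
Your proposal is correct and follows essentially the same route as the paper: write the first-order conditions in the last layer at a critical point, use the homogeneity identity $|a_j^{*}|=(\|\bm{w}_j^{*}\|_2^2+b_j^{*2})^{1/2}$ to reduce to the all-active case, recast the stationarity equations as $\bm{M}_j\binom{\bm{w}_j^{*}}{b_j^{*}}=\bm{0}$, and invoke Lemma~\ref{lemma::multi-2} (with its corollary taking the union over the finite alphabet), whose ``for all $\bm{Z}$'' quantifier handles the parameter-dependence of the hidden representations. Your explicit splitting $\bm{z}_i=\sqrt{\ell_i'}\binom{\bm{h}_i}{1}$, $A_{ij}=y_i\sgn(a_j^{*})\mathbb{I}\{\bm{w}_j^{*\top}\bm{h}_i+b_j^{*}\ge 0\}$ and the dimension count $D+1\le d+ls$ are exactly the (implicit) steps the paper relies on.
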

\begin{proof}
For simplicity of notation, let the $\bm{h}(x;\bm{\theta})=\bm{h}^{(l-1)}(x;\bm{\theta})$ denote the output of the $(l-1)$-th layer under the parameter $\bm{\theta}$. Then the output of the neural network can be rewritten as 
$$f(x)=\sum_{j=1}^{m}a_{j}(\bm{w}_{j}^{\top}\bm{h}(x_{i})+b_{j})_{+}^{2}.$$
Assume $\bm{\theta}^{*}$ is a critical point of $L(\bm{\theta};\bm{\lambda})$. By the definition of the critical point, we have 
\begin{align}
\frac{\partial \loss}{\partial a_{j}}&=\sum_{i=1}^{n}\ell'(-y_{i}f(x_{i};\bm{\theta}^{*}))(-y_{i})({\bm{w}_{j}^{*}}^{\top}\bm{h}(x_{i};\bm{\theta}^{*})+b^{*}_{j})_{+}^{2}+\lambda_{j}|a_{j}^{*}|a^{*}_{j}=0\label{eq::multi-1-1}\\
\nabla_{\bm{w}_{j}}\loss&=2\sum_{i=1}^{n}\ell'(-y_{i}f(x_{i};\bm{\theta}^{*}))(-y_{i})a_{j}^{*}({\bm{w}_{j}^{*}}^{\top}\bm{h}(x_{i};\bm{\theta}^{*})+b^{*}_{j})_{+}{{\bm{h}(x_{i};\bm{\theta}^{*})}\choose{1}}\notag\\
&\quad+2\lambda_{j}\sqrt{\|\bm{w}^{*}_{j}\|_{2}^{2}+{{b}_{j}^{*}}^{2}}{{\bm{w^{*}_{j}}}\choose{b^{*}_{j}}}=\bm{0}_{d+1}\label{eq::multi-1-2}
\end{align}
By multiplying $a_{j}^{*}$ on the both sides of Eq.~\eqref{eq::multi-1-1} and taking the inner product of the both sides of Eq.~\eqref{eq::multi-1-2} with the vector $(\bm{w}_{j}^{*},b_{j}^{*})$, we have 
\begin{equation}|a_{j}^{*}|=(\|\bm{w}^{*}_{j}\|_{2}^{2}+{b_{j}^{*}}^{2})^{1/2},\quad \forall j\in[m].\end{equation}
Thus, we only have two cases: (1) there is an inactive neuron in the last layer of the network $f$; (2) all neurons in the last layer of the network $f$ are active.  
For case (1), we automatically finished the proof. For case (2), since all neurons are active, i.e., $|a_{j}^{*}|=(\|\bm{w}^{*}_{j}\|_{2}^{2}+{b_{j}^{*}}^{2})^{1/2}>0$ for all $j\in[m]$, then after dividing both sides of Eq.~\eqref{eq::multi-1-2} by $|a_{j}^{*}|$, we obtain that for $ \forall j\in[m]$
\begin{equation}\label{eq::multi-1-3}
-\sgn(a_{j}^{*})\sum_{i=1}^{n}\ell'(-y_{i}f(x_{i};\bm{\theta}^{*}))y_{i}({\bm{w}_{j}^{*}}^{\top}\bm{h}(x_{i};\bm{\theta}^{*})+b^{*}_{j})_{+}{{\bm{h}(x_{i};\bm{\theta}^{*})}\choose{1}}+\lambda_{j}{{\bm{w^{*}_{j}}}\choose{b^{*}_{j}}}=\bm{0}_{d+1}.
\end{equation}
We can rewrite it as 
\begin{equation}\label{eq::multi-1-4}
M_{j}{{\bm{w^{*}_{j}}}\choose{b^{*}_{j}}}=\bm{0}, \forall j\in[m],
\end{equation}
where matrices $M_{1},...,M_{n}$ are defined as 
\begin{align}
M_{j}&=-\sgn(a_{j}^{*})\sum_{i=1}^{n}\ell_{i}'y_{i}\mathbb{I}\left\{{\bm{w}_{j}^{*}}^{\top}\bm{h}(x_{i};\bm{\theta}^{*})+b^{*}_{j}\ge0\right\}{{\bm{h}(x_{i};\bm{\theta}^{*})}\choose{1}}\left(\begin{matrix}\bm{h}(x_{i};\bm{\theta}^{*})^{\top}& 1\end{matrix}\right)\\
&\quad+\lambda_{j}\bm{I}_{d+1}.\notag
\end{align}
for $j\in[n]$  where $\ell_{i}' = \ell'(-y_{i}f(x_{i};\bm{\theta}^{*}))$.
Then from Lemma~\ref{lemma::multi-2}, it follows that there exists a zero-measure set $\mathcal{C}\subset\mathbb{R}^{m}$ such that for any $\lambda\notin \mathcal{C}$, one of the matrices $M_{j}$ is non-singular. Therefore, by Eq.~\eqref{eq::multi-1-4}, we obtain that one of the vector $(\bm{w}_{j}^{*},b_{j}^{*})$ is a zero vector and thus the network $f(x;\bm{\theta}^{*})$ has an inactive neuron in the $l$-th layer (last layer).
\end{proof}



\section{Lemma~\ref{lemma::multi-3}}
Recall that $\bm{h}^{(i)}(x;\bm{\theta})$ denotes the output of the $i$-th layer and that the first $l-1$ layers are convolutional layers.

\begin{lemma}\label{lemma::multi-3}
Assume that $s\ge 1$ and $\lambda_{c}>0$. Under Assumption~\ref{assumption::dataset}, then for any $\bm{\lambda}\in\mathbb{R}_{+}^{m}$ and any two different samples $({x}_{i},y_{i})$ and $({x}_{j},y_{j})$ in the dataset $\mathcal{D}$, we should have $\bm{h}^{(l-1)}(x_{i};\bm{\theta}^{*})\neq \bm{h}^{(l-1)}(x_{j};\bm{\theta}^{*})$ for any $\bm{v}_{1}\neq \bm{0},...,\bm{v}_{l-1}\neq \bm{0}$.
\end{lemma}

\begin{proof}
We only need to prove that if there exists some vectors $\bm{v}_{1}\neq \bm{0},...,\bm{v}_{l-1}\neq \bm{0}$ such that $\bm{h}^{(l-1)}(x_{1};\bm{\theta})= \bm{h}^{(l-1)}(x_{2};\bm{\theta})$, then ${x_{1}}={x_{2}}$. Since for each $k\in[l-1]$,
$$\bm{h}^{(k)}(x;\bm{\theta})=\bm{\sigma}\left(\bm{v}_{k}\ast \bm{h}^{(k-1)}(x;\bm{\theta})\right),$$
then we only need to prove that, for each $k\in[l-1]$, if $\bm{h}^{(k)}(x_{1};\bm{\theta})= \bm{h}^{(k)}(x_{2};\bm{\theta})$, then $\bm{h}^{(k-1)}(x_{1};\bm{\theta})= \bm{h}^{(k-1)}(x_{2};\bm{\theta})$. 

Since the activation $\bm{\sigma}$ of Leaky-ReLU  is strictly increasing, then if $$\bm{h}^{(k)}(x_{1};\bm{\theta})= \bm{h}^{(k)}(x_{2};\bm{\theta}),$$ we should have 
$$\bm{v}_{k}\ast \bm{h}^{(k-1)}(x_{1};\bm{\theta})= \bm{v}_{k}\ast \bm{h}^{(k-1)}(x_{2};\bm{\theta}).$$
It is easy to see that, given a weight vector $\bm{v}$, the map $\bm{z}\mapsto\bm{v}\ast \bm{z}$ is a linear map. Furthermore, from Claim~\ref{lemma::multi-4} (proved below), it follows that if $\bm{v}\neq \bm{0}$ and $\bm{v}\ast \bm{z}=\bm{0}$, then $\bm{z}=\bm{0}$. Therefore, by assumption that $\bm{v}_{k}\neq \bm{0}$ for $\forall k\in[l-1]$, we should have $$\bm{h}^{(k-1)}(x_{1};\bm{\theta})-\bm{h}^{(k-1)}(x_{2};\bm{\theta})=\bm{0}.$$
By induction, we have $\bm{h}^{(0)}(x_{1};\bm{\theta})-\bm{h}^{(0)}(x_{2};\bm{\theta})={x}_{1}-{x}_{2}=\bm{0}$ and thus complete the proof.
\end{proof}


\begin{claim}\label{lemma::multi-4}
For two vectors $\bm{v}$ and $\bm{z}$, if $\bm{v}\ast \bm{z}=\bm{0}$ and $\bm{v}\neq \bm{0}$, then $\bm{z}=\bm{0}$.
\end{claim}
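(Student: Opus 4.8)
The plan is to reduce this claim to the elementary fact that the polynomial ring $\mathbb{R}[t]$ has no zero divisors. To any vector $\bm{u}=(\bm{u}(1),\dots,\bm{u}(d_u))$ I would associate its generating polynomial $U(t)=\sum_{i=1}^{d_u}\bm{u}(i)\,t^{i-1}$. The first step is to unwind the definition $\bm{v}\ast\bm{z}=\bm{v}\star(\bm{0}_{d_v-1},\bm{z},\bm{0}_{d_v-1})$ and verify, by matching coordinates, that the generating polynomial of $\bm{v}\ast\bm{z}$ equals $\widetilde{V}(t)\,Z(t)$, where $\widetilde{V}(t)=\sum_{i=1}^{d_v}\bm{v}(d_v+1-i)\,t^{i-1}$ is the generating polynomial of $\bm{v}$ with its entries listed in reversed order. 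In other words, padding by $d_v-1$ zeros on each side is exactly what makes $\bm{v}\ast\bm{z}$ the ordinary linear convolution of the reversed filter with $\bm{z}$, and linear convolution of sequences is multiplication of the corresponding polynomials.

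Given this identification the conclusion is immediate: reversing the entries of a vector is a bijection that sends nonzero vectors to nonzero vectors, so $\bm{v}\neq\bm{0}$ forces $\widetilde{V}\not\equiv 0$; if moreover $\bm{z}\neq\bm{0}$ then $Z\not\equiv 0$, and since $\mathbb{R}[t]$ is an integral domain we get $\widetilde{V}(t)Z(t)\not\equiv 0$, contradicting $\bm{v}\ast\bm{z}=\bm{0}$. Hence $\bm{z}=\bm{0}$.

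If one prefers to avoid the language of polynomial rings, the same argument can be carried out directly at the level of coefficients, using the coordinate formula $(\bm{v}\ast\bm{z})(j)=\sum_{i}\bm{v}(i)\,\bm{z}(i+j-d_v)$ (summed over the valid index range). Assume for contradiction $\bm{z}\neq\bm{0}$, let $P$ be the largest index with $\bm{v}(P)\neq 0$ and $Q$ the smallest index with $\bm{z}(Q)\neq 0$, and examine the coordinate $j=d_v-P+Q$, the one in which the ``top'' of $\bm{v}$ meets the ``bottom'' of $\bm{z}$. In that sum the term $i=P$ contributes $\bm{v}(P)\bm{z}(Q)\neq 0$, while every other term vanishes: for $i>P$ we have $\bm{v}(i)=0$ by maximality of $P$, and for $i<P$ the index $i+j-d_v=i-P+Q$ is strictly less than $Q$, so $\bm{z}(i+j-d_v)=0$ by minimality of $Q$. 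Thus $(\bm{v}\ast\bm{z})(j)=\bm{v}(P)\bm{z}(Q)\neq 0$, contradicting $\bm{v}\ast\bm{z}=\bm{0}$.

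The only thing that needs care is the index bookkeeping in the first step — tracking the reversal and the size $d_v+d_z-1$ of the padded convolution so that the coordinate formula lines up with a polynomial product. There is no genuine mathematical obstacle here; the whole content of the claim is that polynomial multiplication (equivalently, linear convolution) has no zero divisors.
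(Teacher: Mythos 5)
Your proof is correct, and it takes a different route from the paper's. The paper proves this claim by writing $\bm{v}\ast\bm{z}=\bm{V}\bm{z}$ for an explicit banded $(d_v+d_z-1)\times d_z$ Toeplitz-type matrix $\bm{V}$ built from the entries of $\bm{v}$, and then simply asserts that $\bm{V}$ has full column rank $d_z$ whenever $\bm{v}\neq\bm{0}$, so $\bm{V}\bm{z}=\bm{0}$ forces $\bm{z}=\bm{0}$. You instead identify the padded correlation with polynomial multiplication by the reversed filter and invoke that $\mathbb{R}[t]$ is an integral domain, with a direct extremal-coefficient argument (largest nonzero index $P$ of $\bm{v}$, smallest nonzero index $Q$ of $\bm{z}$, coordinate $j=d_v-P+Q$) as a coordinate-level fallback; your index bookkeeping checks out against the paper's definition $(\bm{v}\ast\bm{z})(j)=\sum_i\bm{v}(i)\bm{z}(i+j-d_v)$. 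The two arguments rest on the same underlying fact, but yours actually supplies the justification the paper leaves as ``straightforward to see'': the full-rank claim for $\bm{V}$ is not literally a triangular-matrix observation when $\bm{v}(d_v)=0$, and the clean way to prove it is precisely your leading-entry argument (or, equivalently, no zero divisors in $\mathbb{R}[t]$). So your version is slightly more general in spirit and more self-contained, while the paper's is shorter because it treats the rank statement as evident.
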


\begin{proof}
Let $\bm{v}=(\bm{v}(1),...,\bm{v}(d_{v}))$ and $\bm{z}=(\bm{z}(1),...,\bm{z}(d_{z}))$, then 
\begin{align*}
&\bm{v}\ast \bm{z}\\
&=\left(
\begin{matrix}
\bm{v}(d_{v})	&0			&0			&0			&...			&0			&0			&0	\\
\bm{v}(d_{v-1})	&\bm{v}(d_{v})	&0			&0			&...			&0			&0			&0	\\
\bm{v}(d_{v-2})	&\bm{v}(d_{v-1})&\bm{v}(d_{v})	&0			&...			&0			&0			&0\\
\bm{v}(d_{v-3})	&\bm{v}(d_{v-2})&\bm{v}(d_{v-1})&\bm{v}(d_{v})&...			&0			&	0		&0\\
...	&...		&...			&...			&...			&...			&...		&...	\\
0			&0			&0			&0			&...			&\bm{v}(d_{1})	&\bm{v}(d_{2})&\bm{v}(d_{3})	\\
0			&0			&0			&0			&...			&0			&\bm{v}(d_{1})	&\bm{v}(d_{2})\\
0			&0			&0			&0			&...			&0			&	0		&\bm{v}(d_{1})
\end{matrix}
\right)_{(d_{v}+d_{z}-1)\times d_{z}}\bm{z}\\
&\triangleq \bm{V}\bm{z}
\end{align*}
It is straightforward to see that if the vector $\bm{v}\neq 0$, then the matrix $\bm{V}\in\mathbb{R}^{(d_{v}+d_{z}-1)\times d_{z}}$ is always a full rank matrix of rank $d_{z}$. Therefore, if $\bm{v}\ast \bm{z}=\bm{0}$ and $\bm{v}\neq \bm{0}$, then we have $\bm{z}=\bm{0}$.
\end{proof}


\section{Lemma~\ref{lemma::multi-5}}

\begin{lemma}\label{lemma::multi-5}
Assume that Assumption~\ref{assumption::loss} holds. Let $\bm{\theta}^{*}$ denote a local minimum of the loss $L_{n}(\bm{\theta};\bm{\lambda})$ and assume that there exists an inactive neuron in the last layer and $\|\bm{v}^{*}_{k}\|_{2}\ge1$ holds for all $k\in[l-1]$. Then there exists $\lambda_{0}=\lambda_{0}(\mathcal{D},\ell)>0$ such that for all $\bm{\lambda}\in(0,\lambda_{0})^{m}$ and for any $\lambda_{c}>0$, we have $y_{i}f(x_{i};\bm{\theta}^{*})>0, \forall i\in[n]. $
\end{lemma}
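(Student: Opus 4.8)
The plan is to mimic Step~2 in the proof of Theorem~\ref{thm::single} (equivalently, the second half of the proof of Proposition~\ref{prop::activation}), now treating the last-convolutional-layer outputs $\bm{h}_i:=\bm{h}^{(l-1)}(x_i;\bm{\theta}^{*})$ as the inputs to the final ReQU layer. The extra ingredient we need is that $\bm{h}_1,\dots,\bm{h}_n$ are pairwise distinct: this is exactly Lemma~\ref{lemma::multi-3}, whose hypotheses hold since $\|\bm{v}_k^{*}\|_2\ge1$ forces $\bm{v}_k^{*}\neq\bm{0}$ and Assumption~\ref{assumption::dataset} is in force.

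First I would exploit the inactive neuron. Assume without loss of generality the inactive last-layer neuron is neuron~$1$, so $a_1^{*}=0$, $\bm{w}_1^{*}=\bm{0}$, $b_1^{*}=0$. Perturb only that neuron: $\tilde a_1=s\delta$ with $s\in\{-1,+1\}$, $(\tilde{\bm{w}}_1,\tilde b_1)=\delta(\bm{u},v)$ with $\|\bm{u}\|_2^2+v^2=1$, leaving all other coordinates at $\bm{\theta}^{*}$. Because the convolutional weights are untouched, $\bm{h}^{(l-1)}(x_i;\tilde{\bm{\theta}})=\bm{h}_i$, hence $f(x_i;\tilde{\bm{\theta}})=f(x_i;\bm{\theta}^{*})+s\delta^{3}(\bm{u}^{\top}\bm{h}_i+v)_+^{2}$, and the only regularizer term that changes is that of neuron~$1$, which contributes $\lambda_1\delta^{3}$ (in particular $\lambda_c$ drops out, consistent with the statement). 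A second-order Taylor expansion of the loss at $\bm{\theta}^{*}$ gives
\[
L_n(\tilde{\bm{\theta}};\bm{\lambda})=L_n(\bm{\theta}^{*};\bm{\lambda})-s\delta^{3}\sum_{i=1}^{n}\ell'_i y_i(\bm{u}^{\top}\bm{h}_i+v)_+^{2}+\lambda_1\delta^{3}+O(\delta^{6}),
\]
where $\ell'_i=\ell'(-y_if(x_i;\bm{\theta}^{*}))\ge0$ since $\ell$ is non-decreasing. Local minimality, dividing by $\delta^{3}$, letting $\delta\to0$, and using both signs $s$, forces $\bigl|\sum_{i=1}^{n}\ell'_i y_i(\bm{u}^{\top}\bm{h}_i+v)_+^{2}\bigr|\le\lambda_1<\lambda_0$ for every unit $(\bm{u},v)$; by positive homogeneity of the ReQU, $\bigl|\sum_i\ell'_i y_i(\bm{\omega}^{\top}\bm{h}_i+\beta)_+^{2}\bigr|\le\lambda_0\|(\bm{\omega},\beta)\|_2^{2}$ for all $(\bm{\omega},\beta)$.

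Next I would feed in a separating ReQU network on the $\bm{h}_i$. Since they are distinct (Lemma~\ref{lemma::multi-3}), the interpolation lemma used for Theorem~\ref{thm::single} (Lemma~8 in Appendix~J) yields a size-$(n+1)$ ReQU network $p(\bm{h};\bm{\rho})=\sum_k\alpha_k(\bm{\omega}_k^{\top}\bm{h}+\beta_k)_+^{2}$ with $\min_i y_ip(\bm{h}_i;\bm{\rho})>0$. Set $\tilde\lambda=\max_{\|\bm{\rho}\|_2=1}\min_i y_ip(\bm{h}_i;\bm{\rho})>0$ with argmax $\bm{\rho}^{*}$. Using $\ell'_i\ge0$ and $y_ip(\bm{h}_i;\bm{\rho}^{*})\ge\tilde\lambda$ together with the homogeneous bound above, $\tilde\lambda\sum_i\ell'_i\le\bigl|\sum_i\ell'_i y_ip(\bm{h}_i;\bm{\rho}^{*})\bigr|\le C_n\lambda_0$ with $C_n$ depending only on $n$. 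Choosing $\lambda_0$ small enough that $C_n\lambda_0<\e\tilde\lambda$ gives $\ell'_i<\e$ for every $i$, and Assumption~\ref{assumption::loss} then yields $y_if(x_i;\bm{\theta}^{*})>0$.

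The hard part is that $\tilde\lambda$ a priori depends on the $\bm{h}_i$, hence on the particular $\bm{v}_1^{*},\dots,\bm{v}_{l-1}^{*}$, whereas the lemma demands $\lambda_0=\lambda_0(\mathcal{D},\ell)$. Closing this gap is where $\|\bm{v}_k^{*}\|_2\ge1$ is essential: combining it with Claim~\ref{lemma::multi-4} (the padded-convolution matrix $\bm{V}(\bm{v})$ has full column rank whenever $\bm{v}\neq\bm{0}$) and continuity of $\bm{v}\mapsto\sigma_{\min}(\bm{V}(\bm{v}))$ on the compact unit sphere gives $\sigma_{\min}(\bm{V}(\bm{v}_k^{*}))\ge c_0>0$ with $c_0$ depending only on the architecture; since Leaky ReLU is bi-Lipschitz with lower Lipschitz constant $k$, this makes $\|\bm{h}_i-\bm{h}_j\|_2\ge(kc_0)^{l-1}\min_{i\neq j}\|x_i-x_j\|_2>0$ uniformly, from which a uniform positive margin $\tilde\lambda\ge\tilde\lambda_0(\mathcal{D},\ell)$ should follow. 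I expect making this last uniformity step fully rigorous --- in particular controlling the norm of the separating network when the $\bm{h}_i$ may lie far from the origin --- to be the most delicate point, and it is essentially the reason the lemma carries the structural hypotheses on the $\bm{v}_k^{*}$.
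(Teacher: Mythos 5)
Your first two steps coincide with the paper's own proof: perturbing the inactive last-layer neuron to obtain $\bigl|\sum_{i}\ell'_i y_i(\bm{u}^{\top}\bm{h}_i+v)_+^2\bigr|<\lambda_1<\lambda_0$ for all unit $(\bm{u},v)$, invoking Lemma~\ref{lemma::multi-3} (via $\|\bm{v}_k^*\|_2\ge1\Rightarrow\bm{v}_k^*\neq\bm{0}$) to get distinctness of the features, and then inserting the interpolating ReQU network of Claim~\ref{lemma::single-requ} to force $\ell'_i<\e$ and hence $y_if(x_i;\bm{\theta}^*)>0$. The genuine gap is exactly the one you flag yourself: your $\tilde\lambda$ is built from the particular features $\bm{h}_i=\bm{h}^{(l-1)}(x_i;\bm{\theta}^*)$, so your $\lambda_0$ depends on $\bm{\theta}^*$, which is circular (the condition $\bm{\lambda}\in(0,\lambda_0)^m$ must be fixed before the local minimum is known). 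Your sketched repair --- a uniform lower bound on $\sigma_{\min}$ of the padded-convolution matrices via Claim~\ref{lemma::multi-4}, homogeneity, and the lower Lipschitz constant $k$ of Leaky ReLU, giving uniform pairwise separation of the $\bm{h}_i$ over all admissible $\bm{v}_k^*$ --- is sound as far as it goes, but it does not finish the argument: uniform separation of the $\bm{h}_i$ does not by itself yield a uniform lower bound on the unit-norm max-margin $\max_{\|\bm{\rho}\|_2=1}\min_i y_i p(\bm{h}_i;\bm{\rho})$, because the features are unbounded ($\|\bm{v}_k^*\|_2$ can be arbitrarily large) and the margin of the interpolant from Claim~\ref{lemma::single-requ}, after normalizing $\bm{\rho}$, can degrade as the projected points recede from the origin. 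So the ``should follow'' at the end is precisely the missing step, not a routine verification.

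The paper removes this dependence by construction rather than by estimate: it defines $\bar\lambda$ as the infimum, over \emph{all} $\bm{v}_1,\dots,\bm{v}_{l-1}$ with $\|\bm{v}_k\|_2\ge1$, of the max-margin $\max_{\|\bm{\rho}\|_2=1}\min_i y_i p\bigl(\bm{h}^{(l-1)}(x_i;\bm{\theta});\bm{\rho}\bigr)$, and sets $\lambda_0=\e\bar\lambda$; since $\max_{\|\bm{\rho}\|_2=1}\min_i y_i p(\bm{h}_i;\bm{\rho})\ge\bar\lambda$ then holds at the actual $\bm{\theta}^*$ automatically, no per-$\bm{\theta}^*$ quantity enters $\lambda_0$. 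Be aware, though, that the paper asserts existence of a minimizer of this non-compact problem and positivity of $\bar\lambda$ without proof, i.e., it is thin at exactly the point you identify as delicate. To complete your route you would need the quantitative statement that features with pairwise separation bounded below (but arbitrary location) admit a unit-norm ReQU network with margin bounded below --- for instance by using pairs of ReQU neurons to synthesize affine functions so as to control behavior far from the origin. As written, your proposal leaves both the uniformity estimate and the circularity unresolved, whereas the paper's formulation at least eliminates the circularity in the choice of $\lambda_0$.
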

\begin{proof}
Since $\bm{\theta}^{*}$ is a local minimum of the empirical loss $\lossc$ and the neural network $f(x;\bm{\theta}^{*})$ has an inactive neuron in the last layer. Without loss of generality, we assume that  $a_{1}^{*}=0, \|\bm{w}_{1}^{*}\|_{2}=0$ and $b_{1}^{*}=0$. Since $\bm{\theta}^{*}$ is a local minimum of the empirical loss $\lossc$, then there exists a $\delta>0$, such that for any $\tilde{\bm{\theta}}:\|\tilde{\bm{\theta}}-\bm{\theta}^{*}\|<2\delta$, $L_{n}(\tilde{\bm{\theta}};\bm{\lambda})\ge L_{n}(\bm{\theta}^{*};\bm{\lambda})$. Now we choose the perturbation where we only perturb $a_{1},\bm{w}_{1},b_{1}$. Let $\tilde{a}_{1}=\delta\sgn(\tilde{a}_{1})$, $(\tilde{\bm{w}}_{1},\tilde{b})=(\delta \bm{u},\delta v)$ where $\|\bm{u}\|_{2}^{2}+v^{2}=1$, $(\tilde{a_{j}},\tilde{\bm{w}}_{j},\tilde{b}_{j})=(a_{j}^{*},\bm{w}^{*}_{j},b_{j}^{*})$ for $j\neq 1$ and $\tilde{\bm{v}}_{k}=\bm{v}^{*}_{k}$.
Recall that the vector $\bm{h}^{(l-1)}(x;\bm{\theta})$ denotes the output of the $(l-1)$-th layer. 
Thus, by the second order Taylor expansion, we have that for any $\sgn(\tilde{a}_{j})$ and any $(\bm{u},v):\|\bm{u}\|_{2}^{2}+v^{2}=1$,
\begin{align*}
L_{n}(\tilde{\bm{\theta}};\bm{\lambda})&=\sum_{i=1}^{n}\ell\left(-y_{i}f(x_{i};\bm{\theta}^{*})-y_{i}\delta^{3}\sgn(\tilde{a}_{1})(\bm{u}^{\top}\bm{h}^{(l-1)}(x_{i};\bm{\theta}^{*})+v)_{+}^{2}\right)\\
&\quad+\frac{1}{3}\sum_{j=1}^{m}\lambda_{j}\left[|a_{j}^{*}|^{3}+2(\|\bm{w}_{j}^{*}\|_{2}^{2}+{b_{j}^{*}}^{2})^{3/2}\right]+\frac{\lambda_{1}}{3}\left[|\tilde{a}_{j}|^{3}+2(\|\tilde{\bm{w}}_{j}\|_{2}^{2}+{\tilde{b}_{j}}^{2})^{3/2}\right]\\
&=\sum_{i=1}^{n}\ell\left(-y_{i}f(x_{i};\bm{\theta}^{*})\right)-\sum_{i=1}^{n}\ell'_{i}y_{i}\delta^{3}\sgn(\tilde{a}_{1})(\bm{u}^{\top}\bm{h}^{(l-1)}(x_{i};\bm{\theta}^{*})+v)_{+}^{2}\\
&\quad+R(\delta, \mathcal{D},\bm{u},v)\delta^{6}+\frac{1}{3}\sum_{j=1}^{m}\lambda_{j}\left[|a_{j}^{*}|^{3}+2(\|\bm{w}_{j}^{*}\|_{2}^{2}+{b_{j}^{*}}^{2})^{3/2}\right]+\lambda_{1}\delta^{3}\\
&\ge\sum_{i=1}^{n}\ell\left(-y_{i}f(x_{i};\bm{\theta}^{*})\right)+\frac{1}{3}\sum_{j=1}^{m}\lambda_{j}\left[|a_{j}^{*}|^{3}+2(\|\bm{w}_{j}^{*}\|_{2}^{2}+{b_{j}^{*}}^{2})^{3/2}\right]=\loss
\end{align*}
where $\ell'_{i} = \ell'(-y_{i}f(x_{i};\bm{\theta}^{*}))$ and $R(\delta, \mathcal{D},\bm{u},v)$ is the second order remaining term in the Taylor expansion.
This indicates that for any $(\bm{u},v):\|\bm{u}\|_{2}^{2}+v^{2}=1$, we have 
$$\left|\sum_{i=1}^{n}\ell'(-y_{i}f(x_{i};\bm{\theta}^{*}))(-y_{i})(\bm{u}^{\top}\bm{h}^{(l-1)}(x_{i};\bm{\theta}^{*})+v)_{+}^{2}\right|<\lambda_{1}<\lambda_{0}$$
It follows from Lemma~\ref{lemma::multi-3} that if $\bm{v}_{1}^{*}\neq \bm{0},...,\bm{v}_{l-1}^{*}\neq \bm{0}$, then for any two different samples $x_{i}$ and $x_{j}$ in the dataset $\mathcal{D}$, we should have $$\bm{h}^{(l-1)}(x_{i};\bm{\theta}^{*})\neq \bm{h}^{(l-1)}(x_{j};\bm{\theta}^{*}).$$
Therefore, $$\bm{h}^{(l-1)}(x_{i};\bm{\theta}^*)\neq \bm{h}^{(l-1)}(x_{j};\bm{\theta}^*)$$ for any two different samples in the dataset.

Now we consider the following dataset under the parameters $\bm{\theta}^{*}$,
$$\tilde{\mathcal{D}}=\{(\bm{h}^{(l-1)}(x_{i};\bm{\theta}^{*}), y_{i})\}_{i=1}^{n}.$$
By Lemma~\ref{lemma::multi-3}, since the local minimum $\bm{\theta}^{*}$ satisfies $\|\bm{v}^{*}_{1}\|_{2}\ge1,...,\|\bm{v}^{*}_{l-1}\|_{2}\ge1$,  then in the dataset $\tilde{\mathcal{D}}$, $\bm{h}^{(l-1)}(x_{i})\neq \bm{h}^{(l-1)}(x_{j})$ for any $i\neq j$. 
Furthermore, by Claim~\ref{lemma::single-requ} in Appendix~\ref{appendix::single-requ}, it follows that for any given $\bm{v}_{1},...,\bm{v}$, there exists a single-layered ReQU network $p(x;\bm{\rho})=\sum_{i=1}^{n+1}\alpha_{j}(\bm{\omega}_{j}^{\top}x+\beta_{j})_{+}^{2}$ of size $n+1$ such that $\min_{i}y_{i}p\left(\bm{h}^{(l-1)}({x_{i};\bm{\theta}});\bm{\rho}^{}\right)>0$, where the vector $\bm{\rho}$ contains all parameters in the network $p$. 
Thus, let 
\begin{align*}
\bar{\lambda}&\triangleq \min_{\|\bm{v}_{1}\|_{2}\ge1,...,\|\bm{v}_{l-1}\|_{2}\ge1}\max_{\bm{\rho}:\|\bm{\rho}\|_{2}=1}\min_{i\in[n]}y_{i}p\left(\bm{h}^{(l-1)}({x_{i};\bm{\theta}});\bm{\rho}^{}\right).
\end{align*}
We note here that the output of the $(l-1)$-th layer $\bm{h}^{(l-1)}(x;\bm{\theta})$ is only dependent on parameters $\bm{v}_{1},...,\bm{v}_{l-1}$.
Let $(\bar{\bm{v}}_{1},...,\bar{\bm{v}}_{l-1},\bm{\rho}^{*})$ be a solver of the above minimization problem and let $\lambda_{0}=\varepsilon\bar{\lambda}$, then it easy to see that 
$$\bar{\lambda}=\min_{\|\bm{v}_{1}\|_{2}\ge1,...,\|\bm{v}_{l-1}\|_{2}\ge1}\min_{i\in[n]}y_{i}p\left(\bm{h}^{(l-1)}({x_{i};\bm{\theta}});\bm{\rho}^{*}\right)\le \min_{i\in[n]}y_{i}p\left(\bm{h}^{(l-1)}({x_{i};\bm{\theta}^{*}});\bm{\rho}^{*}\right)$$
and that
\begin{align*}
\bar{\lambda}\sum_{i=1}^{n}\ell'(-y_{i}f(x_{i};{\bm{\theta}}^{*}))&<\left|\sum_{i=1}^{n}\ell'(-y_{i}f(x_{i};\bm{\theta}^{*}))(-y_{i})p\left(\bm{h}^{(l-1)}({x_{i};\bm{\theta}^{*}});\bm{\rho}^{*}\right)\right|\\
&\le\sum_{j=1}^{m}\left|\sum_{i=1}^{n}\ell'(-y_{i}f(x_{i};{\bm{\theta}}^{*}))y_{i}\alpha_{j}^{*}({\bm{\omega}_{j}^{*}}^{\top}\bm{h}^{(l-1)}(x_{i};\bm{\theta}^{*})+\beta_{j}^{*})_{+}^{2}\right|\\
&<\sum_{j=1}^{m}(\|\bm{\omega}^{*}_{j}\|^{2}+{\beta_{j}^{*}}^{2})\lambda_{0}\le\lambda_{0}=\e\bar{\lambda}.
\end{align*}
Therefore, we have for each $i\in[n]$, $\ell'(-y_{i}f(x_{i};\bm{\theta}^{*}))<\e$ and this indicates $$y_{i}f(x_{i};\bm{\theta}^{*})>0$$ for $\forall i\in[n]$. 
\end{proof}


\section{Proof of Theorem~\ref{thm::multi}}\label{appendix::multi}
The sketch proof of Theorem~\ref{thm::multi} is as follows. We first prove that, at every critical point, there are only three cases: (1) all parameters in the last layer are zero and one of the first $(l-1)$ layers has a zero parameter vector, i.e., $\|\bm{a}^{*}\|_{2}=\|\bm{W}^{*}\|_{2}=0$ and  $\exists k\in[l-1]$ s.t. $\|\bm{v}_{k}^{*}\|_{2}=0$; (2) all parameters in the last layer are zero and all parameter vectors in the first $(l-1)$ layers are unit vectors, i.e., $\|\bm{a}^{*}\|_{2}=\|\bm{W}^{*}\|_{2}=0$ and $\|\bm{v}_{1}^{*}\|_{2}=...=\|\bm{v}_{l-1}^{*}\|_{2}=1$; (3) some parameters in the layer are non-zero and the 2-norm of all parameter vectors in the first $(l-1)$ layers is larger than one, i.e., $\|\bm{a}^{*}\|_{2}>0, \|\bm{W}^{*}\|_{2}>0$ and $\|\bm{v}_{1}^{*}\|_{2}=...=\|\bm{v}_{l-1}^{*}\|_{2}>1$. We next prove that the first and second case can never happen at a local minimum and also prove that any local minimum satisfying the third case has zero training error. 

For any critical point in the first case, we can easily check that it is not a local minimum by perturbing the parameters such that the regularizer part is decreasing while the classification loss part does not change. For the second and the third case, we first prove a lemma (Lemma~\ref{lemma::multi-1}) to show that if the width of the last layer is sufficiently large, then there is an inactive neuron in the last layer. This is trivially true for the second case but definitely non-trivial for the third case. We note that Lemma~\ref{lemma::multi-1} is analogous to Lemma~\ref{lemma::single-1} and thus the proof technique for Lemma~\ref{lemma::multi-1} is similar to the technique for proving Lemma~\ref{lemma::single-1}. The main technique for proving Lemma~\ref{lemma::multi-1} is showing that one of a series of matrices is non-singular when the width of the last layer is sufficiently large, which is given by Lemma~\ref{lemma::multi-2}. 

Next, we show by Lemma~\ref{lemma::multi-3} that if all parameter vectors in the first $(l-1)$ layers are non-zero, then for different inputs $x$ and $x'$, the outputs $\bm{h}^{(l-1)}(x)$ of the $(l-1)$-th layer are also different, i.e., $\bm{h}^{(l-1)}(x)=\bm{h}^{(l-1)}(x')$. Based on Lemma~\ref{lemma::multi-1} and Lemma~\ref{lemma::multi-3}, we present Lemma~\ref{lemma::multi-5} to show that the second case cannot happen at a local minimum and every local minimum satisfying the third case achieves zero training error and thus we complete the proof.  Now we present the formal proof of Theorem~\ref{thm::multi}.
\begin{proof}
(1) It is straightforward to see that the empirical loss  is coercive.

(2) First, we need to note that the loss $\lossc$ is not always differentiable with respect to the vectors $\bm{v}_{1},...,\bm{v}_{l-1}$, since the Leaky ReLU is not always differentiable. Assume $\bm{\theta}^{*}$ is a local minimum of the loss $L_n(\bm{\theta};\bm{\lambda})$. Given a local minimum $\bm{\theta}^{*}$, now we consider the  neural network with the parameters $\bm{\theta}(\bm{r})=(r_{l+1}\bm{a}^{*},r_{l}\bm{W}^{*},r_{l-1}\bm{v}^{*}_{l-1},...,r_{1}\bm{v}^{*}_{1})$, then the output of the neural network can be written as 
\begin{align*}
f(x;\bm{\theta}(\bm{r}))&=r_{l+1}{\bm{a}^{*}}^{\top}\text{ReQU}\left(r_{l}\bm{W}^{\top}\bm{\sigma}(r_{l-1}\bm{v}^{*}_{l-1}\ast\bm{\sigma}(...\ast\bm{\sigma}(r_{1}\bm{v}^{*}_{1}\ast x)))\right)\\
&=(r_{1}...r_{l})^{2}r_{l+1}f(x;\bm{\theta}^{*}),
\end{align*}
where $\bm{r}\in\mathbb{R}_+^{l+1}$.
Here we use the positive homogeneity of Leaky ReLU and ReQU: $\requ(rz)=r^2\requ(z)$ and $\sigma(rz)=r\sigma(z)$ for any $r\in\mathbb{R}_+$ and any $z\in\mathbb{R}$.
Given a local minimizer $\bm{\theta}^{*}$ of the empirical loss $\lossc$, we define a loss function $L(\bm{r};\bm{\theta}^{*},\bm{\lambda})$ as a function of the vector $\bm{r}=(r_{1},...,r_{l+1})$, 
\begin{align*}
&L(\bm{r};\bm{\theta}^{*},\bm{\lambda})\triangleq L_n(\bm{\theta}(r);\bm{\lambda})\\
&=\sum_{i=1}^{n}\ell(-y_{i}(r_{1}...r_{l})^{2}r_{l+1}f(x;\bm{\theta}^{*}))+\frac{1}{3}\sum_{j=1}^{m}\lambda_{j}\left[|r_{l+1}|^{3}|a_{j}^{*}|^{3}+2|r_{l}|^{3}(\|\bm{w}_{j}^{*}\|_{2}^{2}+{b_{j}^{*}}^{2})^{3/2}\right]\\
&\quad+\frac{\lambda_{c}}{2}\sum_{k=1}^{l-1}(r_{k}^{2}\|\bm{v}^{*}_{k}\|_{2}^{2}-1)^{2}
\end{align*}
Since $\bm{r}=\bm{1}_{l+1}$ is a local minimum of $L(\bm{r};\bm{\theta}^{*};\bm{\lambda})$ and the empirical loss is differentiable with respective to the vector $\bm{r}$ on $\mathbb{R}_{+}^{l+1}$, then we have 
$$\frac{\partial L(\bm{r};\bm{\theta}^{*};\bm{\lambda})}{\partial r_{1}}\Bigg|_{\bm{r}=\bm{1}_{l+1}}=...=\frac{\partial L(\bm{r};\bm{\theta}^{*};\bm{\lambda})}{\partial r_{L+1}}\Bigg|_{\bm{r}=\bm{1}_{l+1}}=0.$$
Since for $k=1,...,l-1$,
\begin{align*}
\frac{\partial L(\bm{r};\bm{\theta}^{*};\bm{\lambda})}{\partial r_{k}}\Bigg|_{\bm{r}=\bm{1}_{l+1}}&=2\sum_{i=1}^{n}\ell'(-y_{i}f(x_{i};\bm{\theta}^{*}))(-y_{i}f(x_{i};\bm{\theta}^{*}))+2\lambda_{c}(\|\bm{v}^{*}_{k}\|_{2}-1)\|\bm{v}^{*}_{k}\|^{2}_{2}\\
&=0,
\end{align*}
for $k=l$
\begin{align*}
\frac{\partial L(\bm{r};\bm{\theta}^{*};\bm{\lambda})}{\partial r_{l}}\Bigg|_{\bm{r}=\bm{1}_{l+1}}&=2\sum_{i=1}^{n}\ell'(-y_{i}f(x_{i};\bm{\theta}^{*}))(-y_{i}f(x_{i};\bm{\theta}^{*}))+2\sum_{j=1}^{m}\lambda_{j}(\|\bm{w}_{j}^{*}\|_{2}^{2}+{b_{j}^{*}}^{2})^{3/2}\\
&=0,
\end{align*}
and for $k=l+1$
\begin{align*}
\frac{\partial L(\bm{r};\bm{\theta}^{*};\bm{\lambda})}{\partial r_{l+1}}\Bigg|_{\bm{r}=\bm{1}_{l+1}}&=\sum_{i=1}^{n}\ell'(-y_{i}f(x_{i};\bm{\theta}^{*}))(-y_{i}f(x_{i};\bm{\theta}^{*}))+\sum_{j=1}^{m}\lambda_{j}|a_{j}^{*}|^{3/2}=0,
\end{align*}
then we have
\begin{align*}
\sum_{j=1}^{m}\lambda_{j}|a_{j}^{*}|^{3/2}&=\sum_{j=1}^{m}\lambda_{j}(\|\bm{w}_{j}^{*}\|_{2}^{2}+{b_{j}^{*}}^{2})^{3/2}\\
&=\lambda_{c}(\|\bm{v}^{*}_{1}\|_{2}-1)\|\bm{v}^{*}_{1}\|_{2}=...=\lambda_{c}(\|\bm{v}^{*}_{l-1}\|^{2}_{2}-1)\|\bm{v}^{*}_{l-1}\|^{2}_{2}
\end{align*}
Since $\lambda_{1},...\lambda_{m}$ and $\lambda_{c}$ are all positive numbers, then if $\bm{\theta}^{*}$ is a local minimum of the loss $\lossc$,  we should have the following three cases:

\textbf{Case 1}: $\|\bm{a}^{*}\|_{2}=\|\bm{W}^{*}\|_{2}=0$ and there exists $k\in[l-1]$ such that $\|\bm{v}_{k}^{*}\|_{2}=0$

\textbf{Case 2:} $\|\bm{a}^{*}\|_{2}=\|\bm{W}^{*}\|_{2}=0$ and $\|\bm{v}_{1}^{*}\|_{2}=...=\|\bm{v}_{l-1}^{*}\|_{2}=1$

\textbf{Case 3:} $\|\bm{a}^{*}\|_{2}>0, \|\bm{W}^{*}\|_{2}>0$ and $\|\bm{v}_{1}^{*}\|_{2}=...=\|\bm{v}_{l-1}^{*}\|_{2}>1$

Now we start from the first case. 

\textbf{Case 1:} Now we prove the following result. If there exists $k\in[l-1]$ such that $\|\bm{v}_{k}^{*}\|_{2}=0$, then $\bm{\theta}^{*}$ is a saddle point of the loss $\loss$. We assume that $k\in[l-1]$ and $\|\bm{v}_{k}^{*}\|_{2}=0$. Now we choose the following perturbed parameters $$\tilde{\bm{\theta}}=(\tilde{\bm{a}},\tilde{\bm{W}},\tilde{\bm{v}}_{1},...,\tilde{\bm{v}}_{k-1},\tilde{\bm{v}}_{k},\tilde{\bm{v}}_{k+1},...,\tilde{\bm{v}}_{l-1})=(\bm{a}^{*},\bm{W}^{*},\bm{v}^{*}_{1},...,\bm{v}_{k-1}^{*},\delta \bm{u},\bm{v}_{k}^{*},...,\bm{v}_{l-1}^{*}),$$
where $\bm{u}$ is an arbitrary unit vector $\|\bm{u}\|_{2}=1$. This means that  we only perturb the vector $\bm{v}_{k}$. Since $\|\tilde{\bm{a}}\|_{2}=0$, then it is easy to see that $f(x_{i};\tilde{\bm{\theta}})=0$ for all $i\in[n]$. By the definition of the local minimum, there exists a sufficiently small $\delta_{0}>0$, such that  for any $0<\delta<\delta_{0}$ and $\bm{u}:\|\bm{u}\|_{2}=1$, we have 
\begin{align*}
L_n(\tilde{\bm{\theta}};\bm{\lambda})&=\sum_{i=1}^{n}\ell(0)+\frac{\lambda_{c}}{2}\sum_{j\neq k}(\|\bm{v}_{j}^{*}\|_{2}^{2}-1)^{2}+\frac{\lambda_{c}}{2}(\delta^{2}-1)^{2}\\
&\ge \sum_{i=1}^{n}\ell(0)+\frac{\lambda_{c}}{2}\sum_{j\neq k}(\|\bm{v}_{j}^{*}\|_{2}^{2}-1)^{2}+\frac{\lambda_{c}}{2}=L_n(\bm{\theta}^{*};\bm{\lambda})
\end{align*}
This above inequality implies that there exists a sufficiently small $\delta_{0}>0$ such that for all $0<\delta<\delta_{0}$ we have $1-\delta^{2}>1$. This is clearly incorrect. Therefore, this indicates that if $\|\bm{a}^{*}\|_{2}=\|\bm{W}^{*}\|_{2}=0$ and there exists $k\in[l-1]$ such that $\|\bm{v}_{k}^{*}\|_{2}=0$, then $\bm{\theta}^{*}$ is not a local minimum. 

\textbf{Case 2:} From Lemma~\ref{lemma::multi-5}, it follows that $y_{i}f(x_{i};\bm{\theta}^{*})>0$ holds for all $i\in[n]$. However, since $\|\bm{a}\|_{2}^{*}=0$, then this leads to the contradiction.  This means that $\bm{\theta}^{*}$ satisfying $\|\bm{a}^{*}\|_{2}=\|\bm{W}^{*}\|_{2}=0$ and $\|\bm{v}_{1}^{*}\|_{2}=...=\|\bm{v}_{l-1}^{*}\|_{2}=1$ is not a local minimum. 

\textbf{Case 3:} From Lemma~\ref{lemma::multi-5}, it follows that $y_{i}f(x_{i};\bm{\theta}^{*})>0$ holds for all $i\in[n]$.
\end{proof}

\section{Claim~\ref{lemma::single-requ}}\label{appendix::single-requ}

We also need a simple claim that states
that the data can be interpolated by the neural network. 
\begin{claim}\label{lemma::single-requ}
Given a dataset $\mathcal{D}=\{(x_{i},y_{i})\}_{i=1}^{n}$ satisfying assumption \ref{assumption::dataset},  there exists a single-layered ReQU network $p$ of size $n+1$ and parameterized by $\bm{\rho}$ such that the network $p$ can correctly all samples in the dataset with a positive margin, i.e., $y_{i}p(x_{i};\bm{\rho})>0$ for all $i\in[n]$.
\end{claim}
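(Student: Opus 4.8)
The plan is to reduce the interpolation problem to one dimension via a generic linear projection, and then build the interpolating \requ{} network one neuron at a time so that the dependence of the sample values on the coefficients is triangular.

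\textbf{Step 1 (generic projection).} Since the $x_i$ are pairwise distinct by Assumption~\ref{assumption::dataset}, each set $\{\bm{\omega}\in\mathbb{R}^d:\bm{\omega}^\top(x_i-x_j)=0\}$ is a proper hyperplane, so a generic $\bm{\omega}$ avoids their finite union. Fix such an $\bm{\omega}$ and set $t_i=\bm{\omega}^\top x_i$, so the $t_i$ are pairwise distinct; relabel the samples so that $t_1<t_2<\cdots<t_n$. It then suffices to produce scalars $a_0,\dots,a_{n-1}$ and knots $r_0,\dots,r_{n-1}$ so that the univariate function $q(t)=\sum_{j=0}^{n-1}a_j(t-r_j)_+^2$ satisfies $y_iq(t_i)>0$ for all $i$, because then $p(x):=\sum_{j=0}^{n-1}a_j(\bm{\omega}^\top x-r_j)_+^2$, padded with one zero neuron so as to have exactly $n+1$ neurons, is the desired network (with $\bm{\omega}_j=\bm{\omega}$, $\beta_j=-r_j$, $\alpha_j=a_j$), since $p(x_i)=q(t_i)$.

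\textbf{Steps 2--3 (interleave the knots and choose coefficients greedily).} Choose the knots interleaved with the projected data, i.e.\ $r_0<t_1$ and $t_i<r_i<t_{i+1}$ for $i=1,\dots,n-1$. Then at a sample point only the neurons with smaller knots contribute: $(t_i-r_j)_+^2=0$ whenever $j\ge i$ and $(t_i-r_j)^2>0$ whenever $j\le i-1$, so
\[
  q(t_i)=\sum_{j=0}^{i-1}a_j(t_i-r_j)^2,\qquad i=1,\dots,n.
\]
Now pick $a_0,\dots,a_{n-1}$ in order. Having fixed $a_0,\dots,a_{i-2}$, the value $q(t_i)=\big(\sum_{j=0}^{i-2}a_j(t_i-r_j)^2\big)+a_{i-1}(t_i-r_{i-1})^2$ is an affine function of $a_{i-1}$ with nonzero slope $(t_i-r_{i-1})^2>0$, so one may choose $a_{i-1}$ of sign $y_i$ and large enough magnitude to force $\mathrm{sign}(q(t_i))=y_i$; this choice leaves $q(t_1),\dots,q(t_{i-1})$ unchanged since those depend only on $a_0,\dots,a_{i-2}$. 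Iterating for $i=1,\dots,n$ gives $y_iq(t_i)>0$ for all $i$, and lifting back through Step~1 completes the proof.

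\textbf{Main obstacle.} There is no real analytic difficulty; the only thing that must be arranged correctly is the geometry of the construction — namely placing the knots strictly between consecutive projected samples so that the matrix recording the influence of the coefficients $a_j$ on the values $q(t_i)$ is lower triangular with positive diagonal. Once that structure is secured, the coefficients are read off greedily, and Assumption~\ref{assumption::dataset} enters only through the existence of the generic separating direction in Step~1.
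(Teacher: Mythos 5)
Your proposal is correct and follows essentially the same route as the paper's proof: a generic linear projection to one dimension (avoiding the finite union of hyperplanes $\bm{\omega}^\top(x_i-x_j)=0$), followed by a knot placement that makes the coefficient-to-value map lower triangular with positive diagonal, and a greedy (inductive) choice of coefficients. The only cosmetic differences are that the paper places the $k$-th knot exactly at the previous data point $z_{k-1}$ and writes the greedy coefficient explicitly, whereas you place knots strictly between consecutive samples and pad with a zero neuron to reach size $n+1$; these do not change the argument.
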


\begin{proof}
Given a dataset $\mathcal{D}=\{(x_{i},y_{i})\in\mathbb{R}^{d}\times\{-1,1\}\}_{i=1}^{n}$ satisfying Assumption~\ref{assumption::dataset}, the set $$\mathcal{H}=\bigcup_{i\neq j}\{\bm{\omega}\in\mathbb{R}^{d}|\bm{\omega}^{\top}(x_{i}-x_{j})=0\}$$
is a zero measure set since $x_{i}-x_{j}\neq \bm{0}$ for any $i\neq j$. Therefore, there always exists a $\bm{\omega}_{0}\in\mathbb{R}^{d}$ such that 
$$\bm{\omega}_{0}^{\top}(x_{i}-x_{j})\neq 0,\quad \text{for }i\neq j.$$
Now we define $z_{i}=\bm{\omega}_{0}^{\top}x_{i}$, then it is easy to see that $z_{i}\neq z_{j}$ for any $i\neq j$. Without loss of generality, we assume that $z_{1}<z_{2}<...<z_{n}$. First, we choose a number $z_{0}$ smaller than $z_{1}$, i.e., $z_{0}<z_{1}$. 
 Now, we define a series of functions $q_{1},...,q_{n}$ as follows:
\begin{align*}
q_{1}(z)&=y_{1}(z-z_{0})_{+}^{2}\\
q_{k+1}(z)&=q_{k}(z)+\frac{2y_{k+1}(|q_{k}(z_{k+1})|+1)}{(z_{k+1}-z_{k})^{2}}(z-z_{k})_{+}^{2},\quad k=1,...,n-1
\end{align*}
It is easy to check that the function $q_{n}$ has the following property
$$y_{k}q_{n}(z_{k})=y_{k}q_{k}(z_{k})>0, \quad k=1,...,n.$$
This can be easily proved by induction. For the base case $k=1$, $q_{1}(z_{1})=y_{i}(z_{1}-z_{0})^{2}>0$. For the case $k$, assume that $q_{k}(z_{i})=q_{i}(z_{i})$ and $y_{i}q_{i}(z_{i})>0$ for $i=1,...,k$. Then for case $k+1$, we have 
$$q_{k+1}(z)=q_{k}(z)+\frac{2y_{k+1}(|q_{k}(z_{k+1})|+1)}{(z_{k+1}-z_{k})^{2}}(z-z_{k})_{+}^{2}.$$
Since $z_{i}\le z_{k}$ for any $i=1,...,k$, then for $i=1,...,k$
$$y_{i}q_{k+1}(z_{i})=y_{i}q_{k}(z_{i})=y_{i}q_{i}(z_{i})>0$$
and $$y_{k+1}q_{k+1}(z_{k+1})=y_{k+1}q_{k}(z_{k+1})+2|q_{k}(z_{k+1})|+2\ge |q_{i}(z_{i})|+2>0.$$
Therefore, for case $k+1$, we have 
$$y_{i}q_{k+1}(z_{i})=y_{i}q_{i}(z_{i})>0,\quad i=1,...,k+1.$$
By induction, we can prove $$y_{k}q_{n}(z_{k})=y_{k}q_{k}(z_{k})>0, \quad k=1,...,n.$$
It is easy to see that $q_{n}$ is a single layered ReQU network of size $n$, thus we finish the proof. 
\end{proof}


%


\bibliography{nips_2018}
\bibliographystyle{unsrt}

\end{document}